\newcommand{\tfont}[1]{{\fontfamily{lmtt}\selectfont \textbf{#1}}}
\newcommand{\diag}{\textbf{diag\,}}
\newcommand{\re}{\textbf{Re\,}}
\newcommand{\head}{u}
\newcommand{\tail}{v}
\newcommand{\heademb}{\textbf{u}}
\newcommand{\headmat}{\textbf{U}}
\newcommand{\headconjemb}{\overline{\textbf{u}}_i}
\newcommand{\tailemb}{\textbf{v}}
\newcommand{\tailmat}{\textbf{V}}
\newcommand{\timemat}{\textbf{T}}
\newtheorem{definition}{Definition}
\newtheorem{theorem}{Theorem}
\newtheorem{lemma}{Lemma}
\newif\ifupdate\updatetrue
\newcommand{\modifyy}[1]{\ifupdate{\color{black}#1}\else{#1}\fi}
\newcommand{\zqzhang}[1]{{\color{black}#1}}
\newcommand{\zhshi}[1]{{\color{black}#1}}
\begin{document}
%
\title{Duality-Induced Regularizer for Semantic Matching Knowledge Graph Embeddings}
%
%
%
%

\author{Jie~Wang,
        \textit{Senior Member, IEEE},
        Zhanqiu~Zhang,
        Zhihao~Shi,
        Jianyu~Cai,\\
        Shuiwang~Ji, 
        \textit{Senior Member, IEEE},
        and~Feng~Wu,
        \textit{Fellow, IEEE}
\IEEEcompsocitemizethanks{
\IEEEcompsocthanksitem J. Wang, Z. Zhang, Z. Shi, J. Cai, F. Wu are with: a) CAS Key Laboratory of Technology in GIPAS, University of Science and Technology of China, Hefei 230027, China; b) Institute of Artificial Intelligence, Hefei Comprehensive National Science Center, Hefei 230091, China. E-mail: jiewangx@ustc.edu.cn, zqzhang@mail.ustc.edu.cn, zhihaoshi@mail.ustc.edu.cn, jycai@mail.ustc.edu.cn, fengwu@ustc.edu.cn.
\IEEEcompsocthanksitem S. Ji is with the Department of Computer Science and Engineering, Texas A\&M University, College Station, TX77843 USA. E-mail: sji@tamu.edu.
}

\thanks{Manuscript received August, 2021.}}

%
%

\markboth{IEEE TRANSACTIONS ON PATTERN ANALYSIS AND MACHINE INTELLIGENCE,Vol. XX, No. X, August~2021}%
{Shell \MakeLowercase{\textit{et al.}}: Bare Demo of IEEEtran.cls for Computer Society Journals}
%



\IEEEtitleabstractindextext{%
\begin{abstract}
Semantic matching models---which assume that entities with similar semantics have similar embeddings---have shown great power in knowledge graph embeddings (KGE). Many existing semantic matching models use inner products in embedding spaces to measure the plausibility of triples and quadruples in static and temporal knowledge graphs. 
However, vectors that have the same inner products with another vector can still be orthogonal to each other, which implies that entities with similar semantics may have dissimilar embeddings. This property of inner products significantly limits the performance of semantic matching models.
To address this challenge, we propose a novel regularizer---namely, \textbf{DU}ality-induced \textbf{R}egul\textbf{A}rizer (DURA)---which effectively encourages the entities with similar semantics to have similar embeddings. The major novelty of DURA is based on the observation that, for an existing semantic matching KGE model (\textit{primal}), there is often another distance based KGE model (\textit{dual}) closely associated with it, which can be used as effective constraints for entity embeddings.
Experiments demonstrate that DURA consistently and significantly improves the performance of state-of-the-art semantic matching models on both static and temporal knowledge graph benchmarks.
\end{abstract}

\begin{IEEEkeywords}
Knowledge Graph Embeddings, Knowledge Graph, Link Prediction, Temporal Knowledge Graphs, Regularization.
\end{IEEEkeywords}
}

\maketitle

\IEEEdisplaynontitleabstractindextext

%
\IEEEpeerreviewmaketitle

\IEEEraisesectionheading{\section{Introduction}\label{sec:introduction}}

%
%
%
%

\IEEEPARstart{K}{nowledge}  graphs store human knowledge in a structured way, with nodes being entities and edges being relations. In the past few years, knowledge graphs have made great achievements in many areas, such as natural language processing \cite{ernie}, question answering \cite{KGQA}, recommendation systems \cite{KGRS}, and computer vision \cite{kg_cv}.

Some popular knowledge graphs, such as Freebase \cite{freebase}, Wikidata \cite{wikidata}, and Yago3 \cite{yago3}, consists of triples (subject, predicate, object). However, in many scenarios, triples cannot well represent knowledge as knowledge may evolve with time. For example, the fact (Donald Trump, president, USA) is only valid at the time interval 2017-2021. Therefore, temporal knowledge graphs that consists of quadruples (subject, predicate, object, timestamp) also attract increasing attention recently.
Although commonly used knowledge graphs usually contain billions of triples or quadruples, they still suffer from the incompleteness problem that a lot of factual triples or quadruples are missing. Due to the large scale of knowledge graphs, it is impractical to find all valid triples or quadruples manually. Therefore, knowledge graph completion (KGC)---which aims to predict missing links between entities based on known links automatically---has attracted much attention.

Knowledge graph embedding (KGE) is a powerful technique for KGC, which embeds entities, relations, and possible timestamps in KGs into low-dimensional continuous embedding spaces. KGE models define a score function for each triple/quadruple in embedding spaces. Valid triples/quadruples are expected to have higher scores than invalid ones. The success of many existing KGE models relies on an assumption that entities with similar semantics should have similar representations. For example, the triples $(\tfont{lions}, \tfont{is}, \tfont{mammals})$ and $(\tfont{tigers}, \tfont{is}, \tfont{mammals})$ will have similar scores if the entities $\tfont{lions}$ and $\tfont{tiger}$ have similar embeddings. In other words, given a known valid triple $(\tfont{lions}, \tfont{is}, \tfont{mammals})$, we can infer that the triple $(\tfont{tigers}, \tfont{is}, \tfont{mammals})$ is also valid with high confidence.

Semantic matching (SM) models are an important suite of knowledge graph embedding approaches. They usually model static and temporal knowledge graphs as partially observed third-order and fourth-order binary tensors, respectively. Then, they formulate KGC as a tensor completion problem and try to solve it by tensor factorization. 
From the perspective of scoring function, SM models use inner products between embeddings to define the plausibility of given triples and quadruples. 
Theoretically, these models are highly expressive and can well handle complex relation patterns, such as 1-N, N-1, and N-N relations \cite{transh,transr}. However, due to the property of inner products, entities with similar semantics may have dissimilar embeddings. For example, even if the embeddings of $\tfont{lions}$ and $\tfont{tiger}$ are orthogonal, the triples $(\tfont{lions}, \tfont{is}, \tfont{mammals})$ and $(\tfont{tigers}, \tfont{is}, \tfont{mammals})$ are likely to have the same scores when the two entity embeddings have proper norms (please refer to Section \ref{sec:motivation} for more detailed explanations). Similarly, we can observe the phenomenon in temporal semantic matching models (please refer to Section \ref{sec:motivation_temporal}). The performance of SM models suffer from this phenomenon and thus cannot achieve state-of-the-art.

To tackle the challenge, we propose a novel regularizer for semantic matching KGE models---namely, \textbf{DU}ality-induced \textbf{R}egul\textbf{A}rizer (DURA)---which effectively encourages entities with similar semantics to have similar embeddings. Specifically, by noticing that entities with similar semantics often connect to the same entity through the same relation, we propose to use a regularizer to constrain these entities' embeddings. Further, we propose DURA based on the observation called \textit{duality}---for an existing semantic matching KGE model (\textit{primal}), there is often another distance based KGE model closely associated with it (\textit{dual}), which uses Minkowski distances to define score functions. The duality can be derived by expanding the squared score functions of the associated distance based models, where the cross-term in the expansion is exactly a SM model and the squared terms in it give us a regularizer. In other words, adding DURA to a SM model is equivalently to a SM model with its associative distance based model being regularization. Since the minimization of the Minkowski distances between vectors force these vectors to be the same, DURA indeed encourages entities with similar semantics to have similar embeddings. Moreover, we show that when relation embeddings are diagonal matrices, the minimization of DURA has a form of the tensor nuclear 2-norm \cite{nunorm}. Experiments demonstrate that DURA is widely applicable to various static and temporal semantic matching KGE models and yields consistent and significant improvements on benchmark datasets.

An earlier version of this paper has been published at NeurIPS 2020 \cite{dura}. This journal manuscript significantly extends the initial version in several aspects. First, we propose a temporal version of DURA, named TDURA, which is designed for temporal knowledge graph embeddings. Second, we introduce a general semantic matching temporal KGE model, named TRESCAL, which is an extension of RESCAL to temporal KGs. Third, we rigorously analyze the diagonal relation embedding matrices and show that the minimization of TDURA also has a form of the tensor nuclear 2-norm \cite{nunorm}. Finally, we conduct experiments on temporal knowledge graphs and conduct more ablation studies to demonstrate the effectiveness of TDURA.

\section{Preliminaries}
We review the backgrounds of the knowledge graph, knowledge graph completion, and knowledge graph embeddings in Sections \ref{sec:bg_kg}, \ref{sec:bg_kgc}, and \ref{sec:bg_kge}, respectively. Then, we introduce the notations in Section \ref{sec:notation}.

\subsection{Knowledge Graphs}\label{sec:bg_kg}
\noindent \textbf{Static Knowledge Graphs}\hspace{1.5mm} Given a set $\mathcal{E}$ of entities and a set $\mathcal{R}$ of relations, a knowledge graph $\mathcal{K}=\{(\head_i,r_j,\tail_k)\}\subset \mathcal{E}\times\mathcal{R}\times\mathcal{E}$ is a set of triplets, where $\head_i$ and $r_j$ are the $i$-th entity and $j$-th relation, respectively. 

\noindent \textbf{Temporal Knowledge Graphs}\hspace{1.5mm} Given a set $\mathcal{E}$ of entities, a set $\mathcal{R}$ of relations, and a set $\mathcal{T}$ of timestamp, a temporal knowledge graph $\mathcal{K}_T=\{(\head_i,r_j,\tail_k,t_l)\}\subset \mathcal{E}\times\mathcal{R}\times\mathcal{E}\times\mathcal{T}$ is a set of quadruples, where $\head_i$, $r_j$ and $t_l$ are the $i$-th entity, $j$-th relation and $l$-th timestamp, respectively. 

\subsection{Knowledge Graph Completion}\label{sec:bg_kgc}
\noindent\textbf{Static Knowledge Graph Completion (KGC)} \hspace{1.5mm} The goal of KGC is to predict valid but unobserved triples based on the known triples in $\mathcal{K}$, which is often solved by knowledge graph embedding (KGE) models. KGE models associate each entity $\head_i\in\mathcal{E}$ and relation $r_j\in\mathcal{R}$ with an embedding (may be real or complex vectors, matrices, and tensors) $\heademb_i$ and $\textbf{r}_j$. Generally, they define a score function $s: \mathcal{E}\times\mathcal{R}\times\mathcal{E}\rightarrow\mathbb{R}$ to associate a score $s(\head_i,r_j,\tail_k)$ with each potential triplet $(\head_i,r_j,\tail_k)\in\mathcal{E}\times\mathcal{R}\times\mathcal{E}$. The scores measure the plausibility of triples. For a query $(\head_i,r_j,?)$, KGE models first fill the blank with each entity in the knowledge graphs and then score the resulted triples. Valid triples are expected to have higher scores than invalid triples. 

\noindent\textbf{Temporal Knowledge Graph Completion (TKGC)}\hspace{1.5mm} The goal of TKGC is to predict valid but unobserved quadruples based on the known quadruples in $\mathcal{K}_T$. TKGE models also contain two important categories: distance based models and semantic matching models, both of which are temporal knowledge graph embedding (TKGE) methods. TKGE models also associate each entity $\head_i\in\mathcal{E}$, relation $r_j\in\mathcal{R}$, and timestamp $t_l\in\mathcal{T}$ with an embedding (may be real or complex vectors, matrices, and tensors) $\heademb_i$, $\textbf{r}_j$ and $\textbf{t}_l$. Generally, they define a score function $s: \mathcal{E}\times\mathcal{R}\times\mathcal{E}\times\mathcal{T}\rightarrow\mathbb{R}$ to associate a score $s(\head_i,r_j,\tail_k,t_l)$ with each potential quadruples $(\head_i,r_j,\tail_k,t_l)\in\mathcal{E}\times\mathcal{R}\times\mathcal{E}\times\mathcal{T}$. The scores measure the plausibility of quadruples. For a query $(\head_i,r_j,?, t_l)$, TKGE models first fill the blank with each entity in the temporal knowledge graphs and then score the resulted quadruples. Valid quadruples are expected to have higher scores than invalid quadruples.

\subsection{Knowledge Graphs Embeddings}\label{sec:bg_kge}
Knowledge graph embeddings aim to embed entities, relations, and possible timestamps in static and temporal knowledge graphs into low-dimensional continuous vector spaces, which have two important categories---distance based models and semantic matching models.

\noindent\textbf{Distance Based (DB) Models for Static KGs}\hspace{1.5mm} DB models define the score function $s$ with Minkowski distances. That is, the score functions have the formulation of $s(\head_i,r_j,\tail_k)=-\|\Gamma(\head_i,r_j,\tail_k)\|_p$, where $\Gamma$ is a model-specific function.

\noindent\textbf{Semantic Matching (SM) Models for Static KGs}\hspace{1.5mm} SM models regard a knowledge graph as a third-order binary tensor $\mathcal{X}\in\{0,1\}^{|\mathcal{E}|\times|\mathcal{R}|\times|\mathcal{E}|}$. The $(i,j,k)$ entry $\mathcal{X}_{ijk}=1$ if $(\head_i, r_j, \tail_k)$ is valid otherwise $\mathcal{X}_{ijk}=0$. Suppose that $\mathcal{X}_j$ denotes the $j$-th frontal slice of $\mathcal{X}$, i.e., the adjacency matrix of the $j$-th relation. Generally, a SM KGE model factorizes $\mathcal{X}_j$ as $\mathcal{X}_j\approx\re(\overline{\headmat} \textbf{R}_j\tailmat^\top)$, where the $i$-th ($k$-th) row of $\headmat$ ($\tailmat$) is $\heademb_i$ ($\tailemb_k$), $\textbf{R}_j$ is a matrix representing relation $r_j$, $\re(\cdot)$ and $\overline{\cdot}$ are the real part and the conjugate of a complex matrix, respectively. That is, the score functions are defined as $s(\head_i,r_j,\tail_k)=\re(\headconjemb\textbf{R}_j\tailemb_k^\top)$. Note that the real part and the conjugate of a real matrix are itself.
The aim of SM models is to seek matrices $\headmat, \textbf{R}_1,\dots,\textbf{R}_{|\mathcal{R}|},\tailmat$, such that $\re(\overline{\headmat} \textbf{R}_j\tailmat^\top)$ can approximate $\mathcal{X}_j$.
Let $\hat{\mathcal{X}}_j=\re(\overline{\headmat} \textbf{R}_j\tailmat^\top)$ and $\hat{\mathcal{X}}$ be a tensor of which the $j$-th frontal slice is $\hat{\mathcal{X}}_j$. Then the regularized formulation of a semantic matching model can be written as 
\begin{align}\label{prob:t_optim}
    \min_{\hat{\mathcal{X}}_1,\dots,\hat{\mathcal{X}}_{|\mathcal{R}|}}\sum_{j=1}^{|\mathcal{R}|} L(\mathcal{X}_j,\hat{\mathcal{X}}_j)+\lambda g(\hat{\mathcal{X}}),
\end{align}
where $L(\mathcal{X}_j,\hat{\mathcal{X}}_j)$ measures the discrepancy between $\mathcal{X}_j$, $\hat{\mathcal{X}}_j$, and $g$ is the regularization function, and $\lambda>0$ is a fixed parameter.

\noindent\textbf{Distance Based (DB) Models for Temporal KGs}\hspace{1.5mm} Temporal DB models define the score function $s$ with the Minkowski distance. That is, the score functions have the formulation of $s(\head_i,r_j,\tail_k,t_l)=-\|\Gamma(\head_i,r_j,\tail_k,t_l)\|_p$, where $\Gamma$ is a model-specific function.

\noindent\textbf{Semantic Matching (SM) Models for Temporal KGs}\hspace{1.5mm} Temporal SM models regard a temporal knowledge graph as a fourth-order binary tensor $\mathcal{X}\in\{0,1\}^{|\mathcal{E}|\times|\mathcal{R}|\times|\mathcal{E}|\times|\mathcal{T}|}$. The $(i,j,k,l)$ entry $\mathcal{X}_{ijkl}=1$ if $(\head_i, r_j, \tail_k, t_l)$ is valid otherwise $\mathcal{X}_{ijkl}=0$. 
Suppose that $\mathcal{X}_{jl}$ denotes the $(j, l)$ slice of $\mathcal{X}$, i.e., the adjacency matrix given the $j$-th relation and the $l$-th timestamp. Generally, a SM temporal KGE model factorizes $\mathcal{X}_{jl}$ as $\mathcal{X}_{jl}\approx\re(\overline{\headmat} (\textbf{R}_j\odot \timemat_l)\tailmat^\top)$, where the $i$-th ($k$-th) row of $\headmat$ ($\tailmat$) is $\heademb_i$ ($\tailemb_k$), $\textbf{R}_j$ is a matrix representing relation $r_j$, $\timemat_l$ is a matrix representing timestamp $t_l$, $\re(\cdot)$ and $\overline{\cdot}$ are the real part and the conjugate of a complex matrix, and $\odot$ is the element-wise multiplication between two matrices. 
The aim of temporal SM models is to seek matrices $\headmat, \textbf{R}_1,\dots,\textbf{R}_{|\mathcal{R}|},\tailmat$ and $\timemat_1,\dots,\timemat_{|\mathcal{T}|}$, such that $\re(\overline{\headmat} (\textbf{R}_j\odot \timemat_l)\tailmat^\top)$ can approximate $\mathcal{X}_{jl}$.
Let $\hat{\mathcal{X}}_{jl}=\re(\overline{\headmat} (\textbf{R}_j\odot \timemat_l)\tailmat^\top)$ and $\hat{\mathcal{X}}$ be a tensor of which the $(j, l)$ slice is $\hat{\mathcal{X}}_{jl}$. Then the regularized formulation of a semantic matching model can be written as 
\begin{align}\label{prob:optim}
    \min_{\hat{\mathcal{X}}_{jl}}\sum_{j=1}^{|\mathcal{R}|}\sum_{l=1}^{|\mathcal{T}|} L(\mathcal{X}_{jl},\hat{\mathcal{X}}_{jl})+\lambda g(\hat{\mathcal{X}}),
\end{align}
where $\lambda>0$ is a fixed parameter, $L(\mathcal{X}_{jl},\hat{\mathcal{X}}_{jl})$ measures the discrepancy between $\mathcal{X}_{jl}$ and $\hat{\mathcal{X}}_{jl}$, and $g$ is the regularization function.

\subsection{Other Notations}\label{sec:notation}
We use $\head_i\in\mathcal{E}$ and $\tail_k\in\mathcal{E}$ to distinguish head and tail entities. Let $\|\cdot\|_1$, $\|\cdot\|_2$, and $\|\cdot\|_F$ denote the $L_1$ norm, the $L_2$ norm, and the Frobenius norm of matrices or vectors. We use $\langle \cdot, \cdot\rangle$ to represent the inner products of two real or complex vectors. 
\zqzhang{
\section{Related Work}\label{sec:related_work}
This work is related to knowledge graph embedding models and regularizer for semantic mathcing knowledge graph embeddings.

\subsection{Knowledge Graph Embedding Models}

Popular knowledge graph embeddings include distance-based, learning-based, and semantic matching models.

Distance based models describe relations as relational maps between head and tail entities. Then, they use the Minkowski distance to measure the plausibility of a given triplet or quadruple. Lower distances correspond to higher plausibility of given triplets or quadruples. 
For example, TransE \cite{transe} and its variants \cite{transh,transr} represent relations as translations in vector spaces. They assume that a valid triplet $(\head_i, r_j, \tail_k)$ satisfies $\heademb_{i,r_j}+\textbf{r}_j\approx \tailemb_{k,r_j}$, where $\heademb_{i,r_j}$ and $\tailemb_{k,r_j}$ mean that entity embeddings may be relation-specific.
Structured embedding (SE) \cite{se} uses linear maps to represent relations. Its score function is defined as $s(\head_i,r_j,\tail_k)=-\|\textbf{R}_j^1\heademb_i-\textbf{R}_j^2\tailemb_k\|_1$. RotatE \cite{rotate} defines each relation as a rotation in a complex vector space and the score function is defined as $s(\head_i,r_j,\tail_k)=-\|\heademb_i\circ \textbf{r}_j-\tailemb_k\|_1$, where $\heademb_i, \textbf{r}_j, \tailemb_k\in\mathbb{C}^k$ and $|[\textbf{r}]_i|=1$. ModE \cite{hake}
assumes that $\textbf{R}_j^1$ is diagonal and $\textbf{R}_j^2$ is an identity matrix. It shares a similar score function $s(\head_i,r_j,\tail_k)=-\|\heademb_i\circ \textbf{r}_j-\tailemb_k\|_1$ with RotatE but $\heademb_i, \textbf{r}_j, \tailemb_k\in\mathbb{R}^k$. Some works extend existing distance-based models to temporal KGs. For example, TTransE \cite{ttranse} extends TransE by learning timestamp embeddings. Inspired by TransH \cite{transh}, HyTE \cite{hyte} projects entity and relation embeddings into the timestamp-specific hyperplanes. 

Learning-based models use neural networks---such as convolutional neural networks \cite{conve}, capsule networks \cite{capse}, and graph neural networks \cite{kbgat}---to model the interaction among triplets or quadruples. Then, they use either inner products or distance functions to measure the plausibility of given triplets or quadruples.

Semantic matching models usually formulate the KGC task as a third-order binary tensor completion problem. 
RESCAL \cite{rescal} factorizes the $j$-th frontal slice of $\mathcal{X}$ as $\mathcal{X}_j\approx \textbf{A}\textbf{R}_j\textbf{A}^\top$. That is, embeddings of head and tail entities are from the same space. As the relation specific matrices contain lots of parameters, RESCAL is prone to be overfitting. DistMult \cite{distmult} simplifies the matrix $\textbf{R}_j$ in RESCAL to be diagonal, while it sacrifices the expressiveness of models and can only handle symmetric relations. In order to model asymmetric relations, ComplEx \cite{complex} extends DistMult to complex embeddings. Both DistMult and ComplEx can be regarded as variants of canonical decomposition (CP) \cite{cp}, which are in real and complex vector spaces, respectively. Some works also extend semantic mathcing models to temporal KGs. For example, ConT \cite{cont} extends TuckER \cite{tucker} by additionally learning timestamp embeddings. ASALSAN \cite{asalsan} use three-way DEDICOM \cite{dedicom} to express temporal relations, which shares similar ideas with RESCAL \cite{rescal}. Besides, TComplEx and TNTComplEx \cite{tcomplex} reduce memory complexity by extending ComplEx to decomposition of the 4-order tensor.

Compared with distance-based models, semantic matching knowledge graph embeddings are more time-efficient, as we can easily parallel the computation of linear transformations and inner products. Moreover, due to the property of inner products, semantic matching is good at modeling one-to-many/many-to-one/many-to-many relations. For example, given the entity embedding $\heademb$ and relation embedding $\textbf{r}$, we can find different entity embeddings $\tailemb_i$ such that $(\head,r,\tail_i)$ have the same scores. In order to handle complex relations, distance-based models usually turn to relation-specific projection and further increase the computational burden.
Compared with learning-based model, semantic matching knowledge graph embeddings are more time-efficient thanks to their simpler formulations.

However, also due to the property of inner products, in semantic matching, entities with similar semantics may have dissimilar embeddings, which will degrade the knowledge graph completion performance. As introduced in Section 4.1, even if the embeddings of $\tfont{Laue}$ and $\tfont{Schottky}$ are orthogonal, the triples $(\tfont{Laue}, \tfont{StudiedIn}, \tfont{HU Berlin})$ and $(\tfont{Schottky}, \tfont{StudiedIn}, \tfont{HU Berlin})$ are likely to have the same scores when the two entity embeddings have proper norms. Moreover, distance-based models are better at modeling properties of knowledge graphs than semantic matching. For example, RotatE \cite{rotate} can handle composition of relations; HAKE \cite{hake} and MuRP \cite{murp} can model semantic hierarchies.

\modifyy{
It is worth noting that many general embedding learning methods are closely related to knowledge graph embeddings, all of which aim to optimize similarity between objects. For example, some graph embedding models \cite{node2vec,deepwalk,line} define similarity functions between nodes, and more generally, some feature embedding approaches \cite{general_emb,van2018representation,simclr} learn discriminative feature representations by optimizing similarity between samples. Different from these works, knowledge graph embeddings need to model complex relations between entities, leading to a more challenging problem.
}

\subsection{Regularizers for Semantic Matching Knowledge Graph Embeddings}
Semantic matching (SM) KGE models usually suffer from overfitting problem seriously, which motivates various regularizers. In the original papers of SM models, the authors usually use the squared Frobenius norm ($L_2$ norm) regularizer \cite{rescal,distmult,complex}. This regularizer cannot bring satisfying improvements. Consequently, SM models do not gain comparable performance to distance based models \cite{rotate,hake}. More recently, \cite{n3} propose to use the tensor nuclear 3-norm \cite{nunorm} (N3) as a regularizer, which brings more significant improvements than the squared Frobenius norm regularizer. However, it is designed for the CP-like models, such as CP and ComplEx, and not suitable for more general models such as RESCAL. Moreover, some regularization methods aim to leverage external background knowledge. For example, to model equivalence and inversion axioms, \cite{bg_reg} impose a set of model-dependent soft constraints on the predicate embeddings. \cite{simp_cons} use non-negativity constraints on entity embeddings and approximate entailment constraints on relation embeddings to impose prior beliefs upon the structure of the embeddings space.

}

\section{DURA for Static KGE}\label{sec:sdura}
We introduce a novel regularizer---\textbf{DU}ality-induced \textbf{R}egulariz\textbf{A}er (\textbf{DURA})---for semantic matching knowledge graph embeddings. We first introduce the motivations in Section \ref{sec:motivation} and then introduce a basic version of DURA in Section \ref{sec:basic_dura}. Finally, we introduce the formulation of DURA for static KGE in Section \ref{sec:dura}.

\begin{figure}[t]
    \centering 
\begin{subfigure}{0.85\columnwidth}
  \includegraphics[width=220pt]{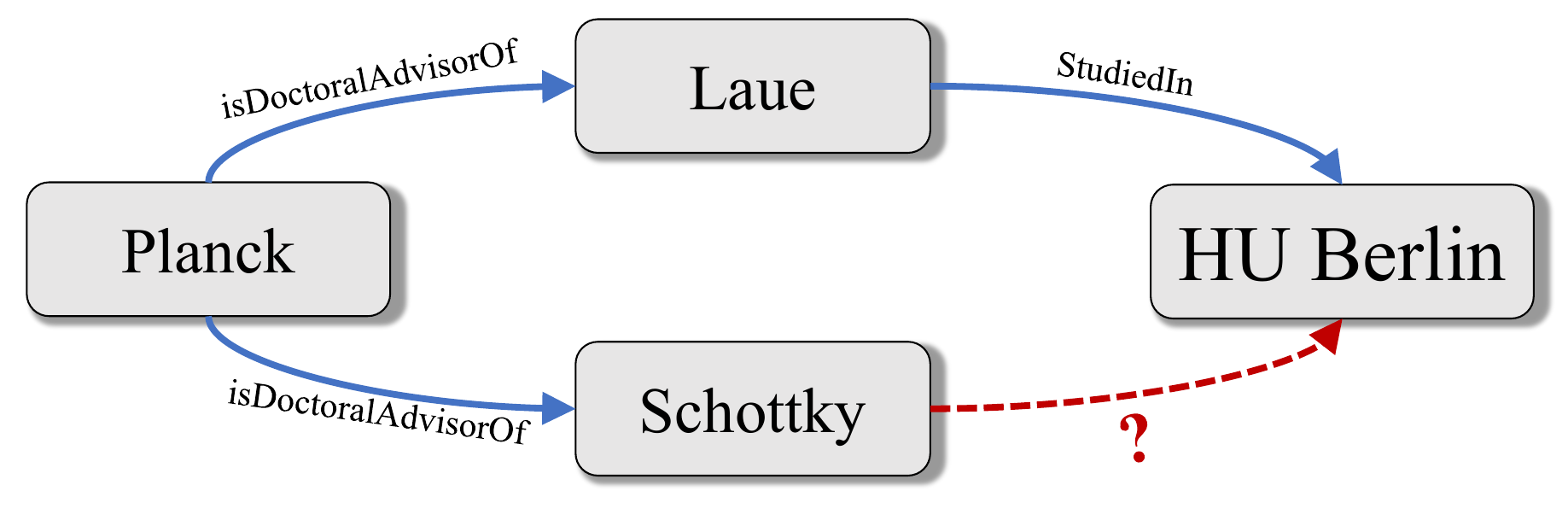}
  \label{fig:why-c}
  \caption{\modifyy{A KGC problem}.\label{fig:illu}}
\end{subfigure}
\medskip
\begin{subfigure}{0.48\columnwidth}
  \includegraphics[width=115pt]{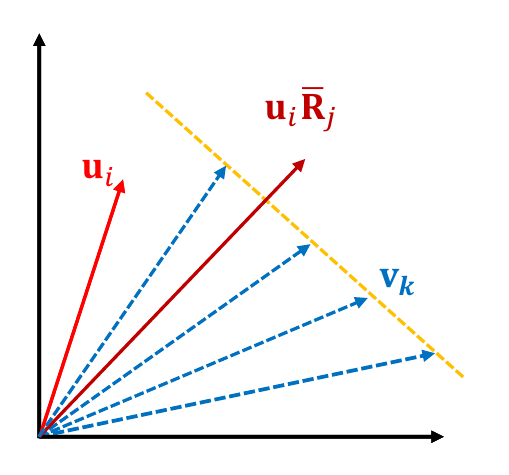}
  \label{fig:why-a}
  \caption{Without regularization.\label{fig:reg_na}}
\end{subfigure}\hfil 
\medskip
\begin{subfigure}{0.48\columnwidth}
  \includegraphics[width=115pt]{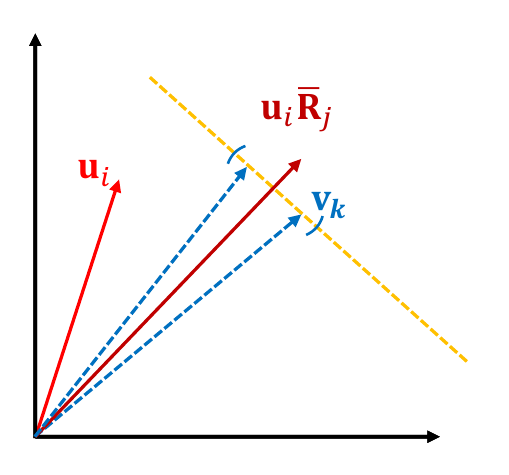}
  \label{fig:why-b}
  \caption{With DURA.}\label{fig:reg_dura}
\end{subfigure}
\vskip -0.15in
\caption{An illustration of why basic DURA can improve the performance of SM models when the embedding dimensions are $2$. Suppose that triples $(\head_i,r_j,\tail_k)$ ($k=1,2,\dots,n$) are valid. (a) Figure \ref{fig:illu} demonstrates that tail entities connected to a head entity through the same relation should have similar embeddings. (b) Figure \ref{fig:reg_na} shows that SM models without regularization can get the same score even though the embeddings of $\tail_k$ are dissimilar. (c) Figure \ref{fig:reg_dura} shows that with DURA, embeddings of $\tail_k$ are encouraged to locate in a small region.}
\label{fig:illustration}
\vskip -0.1in
\end{figure}

\subsection{Motivations}\label{sec:motivation}

To perform accurate link prediction, we expect tail entities connected to a head entity through the same relation to have similar embeddings.
First, we claim that tail entities connected to a head entity through the same relation should have similar semantics. Suppose that we know a head entity $\head_i$ and a relation $r_j$, and our aim is to predict the tail entity.  If $r_j$ is a one-to-many relation, i.e., there exist two entities $\tail_1$ and $\tail_2$ such that both $(\head_i,r_j,\tail_1)$ and $(\head_i,r_j,\tail_2)$ are valid, then we expect that $\tail_1$ and $\tail_2$ have similar semantics.
\modifyy{For example, if two triples (\tfont{Planck}, \tfont{isDoctoralAdvisorOf}, \tfont{Laue}) and (\tfont{Planck}, \tfont{isDoctoralAdvisorOf}, \tfont{Schottky}) are valid, then \tfont{Laue} and \tfont{Schottky} should have similar semantics. Further, we expect that entities with similar semantics have similar embeddings. In this way, if we have known that (\tfont{Laue}, \tfont{StudiedIn}, \tfont{HU Berlin}) is valid, then we can predict that (\tfont{Schottky}, \tfont{StudiedIn}, \tfont{HU Berlin}) is likely to be valid, which is consistent with the fact. See Figure \ref{fig:illu} for an illustration of the prediction process.}

However, SM models fail to achieve the above goal. As shown in Figure \ref{fig:reg_na}, suppose that we have known $\heademb_i\bar{\textbf{R}}_j$ when the embedding dimension is $2$. Then, we can get the same score $s(\head_i,r_j,\tail_k)$ for $k=1,2,\dots,n$, so long as $\tailemb_k$ lies on the same line perpendicular to $\heademb_i\bar{\textbf{R}}_j$. Generally, the entities $t_1$ and $t_2$ have similar semantics. However, their embeddings $\tailemb_1$ and $\tailemb_2$ can even be orthogonal, which means that the two embeddings are dissimilar. Therefore, the performance of SM models for knowledge graph completion is usually unsatisfying.

\subsection{Basic DURA}\label{sec:basic_dura}
Consider the static knowledge graph completion problem $(\head_i, r_j,?)$. That is, we are given the head entity and the relation, aiming to predict the tail entity. Suppose that $f_j(i,k)$ measures the plausibility of a given triplet $(\head_i,r_j,\tail_k)$, i.e., $f_j(i,k)=s(\head_i,r_j,\tail_k)$. Then the score function of a SM model is
\begin{align}\label{eqn:score}
    f_j(i,k)=\re(\overline{\heademb}_i \textbf{R}_j\tailemb_k^\top)=\re(\langle \heademb_i \overline{\textbf{R}}_j, \tailemb_k\rangle).
\end{align}
It first maps the entity embeddings $\heademb_i$ by a linear transformation $\overline{\textbf{R}}_j$, and then uses the real part of an inner product to measure the similarity between $\heademb_i \overline{\textbf{R}}_j$ and $\tailemb_k$. 
Notice that another commonly used similarity measure---Euclidean distance---can replace the dot product similarity in Equation \eqref{eqn:score}. We can obtain an associated distance based model formulated as
\begin{align*}
    f_j^E(i,k)=-\|\heademb_i\overline{\textbf{R}}_j-\tailemb_k\|_2^2.
\end{align*}
Note that we use the squared score function that is equivalent to the form without square. That is to say, there exists a \textit{\textbf{duality}}---for an existing semantic matching KGE model (\textit{\textbf{primal}}), there is often another distance based KGE model (\textit{\textbf{dual}}) closely associated with it.

Specifically, the relationship between the primal and the dual can be formulated as
\begin{align*}
    f_j^E(i,k)&=-\|\heademb_i\overline{\textbf{R}}_j-\tailemb_k\|_2^2\\
    &=-\|\heademb_i\overline{\textbf{R}}_j\|_2^2-\|\tailemb_k\|_2^2+2\re(\langle \heademb_i\overline{\textbf{R}}_j, \tailemb_k\rangle)\\
    &=2f_j(i,k)-\|\heademb_i\overline{\textbf{R}}_j\|_2^2-\|\tailemb_k\|_2^2.
\end{align*}
When we train a distance based model, the aim is to maximize $f_j^E(i,k)$ for all valid triples $(\head_i,r_j,\tail_k)$. Suppose that $\mathcal{S}$ is the set that contains all valid triples. Then, we have that 
\begin{align}\label{eqn:dual_rel}
    &\max_{(\head_i,r_j,\tail_k)\in\mathcal{S}} f_j^E(i,k)
    = \min_{(\head_i,r_j,\tail_k)\in\mathcal{S}} -f_j^E(i,k)\nonumber\\
    =&\min_{(\head_i,r_j,\tail_k)\in\mathcal{S}} -2f_j(i,k)+\|\heademb_i\overline{\textbf{R}}_j\|_2^2+\|\tailemb_k\|_2^2.
\end{align}
Therefore, the duality induces a regularizer for semantic matching KGE models, i.e.,
\begin{align}\label{reg:f}
   \sum_{(\head_i,r_j,\tail_k)\in\mathcal{S}}\|\heademb_i\overline{\textbf{R}}_j\|_2^2+\|\tailemb_k\|_2^2,
\end{align}
which is called basic DURA.

By Equation \eqref{eqn:dual_rel}, we know that basic DURA constrains the distance between $\heademb_i\bar{\textbf{R}}_j$ and $\tailemb_k$. If $\heademb_i$ and $\bar{\textbf{R}}_j$ are known, then $\tailemb_k$ will lie in a small region (see Figure \ref{fig:reg_dura}). Thus, tail entities  connected to a head entity through the same relation will have similar embeddings, which meets our expectation and is beneficial to the prediction of unknown triples.

\subsection{DURA}\label{sec:dura}
Basic DURA encourages tail entities with similar semantics to have similar embeddings. However, it cannot handle the case that head entities have similar semantics.

Suppose that  (\tfont{Laue}, \tfont{StudiedIn}, \tfont{HU Berlin}) and (\tfont{Schottky}, \tfont{StudiedIn}, \tfont{HU Berlin}) are valid. Similar to the discussion in Section \ref{sec:motivation}, we expect that \tfont{Laue} and \tfont{Schottky} have similar semantics and thus have similar embeddings. If we further know that (\tfont{DU Berlin}, \tfont{hasCitizen}, \tfont{Laue}) is valid, we can predict that (\tfont{DU Berlin}, \tfont{hasCitizen}, \tfont{Schottky}) is valid. However, basic DURA cannot handle the case.
Let $\heademb_1$, $\heademb_2$, $\tailemb_1$, and $\textbf{R}_1$ be the embeddings of \tfont{Laue}, \tfont{Schottky}, \tfont{HU Berlin}, and \tfont{StudiedIn}, respectively. Then, $\re(\overline{\heademb}_1 \textbf{R}_1\tailemb_1^\top)$ and $\re(\overline{\heademb}_2 \textbf{R}_1\tailemb_1^\top)$ can be equal even if $\heademb_1$ and $\heademb_2$ are orthogonal, as long as $\heademb_1\overline{\textbf{R}}_1=\heademb_2\overline{\textbf{R}}_1$.

To tackle the above issue, noticing that $\re(\overline{\heademb}_i\textbf{R}_j\tailemb_k^\top)=\re(\overline{\tailemb}_k \overline{\textbf{R}}_j^\top\heademb_i^\top)$, we define another dual distance based KGE model
\begin{align*}
    \tilde{f}_{j}^E(i,k)=-\|\tailemb_k \textbf{R}_j^\top-\heademb_i\|_2^2.
\end{align*}
Then, similar to the derivation in Equation \eqref{eqn:dual_rel}, the duality induces a regularizer given by 
\begin{align}\label{reg:b}
   \sum_{(\head_i,r_j,\tail_k)\in\mathcal{S}}\|\tailemb_k\textbf{R}_j^\top\|^2+\|\heademb_i\|^2.
\end{align}
When a SM model are incorporated with regularizer \eqref{reg:b}, head entities with similar semantics will have similar embeddings.

Finally, combining the regularizers \eqref{reg:b} and \eqref{reg:f}, DURA has the form of 
\begin{align}\label{reg:dura}
    \sum_{(\head_i,r_j,\tail_k)\in\mathcal{S}}\left[\|\heademb_i\overline{\textbf{R}}_j\|_2^2+\|\tailemb_k\|_2^2+\|\tailemb_k\textbf{R}_j^\top\|_2^2+\|\heademb_i\|_2^2\right].
\end{align}

\subsection{Theoretical Analysis for Diagonal Relation Matrices}
Ignoring the sampling frequency and summarizing DURA on all possible entities and relations, we can write the regularizer as:
\begin{align}
    |\mathcal{E}|\sum_{j=1}^{|\mathcal{R}|}(\|\headmat\overline{\textbf{R}}_j\|_F^2+\|\tailmat\|_F^2+\|\tailmat\textbf{R}_j^\top\|_F^2+\|\headmat\|_F^2),
\end{align}
where $|\mathcal{E}|$ and $|\mathcal{R}|$ are the number of entities and relations, respectively. 

In the rest of this section, we use the same definitions of $\hat{\mathcal{X}}_j$ and $\hat{\mathcal{X}}$ as in the problem \eqref{prob:optim}.  When the relation embedding matrices $\textbf{R}_j$ are diagonal in $\mathbb{R}$ or $\mathbb{C}$ as in CP or ComplEx, DURA gives an upper bound to the tensor nuclear 2-norm of $\hat{\mathcal{X}}$, which is an extension of trace norm regularizers in matrix completion. To simplify the notations, we take CP as an example, in which all involved embeddings are real numbers. The conclusion in complex space can be analogized accordingly.

\begin{definition}[\cite{nunorm}]
The nuclear 2-norm of a 3D tensor $\mathcal{A}\in\mathbb{R}^{n_1}\otimes\mathbb{R}^{n_2}\otimes\mathbb{R}^{n_3}$ is 
\begin{align*}
    \|\mathcal{A}\|_{*}=\min&\left\{\sum_{i=1}^r\|\textbf{u}_{1,i}\|_2\|\textbf{u}_{2,i}\|_2\|\textbf{u}_{3,i}\|_2:\right.\\
    &\left.\mathcal{A}=\sum_{i=1}^r\textbf{u}_{1,i}\otimes \textbf{u}_{2,i}\otimes \textbf{u}_{3,i},r\in\mathbb{N}\right\},
\end{align*}
where $\textbf{u}_{k,i} \in \mathbb{R}^{n_k}$ for $k=1, ..., 3$, $i=1,...,r$, and $\otimes$ denotes the outer product.
\end{definition}
For notation convenience, we define a relation matrix $\widetilde{\mathbf{R}} \in \mathbb{R}^{|\mathcal{R}| \times D}$, of which the $j$-th row
consists of the diagonal entries of $\textbf{R}_j$. That is,
 $\widetilde{\textbf{R}}(j, d) =  \textbf{R}_j (d, d),$
where $\textbf{A}(i, j)$ represents the  entry in the $i$-th row and $j$-th column of the matrix $\textbf{A}$.

In the knowledge graph completion problem, the tensor nuclear 2-norm of $\hat{\mathcal{X}}$ is
\begin{align*}
    \|\hat{\mathcal{X}}\|_{*}=\min&\left\{\sum_{d=1}^D\|\heademb_{:d}\|_2\|\textbf{r}_{:d}\|_2\|\tailemb_{:d}\|_2:\right.\\
    &\left.\hat{\mathcal{X}}=\sum_{d=1}^D\heademb_{:d}\otimes \textbf{r}_{:d}\otimes \tailemb_{:d}  \right\},\nonumber
\end{align*}
where $D$ is the embedding dimension, $\heademb_{:d}$, $\textbf{r}_{:d}$, and $\tailemb_{:d}$ are the $d$-th columns of $\headmat$, $\widetilde{\textbf{R}}$, and $\tailmat$.

For DURA, we have the following theorem, of which the proof are provided in Appendix A.
\begin{theorem}\label{thm:main}
Suppose that $\hat{\mathcal{X}}_j=\headmat\textbf{R}_j\tailmat^\top$ for $j=1,2,\dots,|\mathcal{R}|$, where $\headmat,\tailmat,\textbf{R}_j$ are real matrices and $\textbf{R}_j$ is diagonal. Then, the following equation holds
\begin{align*}
    \min_{\substack{\hat{\mathcal{X}}_j=\headmat \textbf{R}_j\tailmat^\top  }}&\frac{1}{4|\sqrt{\mathcal{R}}|}\sum_{j=1}^{|\mathcal{R}|}(\|\headmat\textbf{R}_j\|_F^2+\|\tailmat\|_F^2+\|\tailmat\textbf{R}_j^\top\|_F^2+\|\headmat\|_F^2)\\
=&\|\hat{\mathcal{X}}\|_*.
\end{align*}
The minimizers satisfy
$
  \|\heademb_{:d}\|_2\|\textbf{r}_{:d}\|_2=\sqrt{|\mathcal{R}|}\|\tailemb_{:d}\|_2
$
and
$
  \|\tailemb_{:d}\|_2\|\textbf{r}_{:d}\|_2=\sqrt{|\mathcal{R}|}\|\heademb_{:d}\|_2,
$
$\,\forall\,d\in\{1,2,\ldots, D\}$,
where $\heademb_{:d}$, $\textbf{r}_{:d}$, and $\tailemb_{:d}$ are the $d$-th columns of $\headmat$, $\widetilde{\textbf{R}}$, and $\tailmat$, respectively.
\end{theorem}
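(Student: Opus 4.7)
The plan is to reduce the constrained minimization on the left to a scalar optimization in the column norms of the three factor matrices, and then invoke AM-GM twice to match the definition of the tensor nuclear 2-norm. The key observation is that when $\mathbf{R}_j$ is diagonal, the identities $\hat{\mathcal{X}}_j=\headmat\mathbf{R}_j\tailmat^\top$ for all $j$ are entry-wise equivalent to the width-$D$ CP decomposition $\hat{\mathcal{X}}=\sum_{d=1}^{D}\heademb_{:d}\otimes\textbf{r}_{:d}\otimes\tailemb_{:d}$, where $\textbf{r}_{:d}$ is the $d$-th column of $\widetilde{\textbf{R}}$. Hence the feasible set on the left coincides with the one implicit in $\|\hat{\mathcal{X}}\|_*$, up to allowing rank $r>D$ on the nuclear-norm side, which is harmless since shorter decompositions can be zero-padded.

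First I would rewrite the regularizer column-wise. Expanding the Frobenius norms and using diagonality gives $\sum_{j}\|\headmat\mathbf{R}_j\|_F^2=\sum_d\|\heademb_{:d}\|_2^2\|\textbf{r}_{:d}\|_2^2$, analogously for the tail term, while $\sum_j(\|\headmat\|_F^2+\|\tailmat\|_F^2)=|\mathcal{R}|\sum_d(\|\heademb_{:d}\|_2^2+\|\tailemb_{:d}\|_2^2)$. Collecting the four pieces, the full regularizer equals
\[
    \sum_{d=1}^{D}\bigl(\|\heademb_{:d}\|_2^2+\|\tailemb_{:d}\|_2^2\bigr)\bigl(\|\textbf{r}_{:d}\|_2^2+|\mathcal{R}|\bigr).
\]
Two applications of AM-GM per summand, $\|\heademb_{:d}\|_2^2+\|\tailemb_{:d}\|_2^2\ge 2\|\heademb_{:d}\|_2\|\tailemb_{:d}\|_2$ and $\|\textbf{r}_{:d}\|_2^2+|\mathcal{R}|\ge 2\sqrt{|\mathcal{R}|}\|\textbf{r}_{:d}\|_2$, yield the lower bound $4\sqrt{|\mathcal{R}|}\,\|\heademb_{:d}\|_2\|\textbf{r}_{:d}\|_2\|\tailemb_{:d}\|_2$ on each term. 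Dividing by $4\sqrt{|\mathcal{R}|}$, summing over $d$, and minimizing over valid decompositions then gives the ``$\ge$'' direction by the definition of $\|\hat{\mathcal{X}}\|_*$.

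For the reverse inequality I would exploit the scaling ambiguity of CP components: the substitution $(\heademb_{:d},\textbf{r}_{:d},\tailemb_{:d})\mapsto(\alpha_d\heademb_{:d},\gamma_d\textbf{r}_{:d},\beta_d\tailemb_{:d})$ with $\alpha_d\beta_d\gamma_d=1$ leaves $\hat{\mathcal{X}}$ invariant but tunes the three column norms freely subject only to their product being fixed. Starting from a decomposition that attains $\|\hat{\mathcal{X}}\|_*$, I would choose $\alpha_d,\beta_d,\gamma_d$ to enforce $\|\heademb_{:d}\|_2=\|\tailemb_{:d}\|_2$ and $\|\textbf{r}_{:d}\|_2=\sqrt{|\mathcal{R}|}$; this saturates both AM-GM inequalities simultaneously, so the rescaled decomposition satisfies $\tfrac{1}{4\sqrt{|\mathcal{R}|}}(\text{regularizer})=\sum_d\|\heademb_{:d}\|_2\|\textbf{r}_{:d}\|_2\|\tailemb_{:d}\|_2=\|\hat{\mathcal{X}}\|_*$. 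This gives the ``$\le$'' direction, and the two stated minimizer relations follow by multiplying $\|\heademb_{:d}\|_2=\|\tailemb_{:d}\|_2$ through by $\|\textbf{r}_{:d}\|_2=\sqrt{|\mathcal{R}|}$.

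No step here is a sharp obstacle; the only item demanding care is matching the two feasible sets, i.e., checking that the width-$D$ factorization on the left is not more restrictive than the variable-rank CP decompositions appearing in the nuclear-norm definition. Zero-padding handles one direction, and the AM-GM bound uses only the width-$D$ form in the other, so the two minima coincide and the sandwich is tight.
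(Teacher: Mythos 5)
Your proposal is correct and follows essentially the same route as the paper's proof: expand the Frobenius norms column-wise using diagonality of $\textbf{R}_j$, lower-bound by AM--GM, and close the gap via the rescaling freedom of the CP components. The one (minor but welcome) difference is that you factor the whole regularizer as $\sum_d(\|\heademb_{:d}\|_2^2+\|\tailemb_{:d}\|_2^2)(\|\textbf{r}_{:d}\|_2^2+|\mathcal{R}|)$ and saturate both AM--GM inequalities at once, whereas the paper bounds the two halves $\sum_j(\|\headmat\textbf{R}_j\|_F^2+\|\tailmat\|_F^2)$ and $\sum_j(\|\tailmat\textbf{R}_j^\top\|_F^2+\|\headmat\|_F^2)$ separately and then needs the two equality conditions to hold simultaneously; your version makes that simultaneity (equivalently $\|\textbf{r}_{:d}\|_2=\sqrt{|\mathcal{R}|}$ and $\|\heademb_{:d}\|_2=\|\tailemb_{:d}\|_2$) transparent by construction.
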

Therefore, the minimization of DURA has a form of the tensor nuclear 2-norm, which is a tensor analog to the matrix trace norm. Matrix trace norm regularization has shown great power in the matrix completion problem. In the experiments, we will show that DURA reproduces the success in the tensor completion problem.

\section{DURA for Temporal KGE}

We first introduce a general TKGE model TRESCAL---which is an extension of RESCAL to temporal KGs---and introduce the corresponding DURA regularizer. Then, we introduce the enhanced version of DURA for TComplEx \cite{tcomplex}.

\subsection{TRESCAL and DURA for TRESCAL}
Inspired by the classical static KGE model RESCAL, we propose its temporal extension TRESCAL as follows:
\begin{align}\label{eqn:trescal}
    s(\head_i,r_j,\tail_k,t_l)=\headconjemb(\textbf{R}_j \odot \timemat_l)\tailemb_k^\top,
\end{align}
where  $\textbf{R}_j$ and $\timemat_l$ are trainable real matrices. Note that the element-wise multiplication $\odot$ is commutative, i.e., $A\odot B=B\odot A$ for all matrices $A$ and $B$.

In Equation \eqref{eqn:trescal}, the product of relation and time matrices $\textbf{R}_j \odot \timemat_l$ can be seen as the representation of time-dependent relations, leading to $|\mathcal{R}|\times|\mathcal{T}|$ equivalent relations in a KG. By a similar derivation in Section \ref{sec:dura}, we propose DURA regularizer (DURA1) for TRESCAL as follows:
\begin{align}\label{reg:trescal}
    \sum_{(\head_i,r_j,\tail_k,t_l)\in\mathcal{S}}
    &[\|\overline{\heademb}_i(\textbf{R}_j \odot \timemat_l)\|_2^2+\|\tailemb_k\|_2^2\\
    &+\|\tailemb_k(\textbf{R}_j \odot \timemat_l)^\top\|_2^2+\|\heademb_i\|_2^2].
\end{align}

\subsection{DURA for TComplEx}\label{sec:dura_tcomplex}
\zhshi{By introducing representations of time-dependent relations, we can extend ComplEx \cite{complex} to temporal KGs. The resulting model, TComplEx \cite{tcomplex}}, is formulated as
\begin{align}\label{eqn:tcomplex}
    s(\head_i,r_j,\tail_k,t_l)=\re(\headconjemb(\textbf{R}_j \odot \timemat_l)\tailemb_k^\top).
\end{align}
\zqzhang{
As mentioned by Lacroix et al. \cite{tcomplex}, timestamp embeddings in temporal semantic matching models can be used to equivalently modulate both the relations and entities to obtain time-dependent representations. Therefore, we introduce DURA for temporal KGE in two views.}

First, as relation embeddings and timestamp embeddings are diagonal matrices, the score function becomes $s(\head_i,r_j,\tail_k,t_l)=\re((\headconjemb \timemat_l\textbf{R}_j)\tailemb_k^\top)=\re(\headconjemb (\timemat_l\textbf{R}_j\tailemb_k^\top)).$
This formulation models time-dependent relation representations. We introduce a form of DURA (DURA1) as
\begin{align}
    \sum_{(\head_i,r_j,\tail_k,t_l)\in\mathcal{S}}
    &\left[\|\overline{\heademb}_i(\textbf{R}_j  \timemat_l)\|_2^2+\|\tailemb_k\|_2^2\right.\nonumber\\
    &\left.+\|\tailemb_k(\textbf{R}_j \timemat_l)^\top\|_2^2+\|\heademb_i\|_2^2\right].\label{reg:tcomplex_1}
\end{align}

Note that when the relation and time matrices are diagonal, the right hand side of Equation \eqref{eqn:tcomplex} satisfy associative law. \zhshi{Thus, we can reformulate the score function into $s(\head_i,r_j,\tail_k,t_l)=\re((\headconjemb \timemat_l)\textbf{R}_j\tailemb_k^\top) = \re(\headconjemb \textbf{R}_j(\timemat_l\tailemb_k^\top))$, where $\heademb_i\timemat_l$ and $\tailemb_k\timemat_l$ are representations of time-dependent entity representations.} Therefore, we derive another formulation of DURA (DURA2):
\begin{align}
    \sum_{(\head_i,r_j,\tail_k,t_l)\in\mathcal{S}}
    &\left[\|\headconjemb\textbf{R}_j\|_2^2+\|\tailemb_k\timemat_l\|_2^2\right.\nonumber\\
    &\left.+\|\headconjemb\timemat_l\|_2^2+\|\tailemb_k\textbf{R}_j\|_2^2\right].\label{reg:tcomplex_2}
\end{align}
We empirically evaluate the aforementioned two versions of DURA in Section \ref{sec:experiments}.

\subsection{Semantic Meaning of Temporal DURA}\label{sec:motivation_temporal}

\zhshi{
The semantic meaning of DURA1 is similar to that of DURA for static KGE. That is, we expect tail/head entities connected to a head/tail entity through similar time-dependent relation representations to have similar embeddings. For example, as shown in Figure \ref{fig:example_tdura} if two quadruples (\tfont{Planck}, \tfont{isDoctoralAdvisorOf}, \tfont{Laue}, \tfont{1902-1903}) and (\tfont{Planck}, \tfont{isDoctoralAdvisorOf}, \tfont{Schottky}, \tfont{1909-1912}) are valid, then we expect that \tfont{Laue} and \tfont{Schottky} have similar semantics. Let $\heademb_i,\textbf{R}_j,\tailemb_k^{(1)},\tailemb_k^{(2)},\timemat_l^{(1)},$ and $\timemat_l^{(2)}$ be embeddings of \tfont{Planck}, \tfont{isDoctoralAdvisorOf}, \tfont{Laue}, \tfont{Schottky}, \tfont{1902-1903}, and \tfont{1909-1912}, respectively. In temporal KGE, close timestamps usually have similar representations \cite{tcomplex} by incorporating a timestamp smoothness regularizer. For analyses convenience and without loss of generality, we assume that $\timemat_l^{(1)}=\timemat_l^{(2)}=\timemat_l$ and hence $\overline{\heademb}_i(\textbf{R}_j  \timemat_l^{(1)})=\overline{\heademb}_i(\textbf{R}_j  \timemat_l^{(2)}) = \overline{\heademb}_i(\textbf{R}_j  \timemat_l)$. As shown in Figure \ref{fig:without_tdura1}, we can get the same score $s(\head_i,r_j,\tail_k,t_l)$ for any $\tail_k$, so long as $\tailemb_k$ lies on the same line perpendicular to $\overline{\heademb}_i(\textbf{R}_j  \timemat_l)$. When we train SM models without DURA1, the embeddings $\tailemb_k^{(1)}$ and $\tailemb_k^{(2)}$ can even be orthogonal, which means that the two embeddings are dissimilar. In contrast, Figure \ref{fig:with_tdura1} shows that DURA1 encourages $\tailemb_k^{(1)}$ and $\tailemb_k^{(2)}$ to locate in a small region, which means that two embeddings of \tfont{Laue} and \tfont{Schottky} are similar. Thus, when (\tfont{Laue}, \tfont{StudiedIn}, \tfont{HU Berlin}, \tfont{1902-1903}) is valid, DURA1 encourages (\tfont{Schottky}, \tfont{StudiedIn}, \tfont{HU Berlin}, \tfont{1905-1912}) to have a high score. The prediction is reasonable, as two persons advised by the same person in a similar period of time tend to study in the same university.

\begin{figure}[t]
    \centering 
\medskip
\begin{subfigure}{0.9\columnwidth}
  \includegraphics[width=220pt]{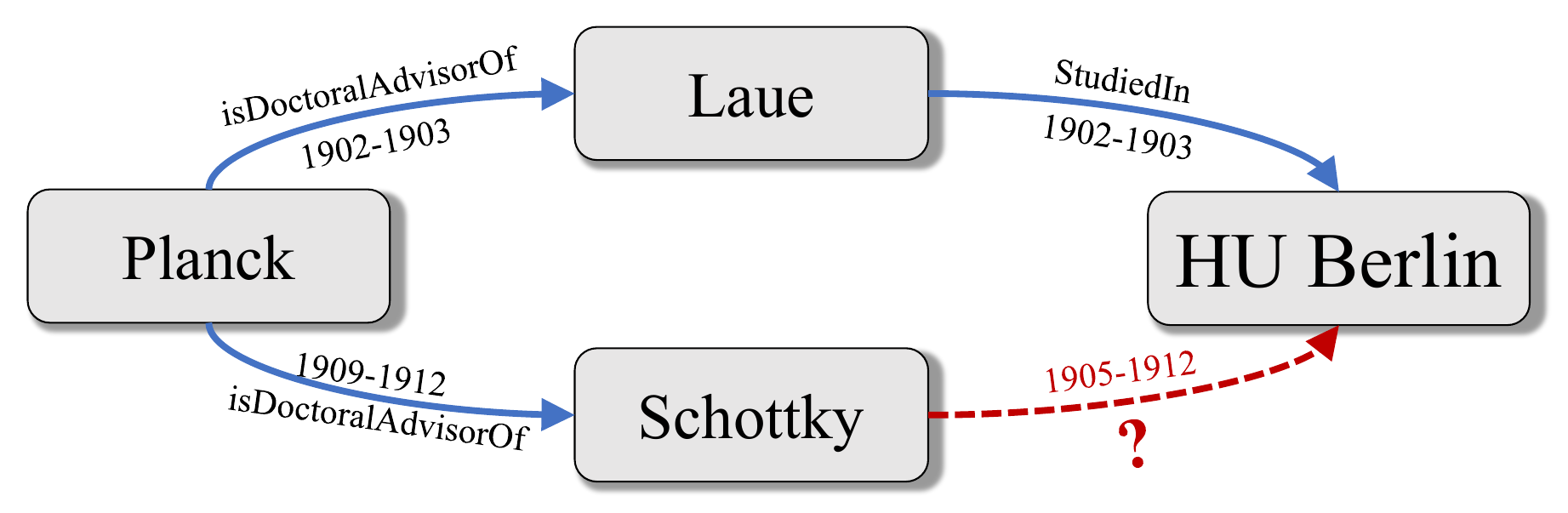}
  \caption{A temporal KGC problem.}\label{fig:example_tdura}
\end{subfigure}
\medskip
\begin{subfigure}{0.45\columnwidth}
  \includegraphics[width=100pt]{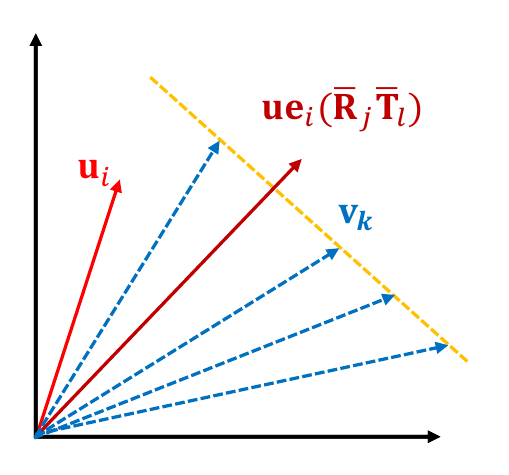}
  \caption{Without DURA1.\label{fig:without_tdura1}}
\end{subfigure}\hfil 
\medskip
\begin{subfigure}{0.45\columnwidth}
  \includegraphics[width=100pt]{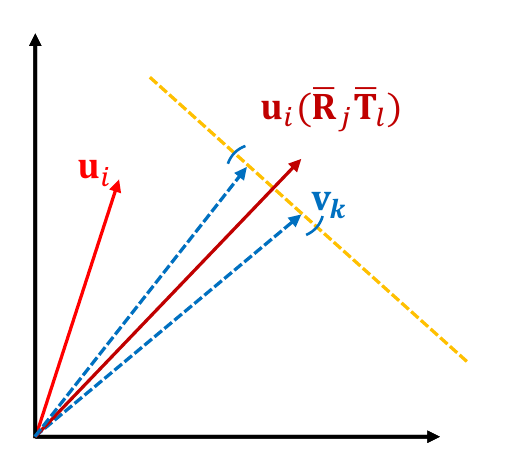}
  \caption{With DURA1.\label{fig:with_tdura1}}
\end{subfigure}\hfil 
\medskip
\begin{subfigure}{0.45\columnwidth}
  \includegraphics[width=100pt]{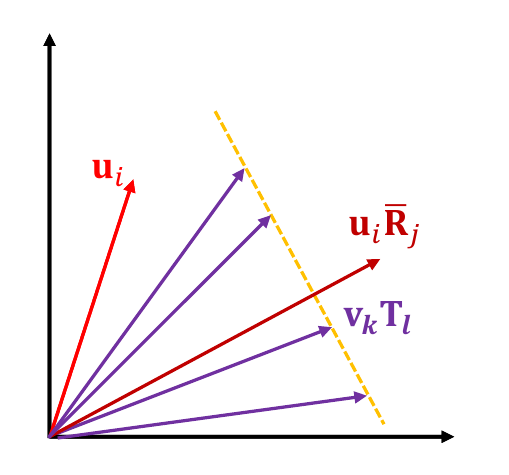}
  \caption{Without DURA2.\label{fig:without_tdura2}}
\end{subfigure}\hfil 
\medskip
\begin{subfigure}{0.45\columnwidth}
  \includegraphics[width=100pt]{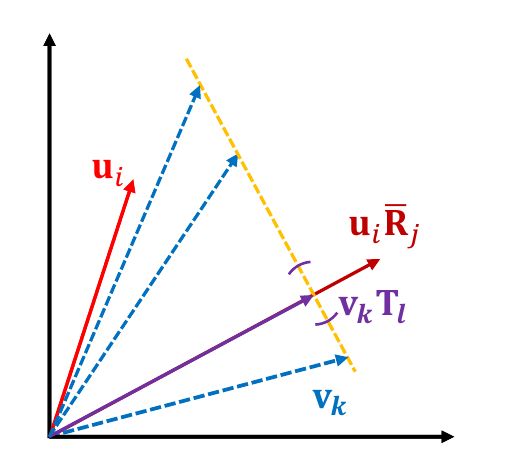}
  \caption{With DURA2.}\label{fig:with_tdura2}
\end{subfigure}

\caption{
\zhshi{An illustration of how temporal DURA influence the performance of SM models when the embedding dimensions are $2$. Suppose that triples $(\head_i,r_j,\tail_k,t_l)$ ($k=1,2,\dots,n$) are valid.
Figure \ref{fig:without_tdura1} and \ref{fig:without_tdura2} show that SM models without regularization can get the same score even though and the embeddings of original entities $\tailemb_k$ and time-dependent entities $\tailemb_k\timemat_l$ are dissimilar, respectively.
Figure \ref{fig:with_tdura1} shows that with DURA1, embeddings of $\tailemb_k$ are encouraged to locate in a small region.
Figure \ref{fig:with_tdura2} shows that with DURA2, embeddings of time-dependent entities $\tailemb_k\timemat_l$ are encouraged to locate in a small region, while the corresponding embeddings of original entities may be dissimilar.}
}
\label{fig:why_tdura}
\end{figure}

DURA2 encourages time-dependent entities with similar semantics to have similar embeddings, while similar embeddings of time-dependent entities do not imply similar embeddings of original entities.
Similar to the above discussion, DURA2 encourages the time-dependent entities $\tailemb_k^{(1)}\timemat_l$ and $\tailemb_k^{(2)}\timemat_l$ to locate in a small region. However, as shown in Figure \ref{fig:with_tdura2}, 
when embeddings of time-dependent entities $\tailemb_k\timemat_l$ are projections of $\tailemb_k$ onto the subspace spanned by $\overline{\heademb}_i\textbf{R}_j$, the embeddings of entities $\tailemb_k^{(1)}$ and $\tailemb_k^{(2)}$ can be dissimilar, even if  the corresponding time-dependent entities $\tailemb_k\timemat_l$ are the same. Thus, when (\tfont{Laue}, \tfont{StudiedIn}, \tfont{HU Berlin}, \tfont{1905-1912}) is valid, (\tfont{Schottky}, \tfont{StudiedIn}, \tfont{HU Berlin}, \tfont{1902-1903}) is not guaranteed to have a high score. However, as shown in Section \ref{sec:theo_tdura}, both DURA1 and DURA2 correspond to nuclear-2 norm of 4-order tensors. Thus, both of them can improve the performance of semantic matching models, while we expect DURA1 outperforms DURA2.
Experiments in Section \ref{sec:t_main_results} confirm our points.

}

\subsection{Theoretical Analysis for Diagonal Relation and Time Matrices}\label{sec:theo_tdura}
We give an analysis for the case that relation and time matrices are diagonal (i.e., TComplEx).

First, we review the definition of 4D tensors.
\begin{definition}[\cite{nunorm}]
The nuclear 2-norm of a 4D tensor $\mathcal{A}\in\mathbb{R}^{n_1}\otimes\mathbb{R}^{n_2}\otimes\mathbb{R}^{n_3}\otimes\mathbb{R}^{n_4}$ is 
\begin{align*}
    \|\mathcal{A}\|_{*}=\min&\left\{\sum_{i=1}^r\|\textbf{u}_{1,i}\|_2\|\textbf{u}_{2,i}\|_2\|\textbf{u}_{3,i}\|_2\|\textbf{u}_{4,i}\|_2:\right.\\
    &\left.\mathcal{A}=\sum_{i=1}^r\textbf{u}_{1,i}\otimes \textbf{u}_{2,i}\otimes \textbf{u}_{3,i}\otimes \textbf{u}_{4,i},r\in\mathbb{N}\right\},
\end{align*}
where $\textbf{u}_{k,i} \in \mathbb{R}^{n_k}$ for $k=1, ..., 4$, $i=1,...,r$, and $\otimes$ denotes the outer product.
\end{definition}
In the temporal knowledge graph completion problem, the tensor nuclear 2-norm of $\hat{\mathcal{X}}$ is
\begin{align*}
    \|\hat{\mathcal{X}}\|_{*}=\min&\left\{\sum_{d=1}^D\|\textbf{u}_{:d}\|_2\|\textbf{r}_{:d}\|_2\|\textbf{v}_{:d}\|_2\|\textbf{t}_{:d}\|_2:\right.\\
    &\left.\hat{\mathcal{X}}=\sum_{d=1}^D\textbf{u}_{:d}\otimes \textbf{r}_{:d}\otimes \textbf{v}_{:d}\otimes \textbf{t}_{:d}  \right\},
\end{align*}
where $\textbf{u}_{:d}$, $\textbf{r}_{:d}$, $\textbf{v}_{:d}$, and $\textbf{t}_{:d}$ are the $d$-th columns of the head entities matrix $\textbf{U}$, the relation matrix $\widetilde{\textbf{R}}$, the tail entities matrix $\textbf{V}$, and the time matrix $\tailmat$, respectively.

Then, we have the following theorems for DURA, which correspond to the regularizers \eqref{reg:tcomplex_1} and \eqref{reg:tcomplex_2}, respectively. The proofs are provided in the Appendixes B and C.
\begin{theorem}\label{thm:tdura_1}
Suppose that $\hat{\mathcal{X}}_{jl}=\textbf{U}(\textbf{R}_j\timemat_l)\textbf{V}^\top$ for $j=1,2,\dots,|\mathcal{R}|$, where $\textbf{U},\textbf{V},\textbf{R}_j, \timemat_l$ are real matrices and $\textbf{R}_j$ and $\timemat_l$ are diagonal. Then, the following equation holds
\begin{align*}
    \|\hat{\mathcal{X}}\|_* = \min_{\substack{\hat{\mathcal{X}}_{jl}=\textbf{U}(\textbf{R}_j\timemat_l)\textbf{V}^\top\\  }}\frac{1}{4\sqrt{|\mathcal{R}||\mathcal{T}|}}&\sum_{l=1}^{|\mathcal{T}|}\sum_{j=1}^{|\mathcal{R}|}(\|\textbf{U}\textbf{R}_j\timemat_l\|_F^2+\|\textbf{V}\|_F^2 \\
    +&\|\textbf{V}(\textbf{R}_j\timemat_l)^\top\|_F^2+\|\textbf{U}\|_F^2).
\end{align*}
The minimizers of the left side satisfy
$
  \|\textbf{u}_{:d}\|_2\|\textbf{r}_{:d}\|_2\|\textbf{t}_{:d}\|_2=\sqrt{|\mathcal{R}||\mathcal{T}|}\|\textbf{v}_{:d}\|_2
$
and
$
  \|\textbf{v}_{:d}\|_2\|\textbf{r}_{:d}\|_2\|\textbf{t}_{:d}\|_2=\sqrt{|\mathcal{R}||\mathcal{T}|}\|\textbf{u}_{:d}\|_2,
$
$\,\forall\,d\in\{1,2,\ldots, D\}$,
where $\textbf{u}_{:d}$, $\textbf{r}_{:d}$, $\textbf{v}_{:d}$, and $\textbf{t}_{:d}$ are the $d$-th columns of $\textbf{U}$, $\textbf{R}$, $\textbf{V}$ and $\tailmat$, respectively.
\end{theorem}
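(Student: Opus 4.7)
The plan is to mirror the proof of Theorem~\ref{thm:main}, upgrading the 3-way argument to a 4-way argument, with the key observation that diagonality of both $\textbf{R}_j$ and $\timemat_l$ means $\hat{\mathcal{X}}$ already admits a CP-style decomposition $\hat{\mathcal{X}}=\sum_{d=1}^{D}\textbf{u}_{:d}\otimes\textbf{r}_{:d}\otimes\textbf{v}_{:d}\otimes\textbf{t}_{:d}$, with $\textbf{r}_{:d}$ and $\textbf{t}_{:d}$ built from the diagonals of $\textbf{R}_j$ and $\timemat_l$. This immediately links the parametric family on the left-hand side to the decompositions appearing in the definition of $\|\hat{\mathcal{X}}\|_*$.

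First I would expand each Frobenius term column-wise using diagonality:
\begin{align*}
\sum_{j,l}\|\textbf{U}\textbf{R}_j\timemat_l\|_F^2 &= \sum_{d=1}^{D}\|\textbf{u}_{:d}\|_2^2\,\|\textbf{r}_{:d}\|_2^2\,\|\textbf{t}_{:d}\|_2^2,\\
\sum_{j,l}\|\textbf{V}(\textbf{R}_j\timemat_l)^\top\|_F^2 &= \sum_{d=1}^{D}\|\textbf{v}_{:d}\|_2^2\,\|\textbf{r}_{:d}\|_2^2\,\|\textbf{t}_{:d}\|_2^2,
\end{align*}
while $\sum_{j,l}\|\textbf{V}\|_F^2 = |\mathcal{R}||\mathcal{T}|\sum_d\|\textbf{v}_{:d}\|_2^2$ and similarly for $\textbf{U}$. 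Summing the four terms and dividing by $4\sqrt{|\mathcal{R}||\mathcal{T}|}$ reduces the left-hand side to a clean per-column expression. I would then apply the weighted AM--GM inequality twice, pairing the ``cubic'' terms with the ``linear'' terms:
\begin{align*}
\|\textbf{u}_{:d}\|_2^2\|\textbf{r}_{:d}\|_2^2\|\textbf{t}_{:d}\|_2^2 + |\mathcal{R}||\mathcal{T}|\,\|\textbf{v}_{:d}\|_2^2 &\ge 2\sqrt{|\mathcal{R}||\mathcal{T}|}\,\|\textbf{u}_{:d}\|_2\|\textbf{r}_{:d}\|_2\|\textbf{t}_{:d}\|_2\|\textbf{v}_{:d}\|_2,\\
\|\textbf{v}_{:d}\|_2^2\|\textbf{r}_{:d}\|_2^2\|\textbf{t}_{:d}\|_2^2 + |\mathcal{R}||\mathcal{T}|\,\|\textbf{u}_{:d}\|_2^2 &\ge 2\sqrt{|\mathcal{R}||\mathcal{T}|}\,\|\textbf{u}_{:d}\|_2\|\textbf{r}_{:d}\|_2\|\textbf{t}_{:d}\|_2\|\textbf{v}_{:d}\|_2.
\end{align*}
Summing over $d$ and dividing by $4\sqrt{|\mathcal{R}||\mathcal{T}|}$ shows that the left-hand side is bounded below by $\sum_d\|\textbf{u}_{:d}\|_2\|\textbf{r}_{:d}\|_2\|\textbf{v}_{:d}\|_2\|\textbf{t}_{:d}\|_2$. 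Since the factorization furnishes a valid rank-$D$ decomposition of $\hat{\mathcal{X}}$, this lower bound is at least $\|\hat{\mathcal{X}}\|_*$ by definition, which gives one inequality.

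For the reverse inequality, I would start from an arbitrary rank-$r$ decomposition $\hat{\mathcal{X}}=\sum_{i=1}^{r}\textbf{a}_i\otimes\textbf{b}_i\otimes\textbf{c}_i\otimes\textbf{d}_i$ attaining (or approaching) $\|\hat{\mathcal{X}}\|_*$, set $D=r$ and take $\textbf{a}_i,\textbf{c}_i$ to be columns of $\textbf{U},\textbf{V}$ and $\textbf{b}_i,\textbf{d}_i$ to be columns of $\widetilde{\textbf{R}},\widetilde{\timemat}$ (equivalently, the diagonals of the $\textbf{R}_j$ and $\timemat_l$). The decomposition is invariant under the four-parameter rescaling $(\textbf{a}_i,\textbf{b}_i,\textbf{c}_i,\textbf{d}_i)\mapsto(\lambda_1\textbf{a}_i,\lambda_2\textbf{b}_i,\lambda_3\textbf{c}_i,\lambda_4\textbf{d}_i)$ whenever $\lambda_1\lambda_2\lambda_3\lambda_4=1$; this leaves three degrees of freedom per component, which I would use to enforce the two AM--GM equality conditions
\[
\|\textbf{u}_{:d}\|_2\|\textbf{r}_{:d}\|_2\|\textbf{t}_{:d}\|_2=\sqrt{|\mathcal{R}||\mathcal{T}|}\,\|\textbf{v}_{:d}\|_2,\qquad
\|\textbf{v}_{:d}\|_2\|\textbf{r}_{:d}\|_2\|\textbf{t}_{:d}\|_2=\sqrt{|\mathcal{R}||\mathcal{T}|}\,\|\textbf{u}_{:d}\|_2,
\]
which together reduce to $\|\textbf{u}_{:d}\|_2=\|\textbf{v}_{:d}\|_2$ and $\|\textbf{r}_{:d}\|_2\|\textbf{t}_{:d}\|_2=\sqrt{|\mathcal{R}||\mathcal{T}|}$, and are always solvable on each nonzero component. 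Since the rescaling preserves the product $\|\textbf{u}_{:d}\|_2\|\textbf{r}_{:d}\|_2\|\textbf{v}_{:d}\|_2\|\textbf{t}_{:d}\|_2$, the normalized decomposition realizes the nuclear-norm value and saturates both AM--GM bounds, so the left-hand side equals $\sum_d\|\textbf{a}_d\|_2\|\textbf{b}_d\|_2\|\textbf{c}_d\|_2\|\textbf{d}_d\|_2$; minimizing over decompositions completes the argument and simultaneously identifies the stated minimizer conditions.

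The main obstacle is the rescaling step in the reverse direction: I need to verify that the two AM--GM equality conditions can be simultaneously enforced component-wise via the three-parameter rescaling without altering the tensor itself, and to handle degenerate components (where some $\|\textbf{a}_d\|_2$ or $\|\textbf{b}_d\|_2$ vanishes) by dropping them before rescaling. Everything else is either a diagonal-expansion calculation or a direct lift of the 3-way proof of Theorem~\ref{thm:main} with one additional factor.
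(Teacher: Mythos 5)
Your proposal follows essentially the same route as the paper's proof: a column-wise expansion of the four Frobenius terms using diagonality, two applications of AM--GM pairing each cubic term with the matching $|\mathcal{R}||\mathcal{T}|$-weighted quadratic term, the nuclear-norm definition to lower-bound the resulting sum of products, and a per-component rescaling of an optimal decomposition (the paper gives the explicit factors $\alpha_u,\alpha_v,\alpha_r,\alpha_t$ with product $1$, which indeed enforce exactly your reduced conditions $\|\textbf{u}_{:d}\|_2=\|\textbf{v}_{:d}\|_2$ and $\|\textbf{r}_{:d}\|_2\|\textbf{t}_{:d}\|_2=\sqrt{|\mathcal{R}||\mathcal{T}|}$) to saturate both inequalities. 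The argument is correct and matches the paper's in substance.
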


\begin{theorem}\label{thm:tdura_2}
Suppose that $\hat{\mathcal{X}}_{jl}=\textbf{U}(\textbf{R}_j\timemat_l)\textbf{V}^\top$ for $j=1,2,\dots,|\mathcal{R}|$, where $\textbf{U},\textbf{V},\textbf{R}_j, \timemat_l$ are real matrices and $\textbf{R}_j$ and $\timemat_l$ are diagonal. Then, the following equation holds
\begin{align*}
    \|\hat{\mathcal{X}}\|_* = \min_{\substack{\hat{\mathcal{X}}_{jl}=\textbf{U}(\textbf{R}_j\timemat_l)\textbf{V}^\top\\  }}\frac{1}{4\sqrt{|\mathcal{R}||\mathcal{T}|}}\sum_{l=1}^{|\mathcal{T}|}\sum_{j=1}^{|\mathcal{R}|}&(\|\textbf{U}\textbf{R}_j\|_F^2+\|\textbf{V}\timemat_l^\top\|_F^2\\
    +&\|\textbf{U}\timemat_l\|_F^2+\|\textbf{V}\textbf{R}_j^\top\|_F^2).
\end{align*}
The minimizers satisfy
$
  \sqrt{|\mathcal{T}|}\|\textbf{u}_{:d}\|_2\|\textbf{r}_{:d}\|_2=\sqrt{|\mathcal{R}|}\|\textbf{v}_{:d}\|_2\|\textbf{t}_{:d}\|_2
$
and
$
  \sqrt{|\mathcal{R}|}\|\textbf{u}_{:d}\|_2\|\textbf{t}_{:d}\|_2=\sqrt{\mathcal{T}}\|\textbf{v}_{:d}\|_2\|\textbf{r}_{:d}\|_2,
$
$\,\forall\,d\in\{1,2,\ldots, D\}$,
where $\heademb_{:d}$, $\textbf{r}_{:d}$, and $\textbf{t}_{:d}$ are the $d$-th columns of $\headmat$, $\widetilde{\textbf{R}}$, and $\tailmat$, respectively.
\end{theorem}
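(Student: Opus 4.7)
\textbf{Proof proposal for Theorem \ref{thm:tdura_2}.} The plan is to mimic the strategy behind Theorem \ref{thm:main}, adapted to the 4D diagonal setting. Since $\textbf{R}_j$ and $\timemat_l$ are diagonal, the componentwise expansion yields $\hat{\mathcal{X}} = \sum_{d=1}^D \textbf{u}_{:d}\otimes \textbf{r}_{:d}\otimes \textbf{v}_{:d}\otimes \textbf{t}_{:d}$, which immediately gives the candidate decomposition used to upper bound the nuclear 2-norm via $\|\hat{\mathcal{X}}\|_*\le\sum_d\|\textbf{u}_{:d}\|_2\|\textbf{r}_{:d}\|_2\|\textbf{v}_{:d}\|_2\|\textbf{t}_{:d}\|_2$. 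Conversely, any rank decomposition $\hat{\mathcal{X}}=\sum_d\textbf{u}_{:d}\otimes\textbf{r}_{:d}\otimes\textbf{v}_{:d}\otimes\textbf{t}_{:d}$ can be written in the factorized matrix form $\hat{\mathcal{X}}_{jl}=\textbf{U}(\textbf{R}_j\timemat_l)\textbf{V}^\top$ with diagonal $\textbf{R}_j,\timemat_l$, so the two minimization domains coincide.

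Next I would collapse each of the four Frobenius terms column by column. Using the diagonal structure, $\|\textbf{U}\textbf{R}_j\|_F^2=\sum_d\|\textbf{u}_{:d}\|_2^2 r_{jd}^2$, and likewise for the other three. Summing over $(j,l)$ produces the clean, decoupled identity
\begin{align*}
\sum_{l,j}\bigl(\|\textbf{U}\textbf{R}_j\|_F^2&+\|\textbf{V}\timemat_l^\top\|_F^2+\|\textbf{U}\timemat_l\|_F^2+\|\textbf{V}\textbf{R}_j^\top\|_F^2\bigr) \\
&=\sum_d\Bigl(|\mathcal{T}|\|\textbf{u}_{:d}\|_2^2\|\textbf{r}_{:d}\|_2^2+|\mathcal{R}|\|\textbf{v}_{:d}\|_2^2\|\textbf{t}_{:d}\|_2^2+|\mathcal{R}|\|\textbf{u}_{:d}\|_2^2\|\textbf{t}_{:d}\|_2^2+|\mathcal{T}|\|\textbf{v}_{:d}\|_2^2\|\textbf{r}_{:d}\|_2^2\Bigr).
\end{align*}
Applying the four-term AM-GM to each summand, the product telescopes to $|\mathcal{R}|^2|\mathcal{T}|^2\|\textbf{u}_{:d}\|_2^4\|\textbf{r}_{:d}\|_2^4\|\textbf{v}_{:d}\|_2^4\|\textbf{t}_{:d}\|_2^4$, whose fourth root is $\sqrt{|\mathcal{R}||\mathcal{T}|}\,\|\textbf{u}_{:d}\|_2\|\textbf{r}_{:d}\|_2\|\textbf{v}_{:d}\|_2\|\textbf{t}_{:d}\|_2$. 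Dividing by $4\sqrt{|\mathcal{R}||\mathcal{T}|}$ and summing over $d$ yields the lower bound $\sum_d\|\textbf{u}_{:d}\|_2\|\textbf{r}_{:d}\|_2\|\textbf{v}_{:d}\|_2\|\textbf{t}_{:d}\|_2\ge\|\hat{\mathcal{X}}\|_*$.

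The remaining task is to show equality is attained. I would exploit the scaling invariance $(\textbf{u}_{:d},\textbf{r}_{:d},\textbf{v}_{:d},\textbf{t}_{:d})\to(\alpha\textbf{u}_{:d},\beta\textbf{r}_{:d},\gamma\textbf{v}_{:d},\delta\textbf{t}_{:d})$ with $\alpha\beta\gamma\delta=1$, under which the outer product and hence $\hat{\mathcal{X}}$ are unchanged. Starting from any nuclear-norm-minimizing decomposition, one picks the three free scale parameters to enforce the AM-GM equality, which reduces (after cancelling common factors from the 2nd and 3rd, and the 1st and 3rd terms) to the two independent conditions $\|\textbf{u}_{:d}\|_2=\|\textbf{v}_{:d}\|_2$ and $\sqrt{|\mathcal{T}|}\|\textbf{r}_{:d}\|_2=\sqrt{|\mathcal{R}|}\|\textbf{t}_{:d}\|_2$; both can be satisfied simultaneously because the two conditions involve disjoint ratios $\alpha/\gamma$ and $\beta/\delta$. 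At the rescaled optimum, all four AM-GM terms coincide, so the lower bound is tight and the minimum equals $\|\hat{\mathcal{X}}\|_*$. Finally, multiplying the equality $\|\textbf{u}_{:d}\|_2=\|\textbf{v}_{:d}\|_2$ by $\sqrt{|\mathcal{T}|}\|\textbf{r}_{:d}\|_2=\sqrt{|\mathcal{R}|}\|\textbf{t}_{:d}\|_2$ and by $\sqrt{|\mathcal{R}|}\|\textbf{t}_{:d}\|_2=\sqrt{|\mathcal{T}|}\|\textbf{r}_{:d}\|_2$ respectively, the two minimizer identities stated in the theorem fall out.

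The main obstacle, as in Theorem \ref{thm:main}, is the equality attainment step rather than the AM-GM lower bound: one must argue that the scaling degrees of freedom are sufficient to meet the AM-GM equality constraints for every component $d$ simultaneously, and that such a rescaled decomposition still realizes $\|\hat{\mathcal{X}}\|_*$. The counting argument (three free scale parameters per component against two independent conditions) handles this cleanly, but the write-up needs care to avoid degenerate cases where some column norm vanishes, which can be treated by a continuity/limit argument or by noting that zero columns contribute zero to both sides.
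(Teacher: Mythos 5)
Your proposal is correct and follows essentially the same route as the paper: expand the four Frobenius terms column by column using the diagonal structure, apply the four-term AM--GM to obtain the lower bound $\sqrt{|\mathcal{R}||\mathcal{T}|}\sum_d\|\textbf{u}_{:d}\|_2\|\textbf{r}_{:d}\|_2\|\textbf{v}_{:d}\|_2\|\textbf{t}_{:d}\|_2$, and then rescale a nuclear-norm-attaining decomposition (using the invariance of the outer product under $\alpha\beta\gamma\delta=1$) to meet the AM--GM equality conditions, which is exactly the template of the paper's proof of Theorem \ref{thm:tdura_1} in Appendix B and of the scaling lemma in Appendix C. If anything, your write-up is more explicit than Appendix C, which only states the auxiliary nuclear $p$-norm lemma, and your reduction of the equality constraints to the two independent ratio conditions $\|\textbf{u}_{:d}\|_2=\|\textbf{v}_{:d}\|_2$ and $\sqrt{|\mathcal{T}|}\|\textbf{r}_{:d}\|_2=\sqrt{|\mathcal{R}|}\|\textbf{t}_{:d}\|_2$ correctly recovers the stated minimizer identities.
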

Therefore, the minimization of DURA in the temporal knowledge graph case has a form of the tensor nuclear 2-norm of 4-order tensors. Theorem \ref{thm:tdura_2} also explains why DURA2 improves the performance of temporal semantic matching KGEs, even if it cannot encourage entity with similar semantics to have similar representations.

\begin{table*}[t]
    \centering
    \caption{\modifyy{Hyper-parameters found by grid search. $k$ is the embedding size, $b$ is the batch size, $\lambda$ is the regularization coefficients, and $\lambda_1$ and $\lambda_2$ are weights for different parts of the regularizer.}}\label{table:hp_skbc}
    \label{table:hp}
    \begin{tabular}{l  c c c c c  c c c c c  c c c c c }
    \toprule
        &\multicolumn{5}{c}{\textbf{WN18RR}}&  \multicolumn{5}{c}{\textbf{FB15k-237}} & \multicolumn{5}{c}{\textbf{YAGO3-10}}\\
         \cmidrule(lr){2-6}
         \cmidrule(lr){7-11}
         \cmidrule(lr){12-16}
         & $k$ & $b$ & $\lambda$ & $\lambda_1$ &$\lambda_2$ & $k$ & $b$ & $\lambda$ & $\lambda_1$ &$\lambda_2$ & $k$ & $b$ & $\lambda$ & $\lambda_1$ &$\lambda_2$ \\
        \midrule
        CP & 2000 & 100 & 1e-1 & 0.5 & 1.5 & 2000 & 100 & 5e-2 & 0.5 & 1.5 & 1000 & 1000 & 5e-3 & 0.5 & 1.5\\
        ComplEx & 2000 & 100 & 1e-1 & 0.5  &1.5 & 2000 & 100 & 5e-2 & 0.5 & 1.5 & 1000 & 1000 & 5e-2 & 0.5 & 1.5\\
        RESCAL & 512 & 1024 &1e-1 & 1.0 & 1.0 & 512 & 512 &1e-1 & 2.0 & 1.5 & 512 & 1024 &5e-2 & 1.0 & 1.0\\
        \bottomrule
    \end{tabular}
\end{table*}
\section{Experiments} \label{sec:experiments}
The knowledge graph completion task is a standard benchmark task to evaluate KGE models. To demonstrate that DURA is an effective and widely-applicable regularizer, we conduct extensive experiments on both static and temporal knowledge graph completion datasets.

\subsection{Static Knowledge Graph Completion}
We introduce the experimental settings for static KGC in Section \ref{sec:exp_settting} and demonstrate the effectiveness of DURA on three benchmark datasets in Section \ref{sec:main_results}. Then, we compare DURA with other regularizers in Section \ref{sec:comp}.

\subsubsection{Experimental Settings for Static KGC}\label{sec:exp_settting}

We consider three public static knowledge graph datasets---WN18RR \cite{wn18rr}, FB15k-237 \cite{conve}, and YAGO3-10 \cite{yago3} for the knowledge graph completion task, which have been divided into training, validation, and testing set in previous works. The statistics of these datasets are shown in Appendix D.

\begin{table*}[ht]
    \centering
\zqzhang{
    \caption{\zqzhang{Evaluation results on WN18RR, FB15k-237 and YAGO3-10 datasets. We reimplement CP, ComplEx, and RESCAL using the ``reciprocal'' setting \cite{n3,simple}, which leads to better results than the reported results in the original paper. * indicates that the model use external textual information.}
    }
    \label{table:main_results}
    \begin{tabular}{l  c c c  c c c  c c c }
        \toprule
          &\multicolumn{3}{c}{\textbf{WN18RR}}&  \multicolumn{3}{c}{\textbf{FB15k-237}} & \multicolumn{3}{c}{\textbf{YAGO3-10}}\\
         \cmidrule(lr){2-4}
         \cmidrule(lr){5-7}
         \cmidrule(lr){8-10}
         & MRR & H@1 & H@10 & MRR & H@1 & H@10 & MRR & H@1 & H@10 \\
        \midrule
        TransE & .226 & - & .501 & .294 & - & .465 & - & - & -\\
        TransH & .186 & - & .424 & .211 & - & .366 & - & - & -\\
        TransD & .185 & - & .428 & .286 & - & .453 & - & - & -\\
        RotatE & .476  & .428 & .571  & .338 & .241 & .533 & .495 & .402 & .670\\
        MuRP   &.481 &.440 &.566 &.335 &.243 &.518 & - & - & - \\
        HAKE   &.497 &.452 &.582 &.346 &.250 &.542 &.546 &.462 &.694\\
        \midrule
        ConvE & .444 & - & .503 & .324 & - & .501 & - & - & -\\
        ConvKB & .249 & - & .524 & .243 & - & .421 & - & - & -\\
        KB-GAT & .412 & - & .554 & .157 & - & .331 & - & - & -\\
        InteractE & .463 & .430 & .528 & .354 & .263 & .535 & .541 & .462 & .687\\
        StAR & .401 & .243 & .709 & .296 & .205 &.482 & - & - & -\\
        StAR (Self-Adp)* & \textbf{.551} & \textbf{.459} & \textbf{.732} & .365 & .266 & \textbf{.562} & - & - & -\\
        \midrule
        CP      &.438 &.414 &.485 &.333 &.247 &.508 &.567 &.494 &.698\\
        RESCAL  &.455 &.419 &.493 &.353 &.264 &.528 &.566 &.490 &.701\\
        ComplEx & .460  & .428 & .522  & .346 & .256 & .525 & .573 & .500 & .703\\
        \midrule
        CP-DURA  &.478 &.441 &.552 &.367  &.272 &.555 &.579 &.506 &.709\\
        RESCAL-DURA    &.498 &.455 &.577 &.368 &\textbf{.276} &.550 &.579 &.505 &.712\\
         ComplEx-DURA &.491 &.449 &.571 &\textbf{.371} &\textbf{.276} &.560 &\textbf{.584} &\textbf{.511} &\textbf{.713}\\
        \bottomrule
    \end{tabular}
    }
\end{table*}
\begin{table*}[ht]
    \centering
    \caption{Comparison between DURA, squared Frobenius norm (FRO), and nuclear 3-norm (N3) regularizers. Results of * are taken from \cite{n3}. CP-N3 and ComplEx-N3 are re-implemented and their performances are better than the reported results in \cite{n3}. The best performance on each model are marked in bold.}
    \label{table:ablation_results}
    \begin{tabular}{l  c c c  c c c  c c c }
        \toprule
          &\multicolumn{3}{c}{\textbf{WN18RR}}&  \multicolumn{3}{c}{\textbf{FB15k-237}} & \multicolumn{3}{c}{\textbf{YAGO3-10}}\\
         \cmidrule(lr){2-4}
         \cmidrule(lr){5-7}
         \cmidrule(lr){8-10}
         & MRR & H@1 & H@10 & MRR & H@1 & H@10 & MRR & H@1 & H@10 \\
        \midrule
        CP-FRO* &.460 &- &.480 &.340 &- &.510 &.540 &- &.680\\
        CP-N3 &.470  &.430   &.544 &.354 &.261  &.544 &.577 &.505  &.705  \\
        CP-DURA  &\textbf{.478} &\textbf{.441} &\textbf{.552} &\textbf{.367}  &\textbf{.272} &\textbf{.555} &\textbf{.579} &\textbf{.506}  &\textbf{.709}\\
        \midrule
        ComplEx-FRO* &.470 &- &.540 &.350 &- &.530 &.570 &- &.710 \\
        ComplEx-N3   &.489  &.443 &\textbf{.580}  &.366  &.271   &.558  &.577  &.502  &.711  \\
        ComplEx-DURA &\textbf{.491} &\textbf{.449} &.571 &\textbf{.371} &\textbf{.276}  &\textbf{.560} &\textbf{.584} &\textbf{.511} &\textbf{.713}\\
        \midrule
        RESCAL-FRO   &.397 &.363  &.452 & .323 &.235 &.501 &.474 &.392 &.628\\
        RESCAL-DURA    &\textbf{.498} &\textbf{.455} &\textbf{.577} &\textbf{.368} &\textbf{.276}  &\textbf{.550} &\textbf{.579} &\textbf{.505}  &\textbf{.712}\\
        \bottomrule
    \end{tabular}
\end{table*}

We use the cross entropy loss function and the ``reciprocal'' setting that creates a new triplet $(\tail_k,r_j^{-1},\head_i)$ for each triplet  $(\head_i,r_j,\tail_k)$ \cite{n3,simple}. We use Adagrad \cite{adagrad} as the optimizer, and use grid search to find the best hyperparameters based on the performance on the validation datasets. 
\modifyy{
As we regard the link prediction as a multi-class classification problem and use the cross entropy loss, we can assign different weights for different classes (i.e., tail entities) based on their frequency of occurrence in the training dataset. Specifically, suppose that the loss of a given query $(\head,r,?)$ is $\ell((\head,r,?),\tail)$, where $\tail$ is the true tail entity, then the weighted loss is $w(t)\ell((\head,r,?),\tail),$
where 
\begin{align*}
    w(\tail)=w_0\frac{\#\tail}{\max\{\#\tail_i:\tail_i \in \text{training set}\}} + (1-w_0),
\end{align*}
$w_0$ is a fixed number, $\#\tail$ denotes the frequency of occurrence in the training set of the entity $\tail$. For all models on WN18RR and RESCAL on YAGO3-10, we choose $w_0=0.1$ and for all the other cases we choose $w_0=0$.

}

Following \cite{transe}, we use entity ranking as the evaluation task. For each triplet $(\head_i,r_j,\tail_k)$ in the test dataset, the model is asked to answer $(\head_i, r_j, ?)$ and $(\tail_k, r_j^{-1}, ?)$. To do this, we fill the positions of missing entities with candidate entities to create a set of candidate triplets, and then rank the triplets in descending order by their scores. Following the ``Filtered'' setting in \cite{transe}, we then filter out all existing triplets known to be true at ranking. We choose Mean Reciprocal Rank (MRR) and Hits at N (H@N) as the evaluation metrics. Higher MRR or H@N indicates better performance.

\modifyy{
We find the best hyper-parameters by grid search.  Specifically, we search learning rates in $\{0.1, 0.01\}$, regularization coefficients in $\{0, 1\times 10^{-3}, 5\times 10^{-3}, 1\times 10^{-2}, 5\times 10^{-2}, 1\times 10^{-1}, 5\times 10^{-1}\}$. On WN18RR and FB15k-237, we search batch sizes in $\{100,500,1000\}$ and embedding sizes in $\{500,1000,2000\}$. On YAGO3-10, we search batch sizes in $\{256,512,1024\}$ and embedding sizes in $\{500,1000\}$.
Table \ref{table:hp} shows the best hyperparameters for DURA found by grid search.
}

\subsubsection{Main Results}\label{sec:main_results}
We demonstrate the performance of DURA on several semantic matching KGE models, including CP \cite{cp}, RESCAL \cite{rescal}, and ComplEx \cite{complex}. 
Note that we reimplement CP, ComplEx, and RESCAL under the ``reciprocal'' setting \cite{simple,n3}, and obtain better results than the reported performance in the original papers. Sun et al. \cite{re-evaluation} demonstrate that many existing learning-based models use inappropriate evaluation protocols and suffer from inflating performance. Therefore, for ConvE, ConvKB, and KB-GAT, we take their results from \cite{re-evaluation} for a fair comparison.

The baseline models include distance-based models (TransE \cite{transe}, TransH \cite{transh}, TransD \cite{transd}, RotatE \cite{rotate}, MuRP \cite{murp}, HAKE \cite{hake}), learning-based models (ConvE \cite{conve}, ConvKB \cite{convkb}, KB-GAT \cite{kbgat}, InteractE \cite{interacte}, and StAR \cite{star}), and semantic matching models (CP \cite{cp}, RESCAL \cite{rescal}, ComplEx \cite{complex}). 
\modifyy{
TransE is the most representative translational distance model, while failing to deal with 1-to-N, N-to-1, and N-to-N relations. TransH  introduces relation-specific hyperplanes to overcome these disadvantages. TransD shares a similar idea with TransH. It introduces relation-specific spaces rather than hyperplanes. TransD also decomposes the projection matrix into a product of two vectors for further simplification. RotatE  defines each relation as a rotation from the source entity to the target entity in the complex vector space and naturally, it is able to model and infer various relation patterns. MuRP embeds multi-relational graph in the Poincar{\'e} ball model of hyperbolic space to capture multiple simultaneous hierarchies. HAKE  maps entities into the polar coordinate system to model semantic hierarchies, which are common in real-world applications. ConvE introduces a multi-layer convolutional network model to learn more expressive features with less parameters. ConvKB  represents triples as 3-column matrixs, and employs a convolutional neural network to capture global relationships and transitional characteristics between entities and relations. KB-GAT proposes an attention-based feature embedding that captures both entity and relation features to cover the complex and hidden information in the local neighborhood surrounding a triple. InteractE improves ConvE by increasing feature interactions. StAR follows the textual encoding paradigm and augments it with graph embedding techniques. 
}

Table \ref{table:main_results} shows the evaluation results.
Overall, DURA significantly improves the performance of all considered SM models. The results demonstrate that without regularization, semantic matching models perform worse than state-of-the-art distance-based models on WN18RR, e.g., RotatE, MuRP, and HAKE. When incorporating with DURA, these semantic matching models achieve competitive performance with distance-based models. Moreover, DURA further improve the performance gain of semantic matching models on FB15k-237 and YAGO3-10. Compared with learning-based models, semantic matching models with DURA outperforms all of them without using external textual information. Notably,
semantic matching models with DURA even outperform StAR (Self-Adp) on FB15k-237, which use external textual information to enhance the expressiveness of KGE.
Generally, models with more parameters and datasets with smaller sizes imply a larger risk of overfitting. Among the three datasets, WN18RR has the smallest size of only $11$ kinds of relations and around $80k$ training samples. Therefore, the improvements brought by DURA on WN18RR are expected to be larger compared with other datasets, which is consistent with the experiments.
As stated in  \cite{kge_survey}, RESCAL is a more expressive model, but it is prone to overfit on small- and medium-sized datasets because it represents relations with much more parameters. For example, on WN18RR dataset, RESCAL gets an H@10 score of 0.493, which is lower than ComplEx (0.522). The advantage of its expressiveness does not show up at all. Incorporated with DURA, RESCAL gets an 8.4\% improvement on H@10 and finally attains 0.577, outperforming all compared models. On larger datasets such as YAGO3-10, overfitting also exists but will be insignificant. Nonetheless, DURA still leads to consistent improvement, showing the ability of DURA to prevent models from overfitting.

\subsubsection{Comparison with Other Regularizers}\label{sec:comp}
We compare DURA to the squared Frobenius norm regularizer and the tensor nuclear 3-norm (N3) regularizer \cite{n3}.  
The squared Frobenius norm regularizer  $g(\hat{\mathcal{X}})= \|\headmat\|_F^2+\|\tailmat\|_F^2+\sum_{j=1}^{|\mathcal{R}|}\|\textbf{R}_j\|_F^2$.
N3 regularizer is given by $g(\hat{\mathcal{X}})= \sum_{d=1}^D(\|\heademb_{:d}\|_3^3+\|\textbf{r}_{:d}\|_3^3+\|\tailemb_{:d}\|_3^3)$.
where $\|\cdot\|_3$ denotes $L_3$ norm of vectors.
N3 regularizer is only suitable for models in which $\textbf{R}_j$ is diagonal, such as CP or ComplEx. 

\begin{table*}[t]
    \centering
    \caption{\modifyy{Hyper-parameters found by grid search. $\lambda$ is the regularization coefficients, $\lambda_1$ and $\lambda_2$ are weights for different parts of the regularizer for DURA2, and $\lambda_3$ and $\lambda_4$ are weights for different parts of the regularizer for DURA1.}}\label{table:hp_tkbc}
    \label{table:temperal_hp}
    \begin{tabular}{l  c c c c c  c c c c c  c c c c c }
    \toprule
        &\multicolumn{5}{c}{\textbf{ICEWS14}}&  \multicolumn{5}{c}{\textbf{ICEWS05-15}} & \multicolumn{5}{c}{\textbf{YAGO15k}}\\
         \cmidrule(lr){2-6}
         \cmidrule(lr){7-11}
         \cmidrule(lr){12-16}
         & $\lambda$ & $\lambda_1$ & $\lambda_2$ & $\lambda_3$ &$\lambda_4$ & $\lambda$ & $\lambda_1$ & $\lambda_2$ & $\lambda_3$ &$\lambda_4$ & $\lambda$ & $\lambda_1$ & $\lambda_2$ & $\lambda_3$ &$\lambda_4$ \\
        \midrule
        TComplEx-DURA1 & 1e-2 & - & - & 1e-3  & 1e2 & 1e-3 & - & - & 3e-2 & 30 & 1e-2 & - & - & 1e-2 & 1e2\\
        TComplEx-DURA2 & 1e-1 & 1e-1 & 3e-2 & - & - & 1e-1  & 1e-2 & 3e-2 & - & - & 1e-2 & 3e-2 & 3e-2 & - & -\\
        TRESCAL-DURA1  & 1e-2 & - & - & 1e1 & 1e1 & 1e-2 & - & - & 1e1 & 1 & 1e-2 & - & - & 1e-1 & 1e1\\
        \bottomrule
    \end{tabular}
\end{table*}

We implement both the squared Frobenius norm (FRO) and N3 regularizers in the weighted way as stated in \cite{n3}.
Table \ref{table:ablation_results} shows the performance of the three regularizers on three popular models: CP, ComplEx, and RESCAL. Note that when the considered model is RESCAL, we only compare DURA to the squared Frobenius norm regularization as N3 does not apply to it.

\begin{table*}[ht]
    \centering
    \zqzhang{
    \caption{
    \zqzhang{Evaluation results on ICEWS14, ICEWS05-15, and YAGO15k datasets. }
    }\label{table:t_main_results}
    \begin{tabular}{l  c c c  c c c  c c c }
        \toprule
          &\multicolumn{3}{c}{\textbf{ICEWS14}}&  \multicolumn{3}{c}{\textbf{ICEWS05-15}} & \multicolumn{3}{c}{\textbf{YAGO15k}}\\
         \cmidrule(lr){2-4}
         \cmidrule(lr){5-7}
         \cmidrule(lr){8-10}
         & MRR & H@1 & H@10 & MRR & H@1 & H@10 & MRR & H@1 & H@10 \\
        \midrule
        TTransE      & .26 & .07 & .60 & .27 & .08 & .62 & .32 & .23 & .51\\
        TA-TransE    & .28 & .10 & .63 & .30 & .10 & .67 & .32 & .23 & .51\\
        RFTE-HAKE    &.50 &.40 &.70 &.47 &.37 &.65 & - & - & - \\
        BoxTE (k=2)  &.62 &.53 &.77 &.66 &.58 &\textbf{.82} & - & - & -\\
        \midrule
        TComplEx & .60 & .51 & .75 & .65 & .57 & .79 & .35 & .27 & .52 \\
        TComplEx-FRO & .60 & .53 & .74 & .65 & .57 & .80 & .34 & .27 & .49 \\
        TComplEx-N3   &.61  &.53 &.77  &.66  &.59   &.80  &.36  &.28  &.54  \\
        TComplEx-DURA1 & \textbf{.64} & \textbf{.56} & \textbf{.79} & \textbf{.67} & .59 & \textbf{.81} & \textbf{.38} & \textbf{.30} & \textbf{.55}\\
        TComplEx-DURA2 & .62 & .54 & .77 & \textbf{.67} & .59 & \textbf{.81} & .35 & .27 & .52\\
        \midrule
        TRESCAL   & .56 & .47 & .73 & .65 & .57 & .79 & .28 & .21 & .45 \\
        TRESCAL-FRO   & .55 & .48 & .69 & .60 & .52 & .75 & .32 & .25 & .46 \\
        TRESCAL-DURA1    &.60 &.51 &.76 &\textbf{.67} & \textbf{.60} &.80 &.30 &.22 &.48\\
        \bottomrule
    \end{tabular}
    }
\end{table*}

For CP and ComplEx, DURA brings consistent improvements compared to FRO and N3 on all datasets. Specifically, on FB15k-237, compared to CP-N3, CP-DURA gets an improvement of 0.013 in terms of MRR. Even for the previous state-of-the-art SM model ComplEx, DURA brings further improvements against the N3 regularizer. Incorporated with FRO, RESCAL performs worse than the vanilla model, which is consistent with the results in \cite{old_dog}.  However, RESCAL-DURA brings significant improvements against RESCAL. All the results demonstrate that DURA is more widely applicable than N3 and more effective than the squared Frobenius norm regularizer.

\subsection{Temporal Knowledge Graph Completion}
We also conduct extensive experiments on temporal knowledge graph completion datasets. Specifically, we introduce the experimental settings for temporal KGC in Section \ref{sec:t_exp_setting} and show the effectiveness of DURA on three benchmark datasets in Section \ref{sec:t_main_results}.

\subsubsection{Experimental Settings for Temporal KGC}\label{sec:t_exp_setting}

We consider three public temporal knowledge graph datasets---ICEWS14 \cite{icew}, ICEWS05-15 \cite{icew}, and YAGO15k \cite{yago15k} for the knowledge graph completion task, which have been divided into training, validation, and testing set in previous works. The statistics of these datasets are shown in Appendix D.

Following \cite{tcomplex}, we use the following temporal regularizer for TComplEx to smooth timestamp embeddings
\begin{align*}
    \Lambda_3(\mathcal{T}) = \frac{1}{|\mathcal{T}|-1}\sum_{l=1}^{|\mathcal{T}|-1}\|\diag^{-1}(\timemat_{l+1}-\timemat_l)\|_3^3,
\end{align*}
where $\diag^{-1}(\cdot)$ gives the vector made of the diagonal elements from the a matrix. For TRESCAL, we propose to use the temporal regularizer
\begin{align*}
    \Lambda_2(\mathcal{T}) = \frac{1}{|\mathcal{T}|-1}\sum_{l=1}^{|\mathcal{T}|-1}\|\timemat_{l+1}-\timemat_l\|_F^2
\end{align*}
to smooth timestamp embeddings.

Moreover, we find it better to assign different weights for the parts involved with relations. That is, the optimization problem has the form of
\begin{align*}
    \min \sum_{(\head_i,r_j,\tail_k,t_l)\in\mathcal{S}}[&\ell_{ijkl}(\textbf{U},\textbf{R}_1,\dots,\textbf{R}_J,\textbf{V},\timemat_1,\dots,\timemat_L)\\
    +&\lambda (\lambda_1 (\|\overline{\textbf{u}}_i\textbf{R}_j\|_2^2+\|\textbf{v}_k\timemat_l\|_2^2)\\
    +&\lambda_2 (\|\overline{\textbf{u}}_i\timemat_l\|_2^2+\|\textbf{v}_k\textbf{R}_j\|_2^2)\\
    +&\lambda_3 (\|\textbf{u}_i\|_2^2+\|\textbf{v}_k\|_2^2)\\
    +&\lambda_4 (\|\overline{\textbf{u}}_i(\textbf{R}_j\odot  \timemat_l)\|_2^2+\|\textbf{v}_k(\textbf{R}_j \odot \timemat_l)\|_2^2))]
\end{align*}
where $\lambda,\lambda_1,\lambda_2,\lambda_3,\lambda_4 \geq 0$ are fixed hyperparameters.
We denote the regularizer \eqref{reg:tcomplex_1} by DURA2 (i.e.,  $\lambda_3=\lambda_4=0$) and the regularizer \eqref{reg:tcomplex_2}  by DURA1 (i.e., $\lambda_1=\lambda_2=0$).

We use grid search to find the best hyperparameters based on the performance on the validation datasets. 
\modifyy{
We search $\lambda \in$ \{1, 3e-1, 1e-1, 3e-2, 1e-2, 3e-3, 1e-3, 3e-4, 1e-4\} for all experiments. We search $\lambda_1,\lambda_2\in$ \{1, 3e-1, 1e-1, 3e-2, 1e-2, 3e-3, 1e-3\} for DURA2, and $\lambda_3\in$ \{1e2, 3e1, 1e1, 3, 1, 3e-1, 1e-1, 3e-2, 1e-2, 3e-3, 1e-3, 3e-4, 1e-4\}, $\lambda_4 \in$ \{1e3, 3e2, 1e2, 3e1, 1e1, 3, 1, 3e-1, 1e-1, 3e-2\} for DURA1.
Table \ref{table:temperal_hp} shows the best hyperparameters found by grid search.
}

\subsubsection{Main Results on Temporal KGC}\label{sec:t_main_results}
We compare DURA with the popular squared Frobenius norm and the tensor nuclear 3-norm (N3) regularizers \cite{tcomplex} on the  state-of-the-art temporal semantic matching KGE model TComplEx and our proposed TRESCAL method.  \zqzhang{We also include four recent distance-based models, including TTransE \cite{ttranse}, TA-TransE \cite{ta-transe}, RFTE-HAKE \cite{rfte} and BoxTE \cite{boxte}.}
\modifyy{
TTransE \cite{ttranse} proposes a time-aware knowledge graph completion model by using temporal order information among facts. TA-TransE \cite{ta-transe} utilizes recurrent neural networks to learn time-aware representations. RFTE \cite{rfte} is a framework to transplant static KGE models to temporal KGE models. It treats the sequence of graphs as a Markov chain. BoxTE \cite{boxte} assumes that time is a filter that helps pick out answers to be correct during certain periods. Therefore, it introduces boxes to represent a set of answer entities to a time-agnostic query.
}

Table \ref{table:t_main_results} shows the effectiveness of DURA. Note that DURA2 as shown in the formulation \eqref{reg:tcomplex_2} does not applicable to TRESCAL since the matrix multiplication and element-wise multiplication are not commutative. Moreover, the N3 regularizer is not applicable to TRESCAL since this model is not CP tensor factorization. Table \ref{table:t_main_results} shows that DURA effectively improve the performance of TComplEx and TRESCAL. \zqzhang{With DURA, these semantic marching models outperform the state-of-the-art distance-based models in terms of MRR.}

\begin{figure}[ht]
\begin{subfigure}{0.5\columnwidth}
  \includegraphics[width=150pt]{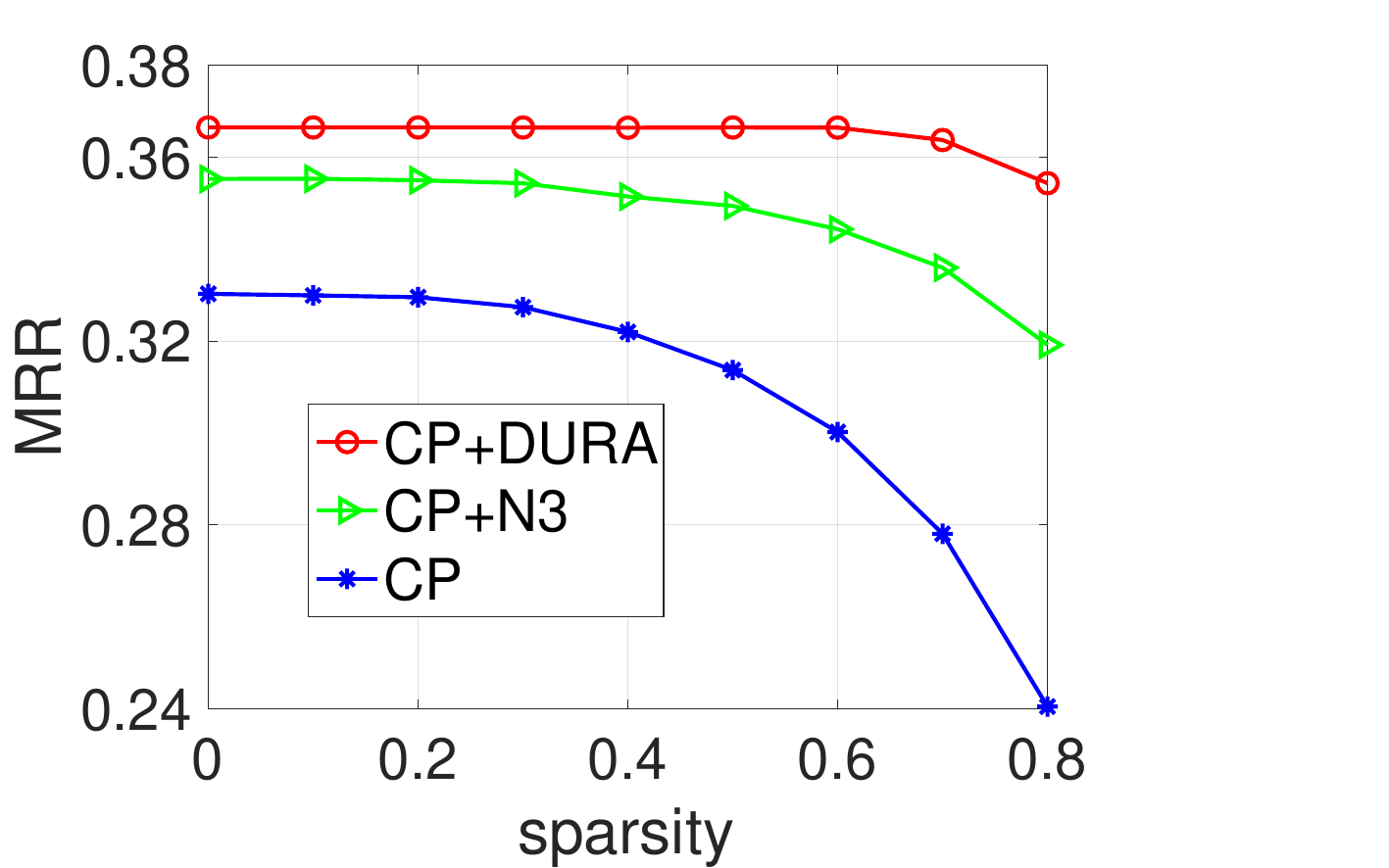}
  
  \caption{CP}\label{fig:cp_sparsity_mrr}
\end{subfigure}
\hspace{-3mm}
\medskip
\begin{subfigure}{0.5\columnwidth}
  \includegraphics[width=161pt]{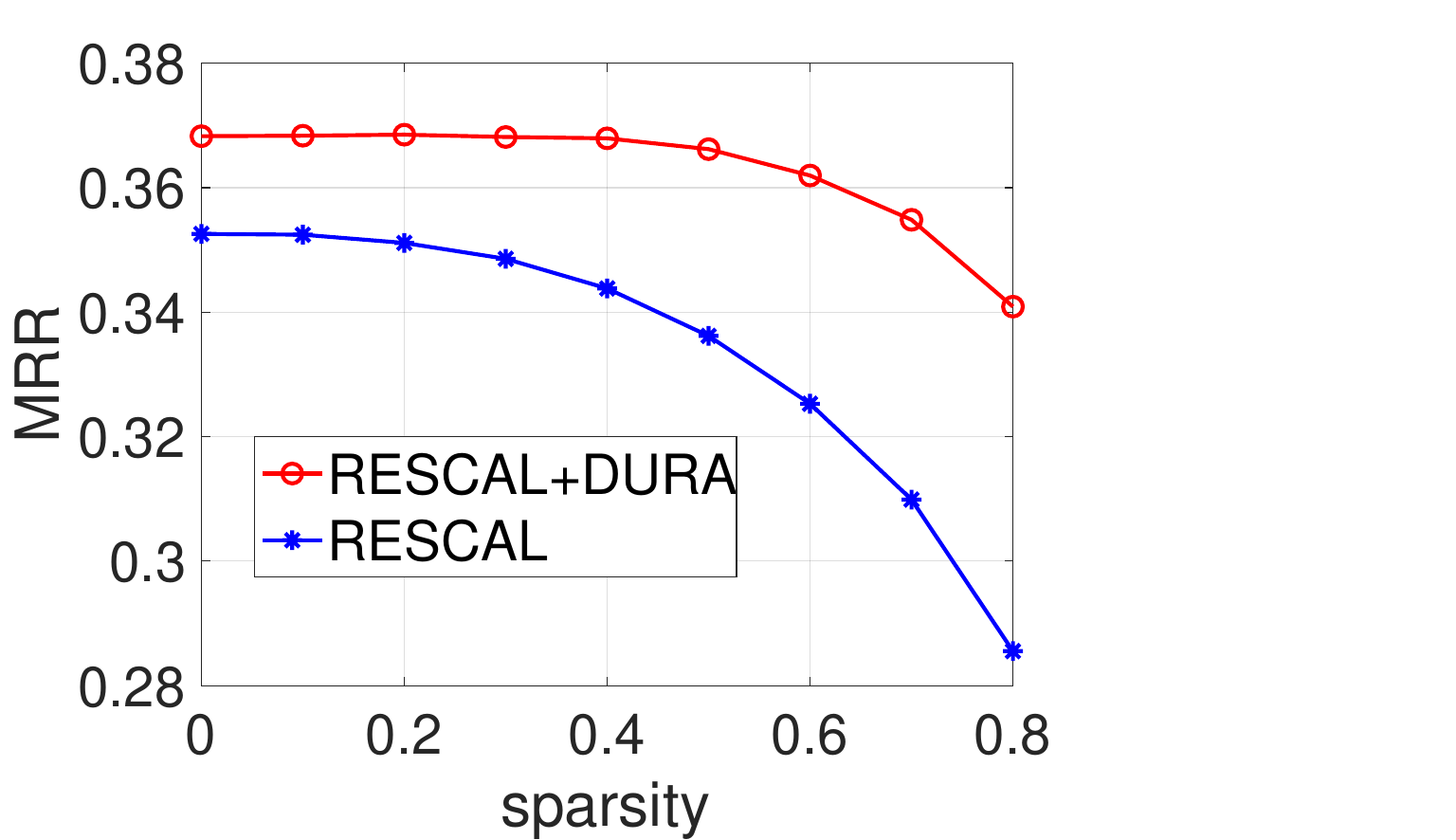}
  \caption{RESCAL}\label{fig:rescal_sparsity_mrr}
\end{subfigure}
\caption{The effect of entity embeddings' $\lambda$-sparsity on MRR. The experiments are conducted on FB15k-237.}
\label{fig:sparsity_mrr}
\end{figure}
\subsection{Sparsity Analysis}\label{sec:sparsity}
As knowledge graphs usually contain billions of entities, the storage of entity embeddings faces severe challenges. Intuitively, if embeddings are sparse, that is, most of the entries are zero, we can store them with less storage. Therefore, the sparsity of the generated entity embeddings becomes crucial for real-world applications. 
Generally, there are few entries of entity embeddings that are exactly equal to 0 after training, which means that it is hard to obtain sparse entity embeddings directly. However, when we score triples using the trained model, the embedding entries with values close to 0 will have minor contributions to the score of a triplet. If we set the embedding entries close to 0 to be exactly 0, we could transform the embeddings into sparse ones. Therefore, there is a trade-off between the sparsity and performance decrement. 
We define the following $\lambda$-sparsity to indicate the proportion of entries that are close to zero:
\begin{align}\label{eqn:sparsity}
    s_{\lambda} = \frac{\sum_{i=1}^{I} \sum_{d=1}^{D}\mathbbm{1}_{\{|x| < \lambda\}}(\textbf{E}_{id})}{I \times D},
\end{align}
where $\textbf{E} \in \mathbb{R}^{I \times D}$ is the entity embedding matrix, $\textbf{E}_{id}$ is the entry in the $i$-th row and $d$-th column of $\textbf{E}$, $I$ is the number of entities, $D$ is the embedding dimension, and $\mathbbm{1}_\mathcal{C}(x)$ is the indicator function taking value of 1 if $x\in\mathcal{C}$ or otherwise $0$. 

\begin{table}[ht]
    \centering
    \caption{Ablation results on the combined and uncombined regularizers. ``-I'' and ``-II'' represents the regularizer defined in formula (\ref{reg:f}) and (\ref{reg:b}), respectively.}
    \label{table:reg_direction_ablation}
        \begin{tabular}{l c c c c}
            \toprule
              &\multicolumn{2}{c}{\textbf{WN18RR}}&  \multicolumn{2}{c}{\textbf{FB15k-237}}\\
             \cmidrule(lr){2-3}
             \cmidrule(lr){4-5}
             & MRR & H@10 & MRR & H@10 \\
            \midrule
            CP         & .438 & .485 & .333 & .508 \\
            CP-DURA-I & .463 & .535 & .331 & .522 \\
            CP-DURA-II & .439 & .492 & .332 & .507 \\
            CP-DURA  & \textbf{.478} & \textbf{.552} & \textbf{.367} & \textbf{.555} \\
            \midrule
            RESCAL         & .455 & .493 & .353 & .528 \\
            RESCAL-DURA-I & .475 & .541 & .341 & .523 \\
            RESCAL-DURA-II & .462 & .524 & .342 & .510\\
            RESCAL-DURA  & \textbf{.498} & \textbf{.577} & \textbf{.368} & \textbf{.550} \\
            \bottomrule
        \end{tabular}
\end{table}

Figure \ref{fig:sparsity_mrr} shows the effect of entity embeddings' $\lambda$-sparsity on MRR.
Following Equation \eqref{eqn:sparsity}, we select entries of which the absolute value are less than a threshold and set them to be 0. Note that for any given $s_{\lambda}$, we can always find a proper threshold $\lambda$ to approximate it, as the formula is increasing with respect to $\lambda$.
Results in the figure show that DURA causes much gentler performance decrement as the embedding sparsity increases. In Figure \ref{fig:cp_sparsity_mrr}, incorporated with DURA, CP maintains MRR of 0.366 unchanged even when 60\% entries are set to 0. More surprisingly, when the sparsity reaches 70\%, CP-DURA can still outperform CP-N3 with zero sparsity. For RESCAL, when set 80\% entries to be 0, RESCAL+DURA still has the MRR of 0.341, which significantly outperforms vanilla RESCAL, whose MRR has decreased from 0.352 to 0.286. In a word, incorporating with DURA regularizer, the performance of CP and RESCAL remains comparable to the state-of-the-art models, even when 70\% of entity embeddings' entries are set to 0. 

Following \cite{sparse}, we store the sparse embedding matrices using compressed sparse row (CSR) or compressed sparse column (CSC) format. Experiments show that DURA brings about 65\% fewer storage costs for entity embeddings when 70\% of the entries are set to 0.

\begin{figure*}[t]
    \centering 
\begin{subfigure}{0.55\columnwidth}
  \includegraphics[width=130pt]{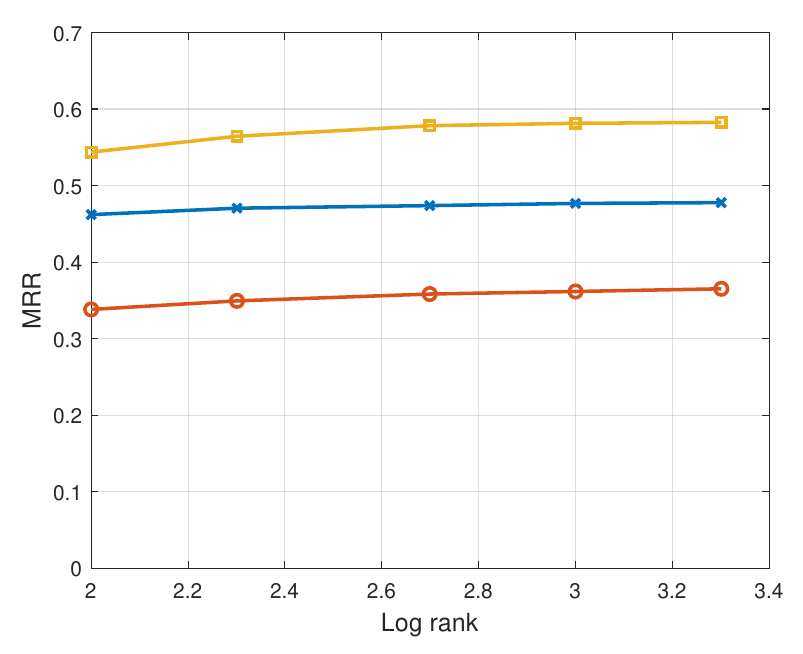}
  \caption{Rank for CP.}
\end{subfigure}\hfil 
\medskip
\begin{subfigure}{0.55\columnwidth}
  \includegraphics[width=130pt]{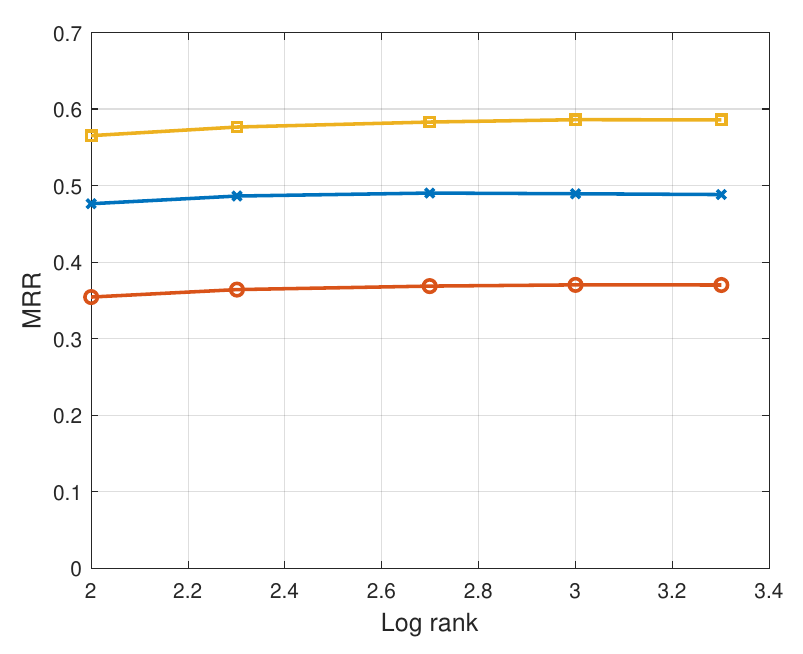}
  \caption{Rank for ComplEx.}
\end{subfigure}\hfil 
\medskip
\begin{subfigure}{0.55\columnwidth}
  \includegraphics[width=130pt]{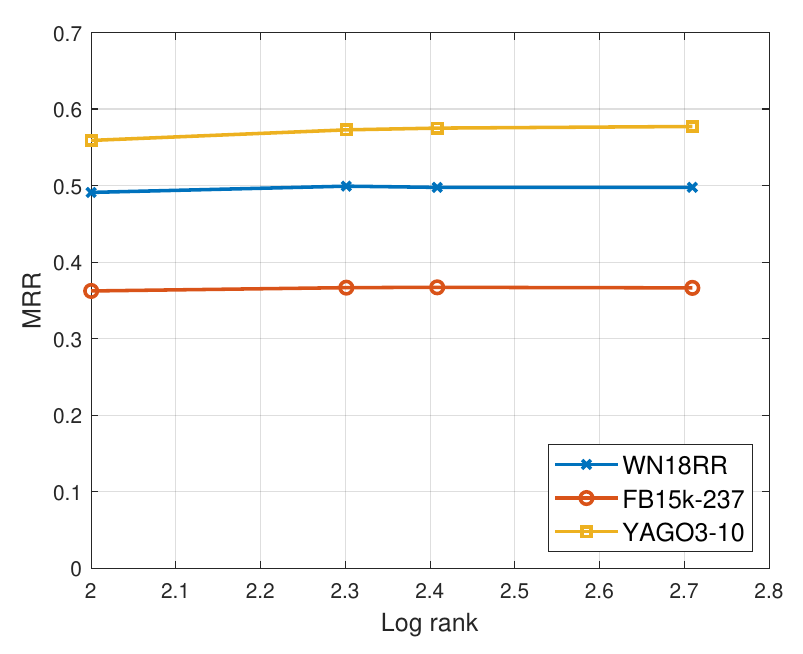}
  \caption{Rank for RESCAL.}
\end{subfigure}\hfil 

\begin{subfigure}{0.55\columnwidth}
  \includegraphics[width=130pt]{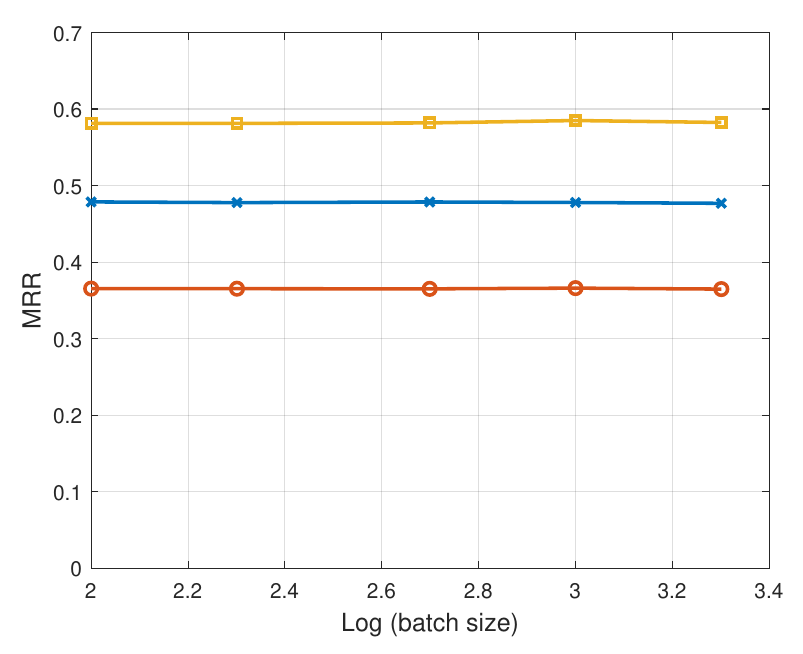}
  \caption{Batch size  for CP.}
\end{subfigure}\hfil 
\medskip
\begin{subfigure}{0.55\columnwidth}
  \includegraphics[width=130pt]{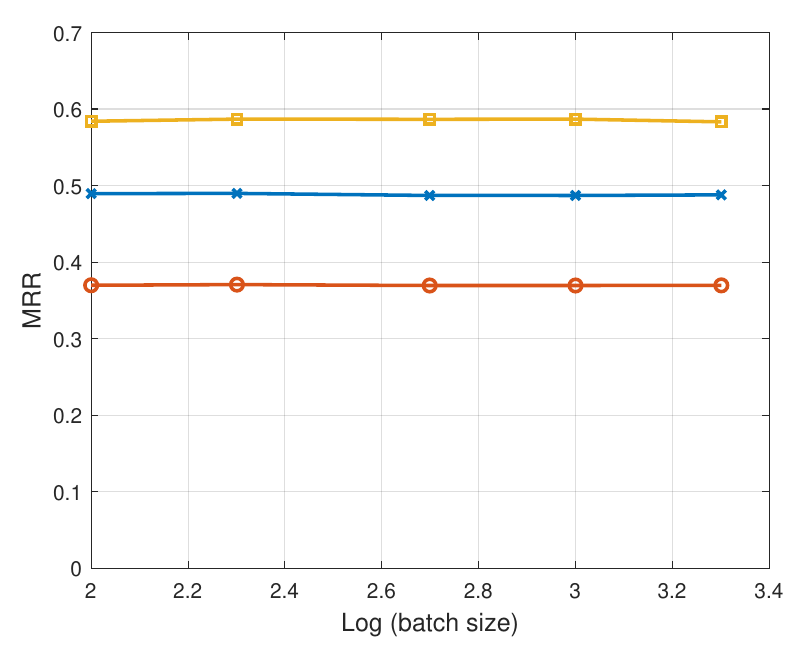}
  \caption{Batch size  for ComplEx.}
\end{subfigure}\hfil 
\medskip
\begin{subfigure}{0.55\columnwidth}
  \includegraphics[width=130pt]{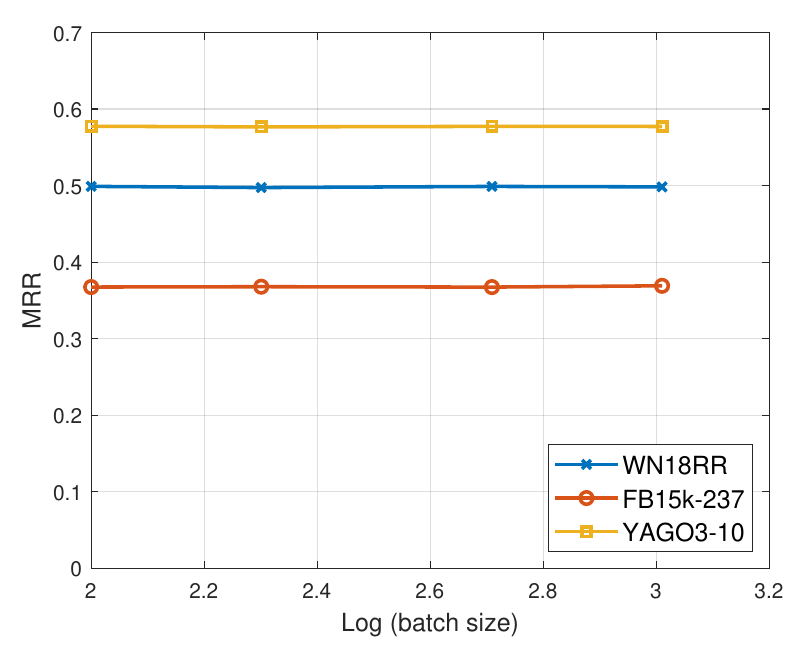}
  \caption{Batch size for RESCAL.}
\end{subfigure}\hfil 
\caption{
\modifyy{
Sensitivity to hyper-parameters of DURA. The x-axis is the logarithm of each hyper-parameter based on 10 and The y-axis is mean reciprocal rank (MRR) on test data.}
}
\label{fig:sens_dura_bsize}
\end{figure*}

\begin{figure}[!ht]
    \centering 
\begin{subfigure}{0.24\textwidth}
  \includegraphics[width=135pt]{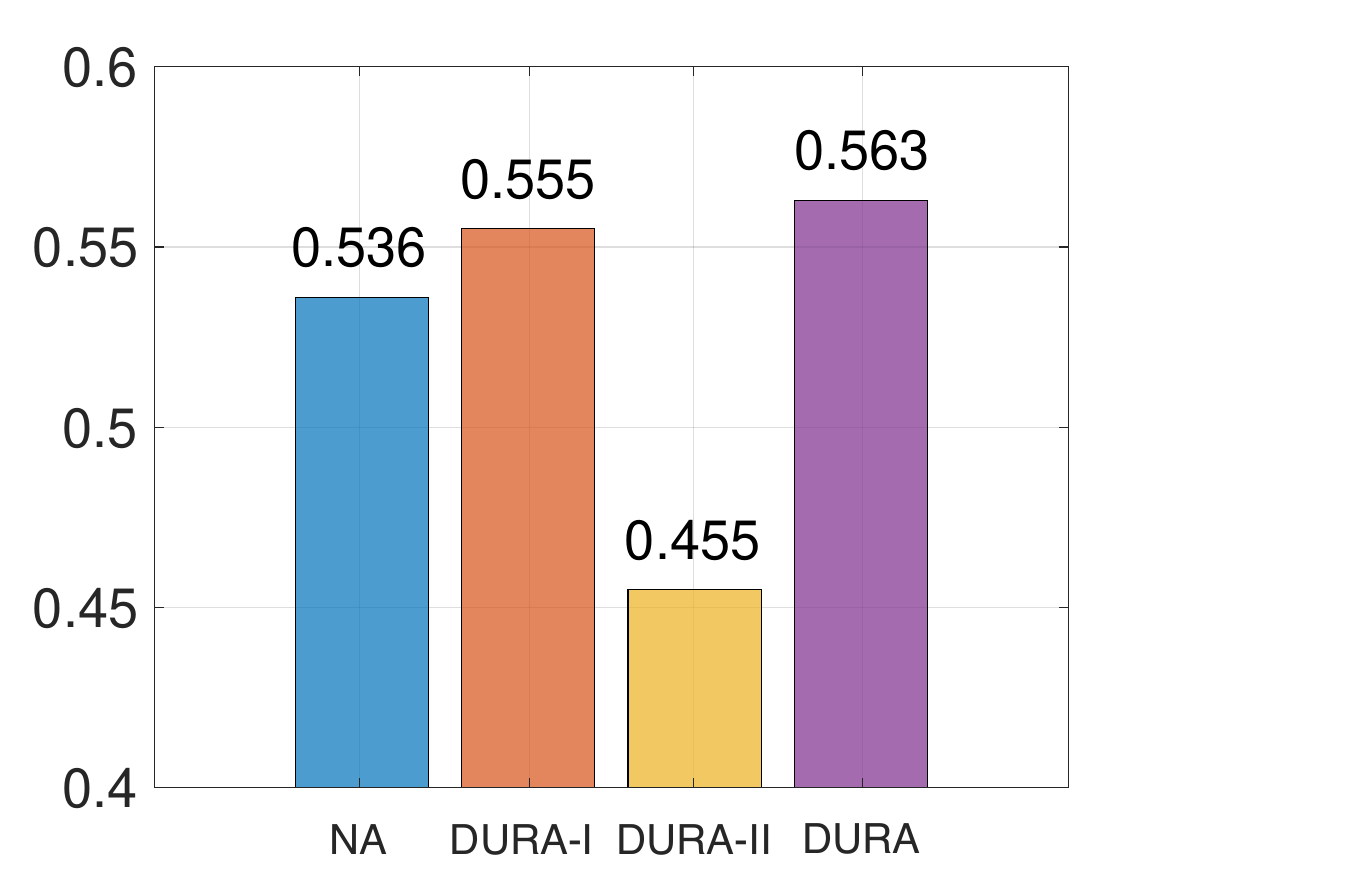}
  \label{fig:1_to_1_mrr}
  \vskip -0.1in
  \caption{1-1 relations}
\end{subfigure}\hfil 
\hspace{-2mm}
\medskip
\begin{subfigure}{0.24\textwidth}
  \includegraphics[width=135pt]{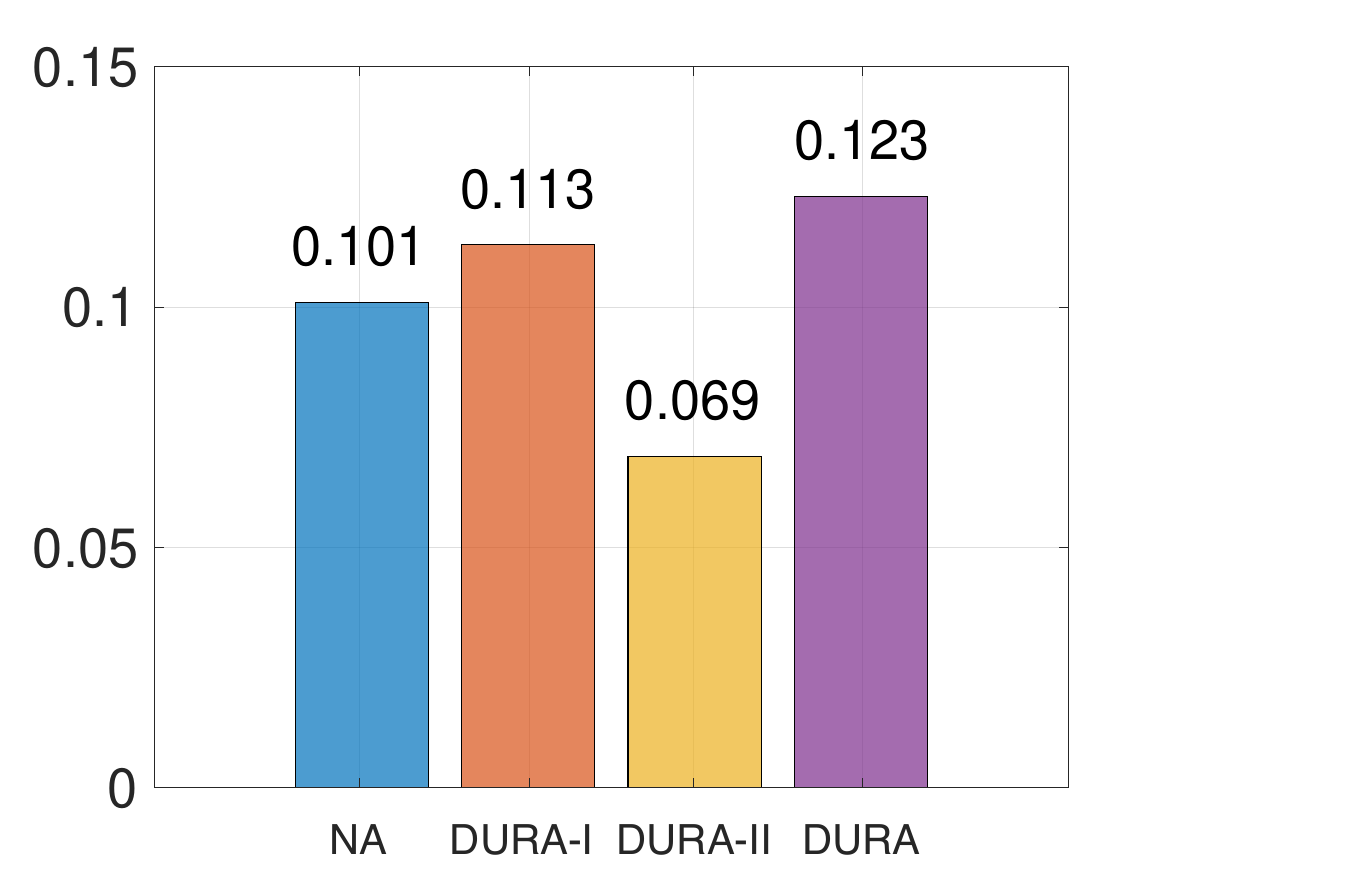}
  \label{fig:1_to_n_mrr}
  \vskip -0.1in
  \caption{1-N relations}
\end{subfigure}
\medskip
\begin{subfigure}{0.24\textwidth}
  \includegraphics[width=135pt]{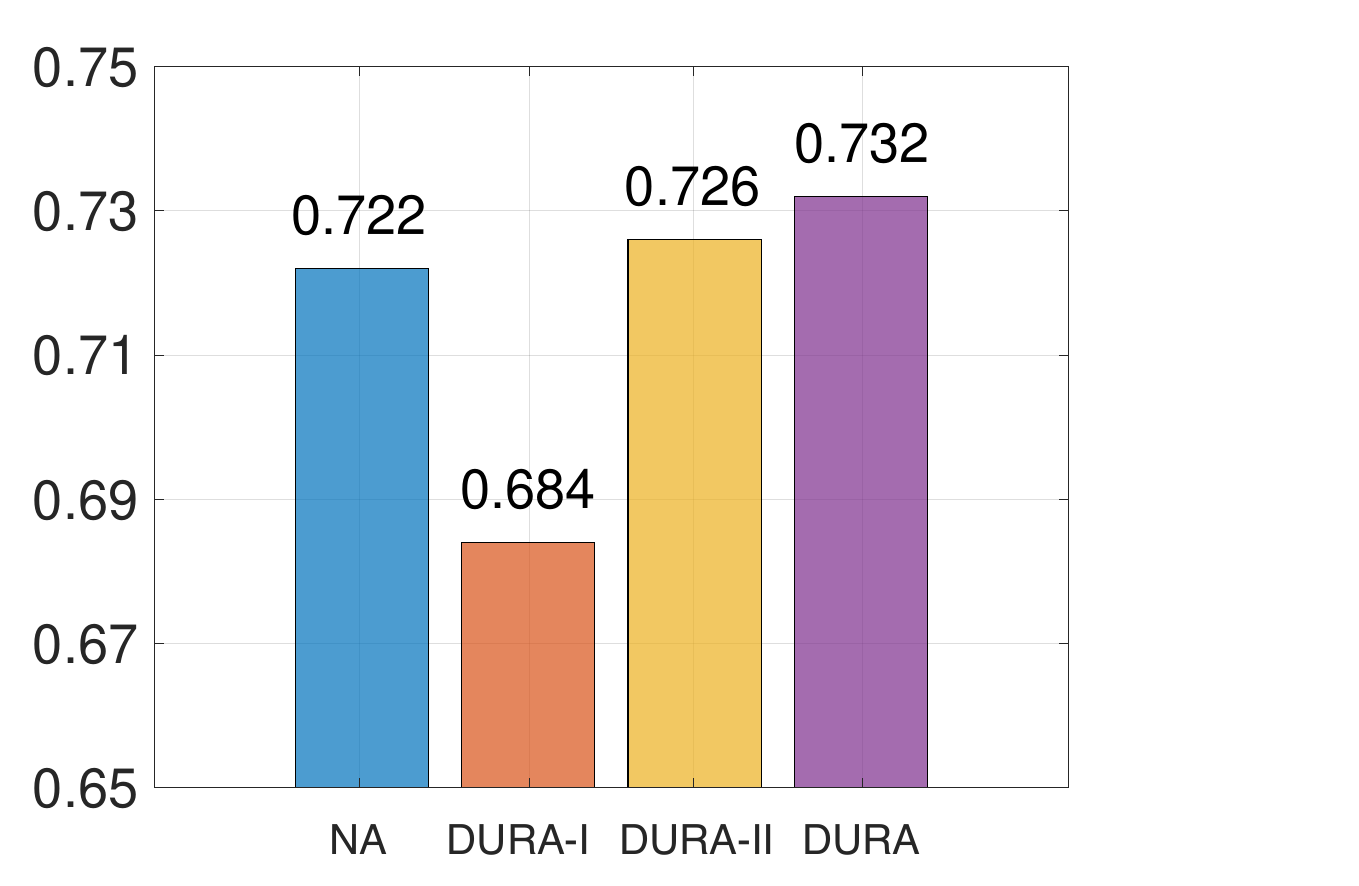}
  \label{fig:n_to_1_mrr}
  \vskip -0.1in
  \caption{N-1 relations}
\end{subfigure}\hfil 
\hspace{-2mm}
\medskip
\begin{subfigure}{0.24\textwidth}
  \includegraphics[width=135pt]{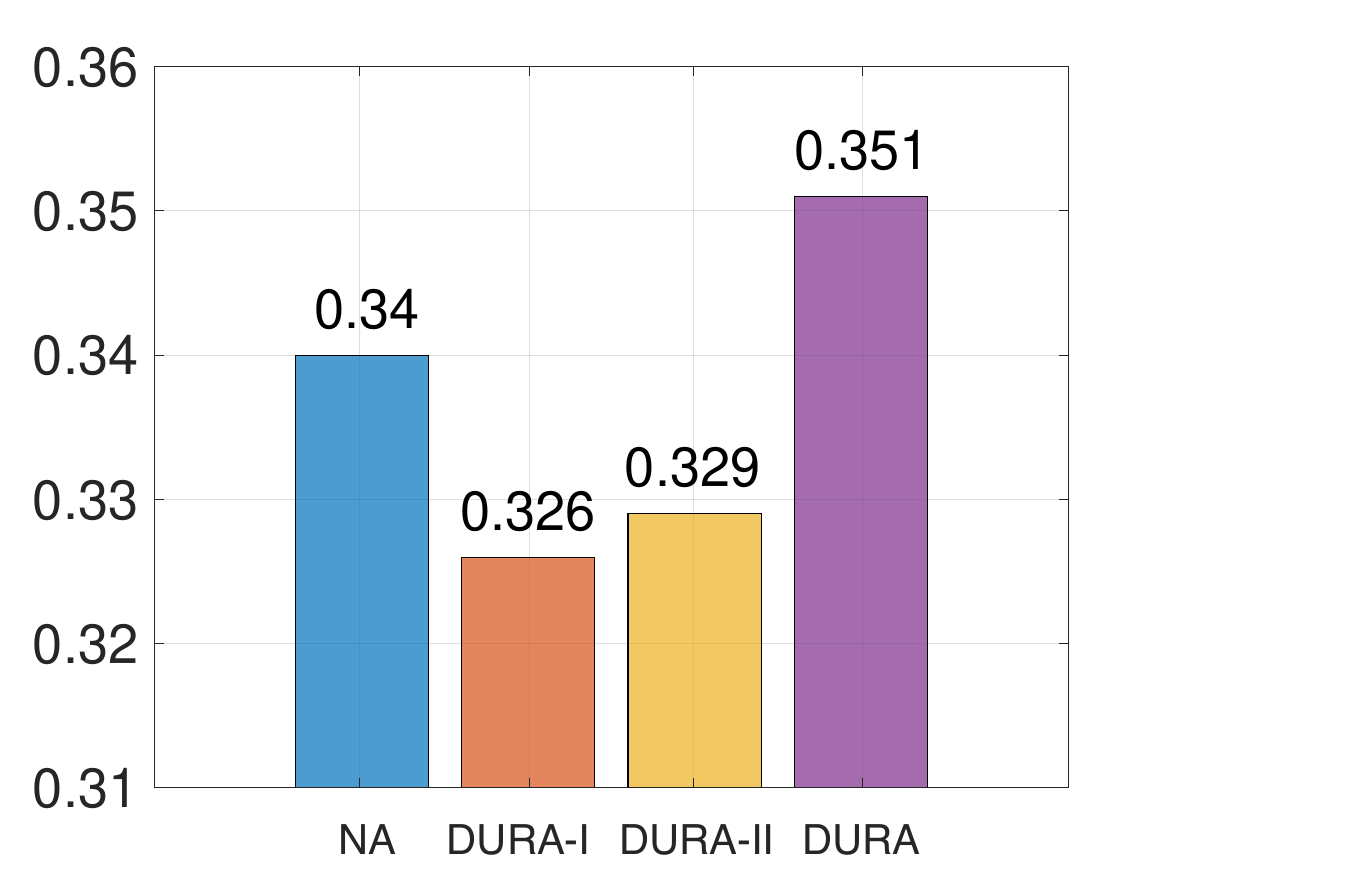}
  \label{fig:n_to_n_mrr}
  \vskip -0.1in
  \caption{N-N relations}
\end{subfigure}
\vskip -0.1in
\caption{Average MRR of RESCAL with different regularizers on different types of relations. The dataset is FB15k-237. 
}
\label{fig:n_to_n_mrrs}
\end{figure}

\subsection{Ablation Study}
We conduct ablation studies to show the effectiveness of the combination of (\ref{reg:f}) and (\ref{reg:b}). The results are shown in Table \ref{table:reg_direction_ablation}, where ``-I'' and ``-II'' represent the regularizer defined in formula (\ref{reg:f}) and (\ref{reg:b}), respectively.
On WN18RR, the uncombined regularizer DURA-I and DURA-II can also improve the performance for the semantic matching models, but the combination brings further improvements.
On FB15k-237, DURA-I and DURA-II even lead to a decrement in terms of MRR. One possible reason is that FB15k-237 has a large number of 1-N and N-1 relations (refer to \cite{transh} for the definitions relation types) and they have different preferences for the uncombined regularizers. 
Figure \ref{fig:n_to_n_mrrs} shows the performance of RESCAL with different regularizers on different relation types. Compared with vanilla RESCAL, DURA-I only improves the MRR for 1-1 and 1-N relations, while DURA-II only improves the MRR for N-1 relations, which validates our analysis. DURA significantly outperforms both DURA-I and DURA-II on all relation types.

\begin{figure}[ht]
\begin{subfigure}{0.24\textwidth}
\centering
  \includegraphics[width=100pt]{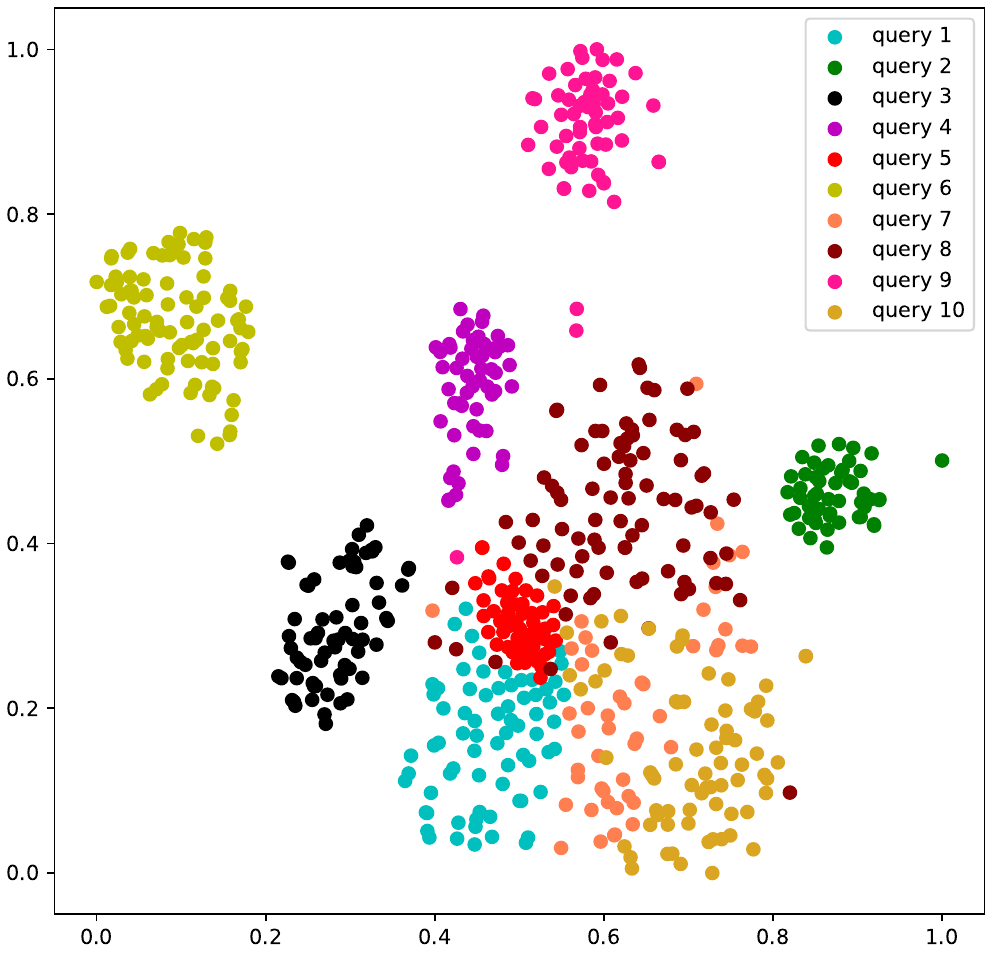}
  \caption{CP}\label{fig:na_tsne}
\end{subfigure}
\begin{subfigure}{0.24\textwidth}
\centering
  \includegraphics[width=100pt]{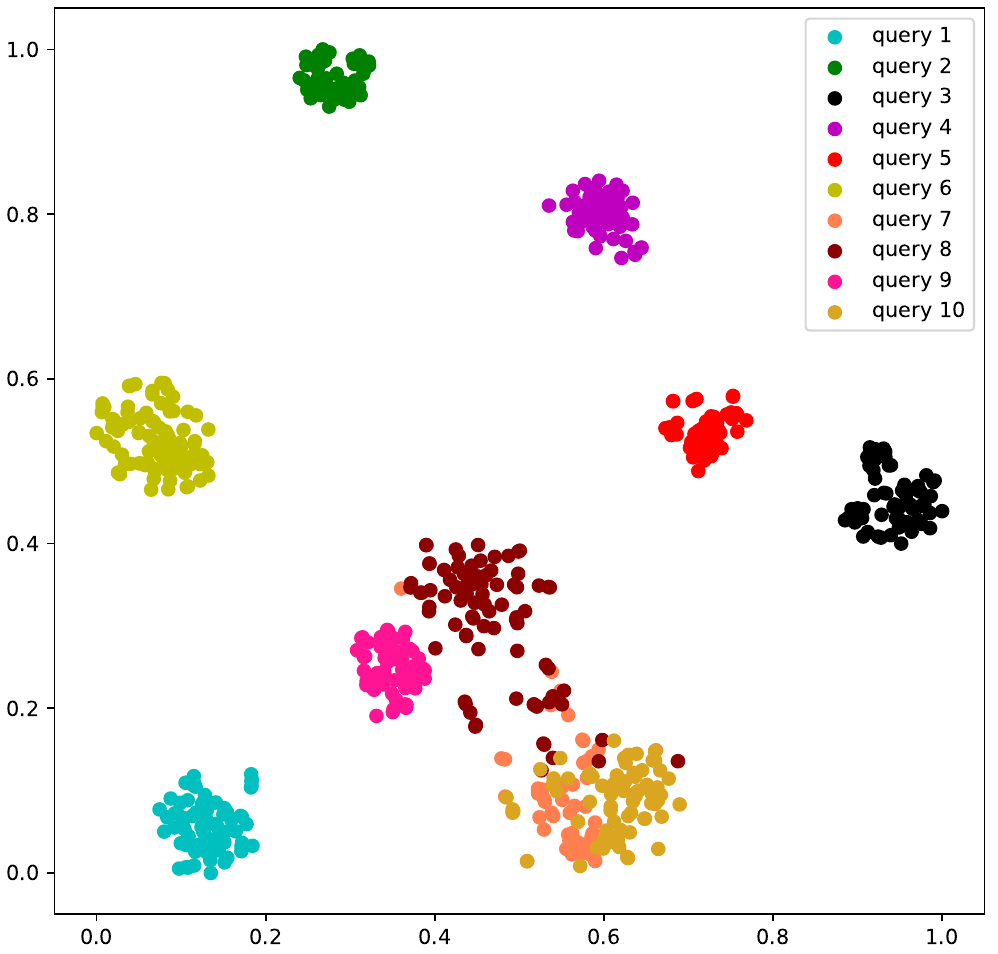}
  \caption{CP-DURA}\label{fig:dura_tsne}
\end{subfigure}
\caption{Visualization of tail entity embeddings using T-SNE. A point represents a tail entity. Points in the same color represent tail entities that have the same $(h_r,r_j)$ context.}
\label{fig:tsne}
\end{figure}

\begin{figure*}[!h]
    \centering 
\begin{subfigure}{0.55\columnwidth}
  \includegraphics[width=130pt]{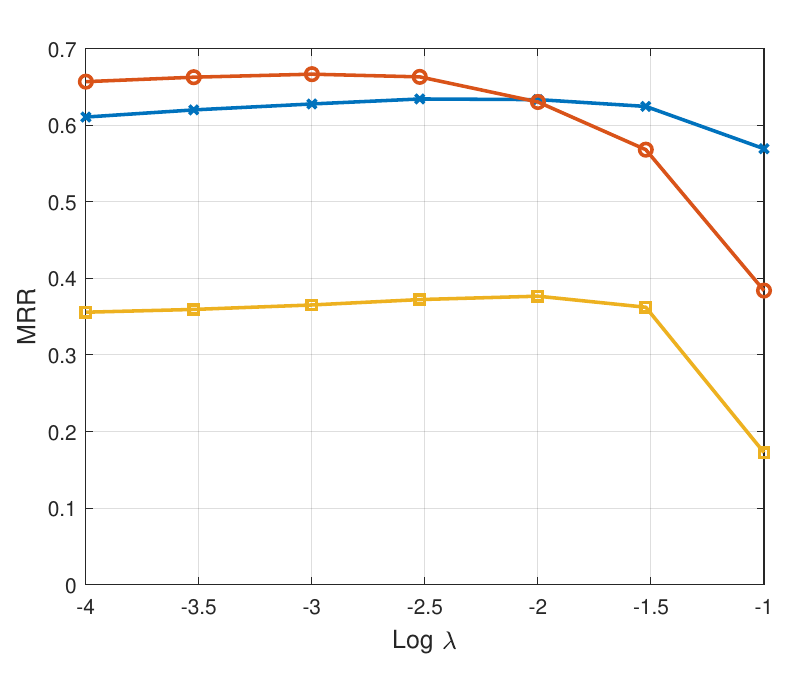}
  \caption{$\lambda$ of DURA1 for TComplEx.}
\end{subfigure}\hfil 
\medskip
\begin{subfigure}{0.55\columnwidth}
  \includegraphics[width=130pt]{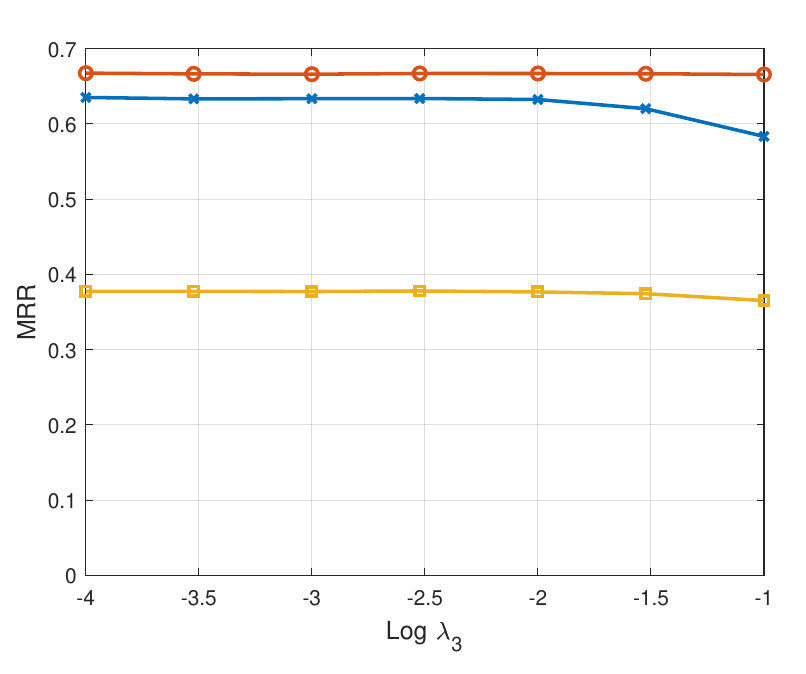}
  \caption{$\lambda_3$ of DURA1 for TComplEx.}
\end{subfigure}\hfil 
\medskip
\begin{subfigure}{0.55\columnwidth}
  \includegraphics[width=130pt]{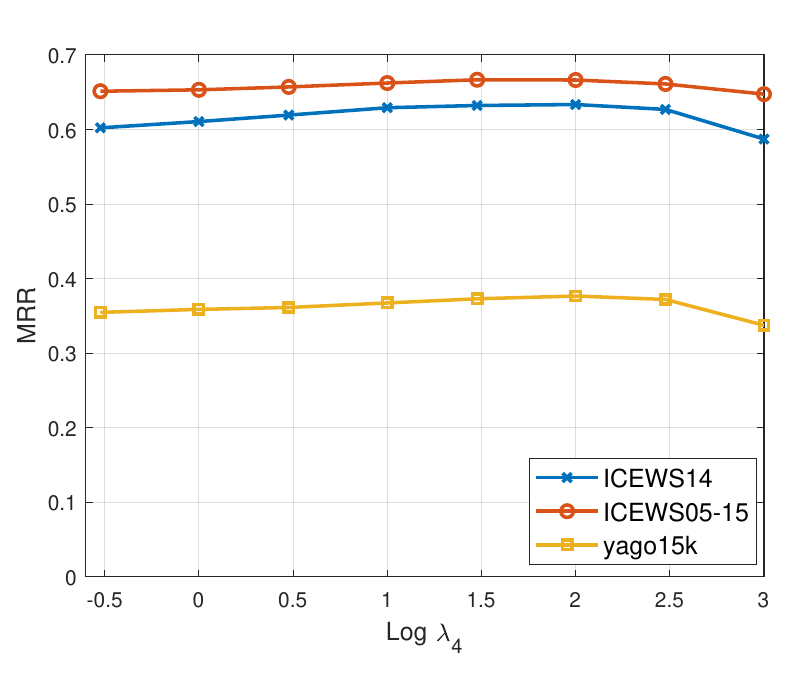}
  \caption{$\lambda_4$ of DURA1 for TComplEx.}
\end{subfigure}\hfil

\begin{subfigure}{0.55\columnwidth}
  \includegraphics[width=130pt]{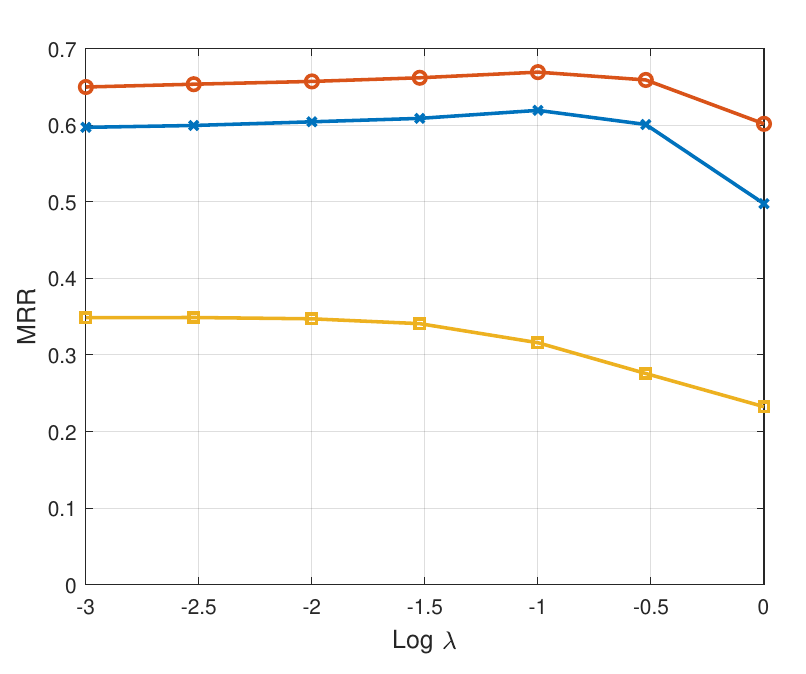}
  \caption{$\lambda$ of DURA2 for TComplEx.}
\end{subfigure}\hfil 
\medskip
\begin{subfigure}{0.55\columnwidth}
  \includegraphics[width=130pt]{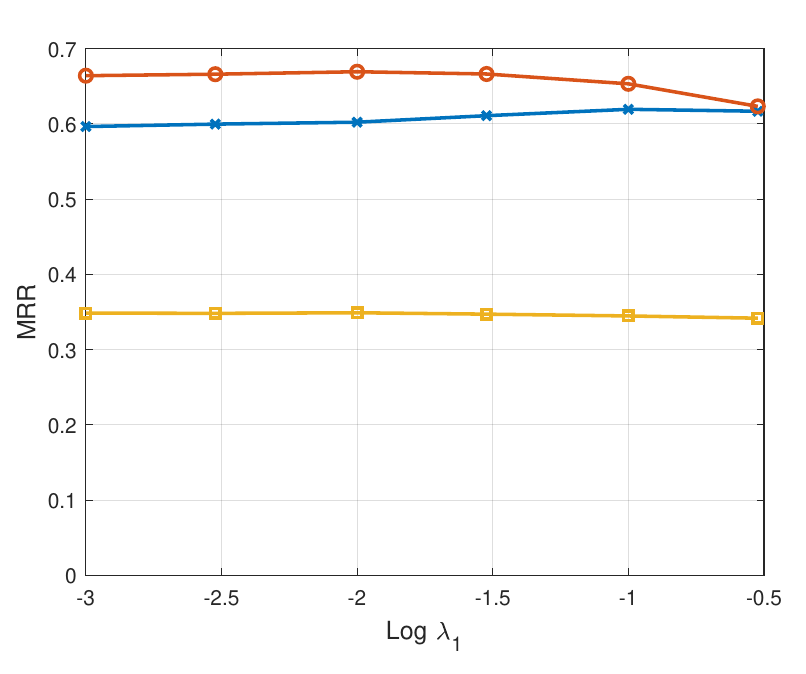}
  \caption{$\lambda_1$ of DURA2 for TComplEx.}
\end{subfigure}\hfil 
\medskip
\begin{subfigure}{0.55\columnwidth}
  \includegraphics[width=130pt]{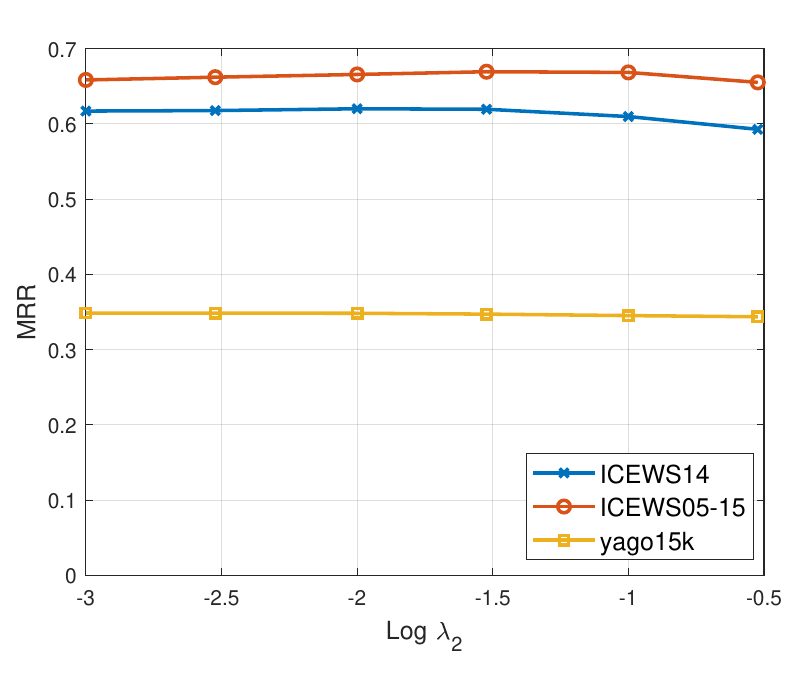}
  \caption{$\lambda_2$ of DURA2 for TComplEx.}
\end{subfigure}\hfil

\begin{subfigure}{0.55\columnwidth}
  \includegraphics[width=130pt]{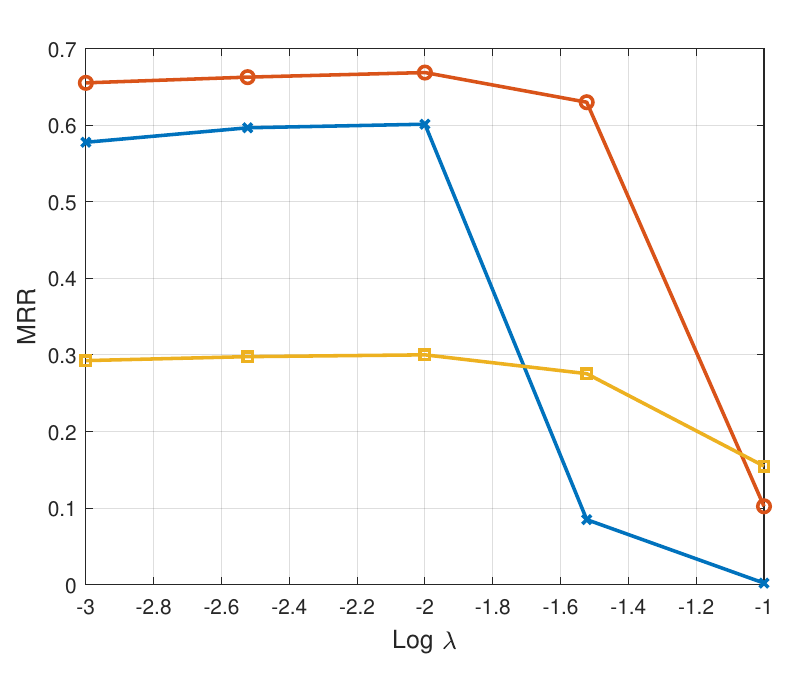}
  \caption{$\lambda$ of DURA1 for TRESCAL}
\end{subfigure}\hfil 
\medskip
\begin{subfigure}{0.55\columnwidth}
  \includegraphics[width=130pt]{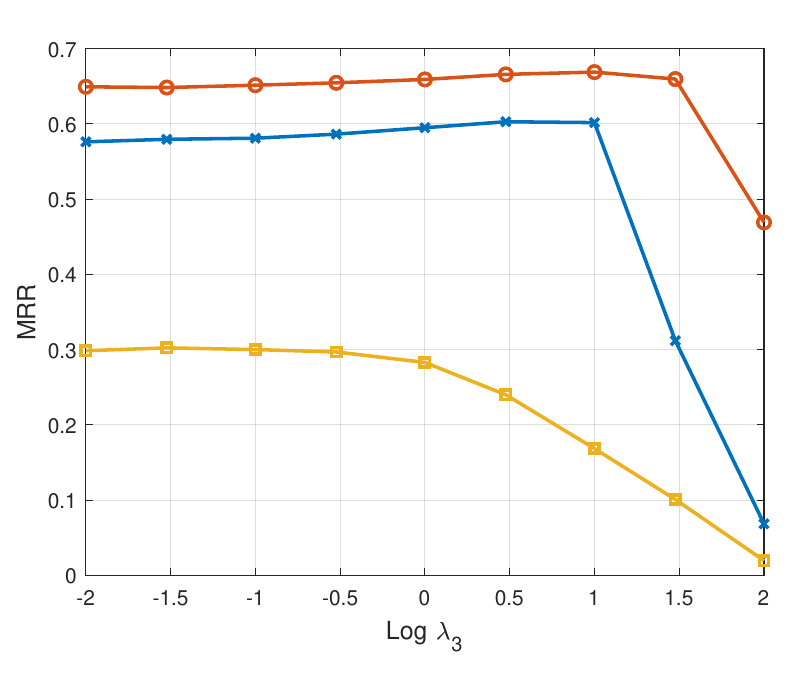}
  \caption{$\lambda_3$ of DURA1 for TRESCAL}
\end{subfigure}\hfil 
\medskip
\begin{subfigure}{0.55\columnwidth}
  \includegraphics[width=130pt]{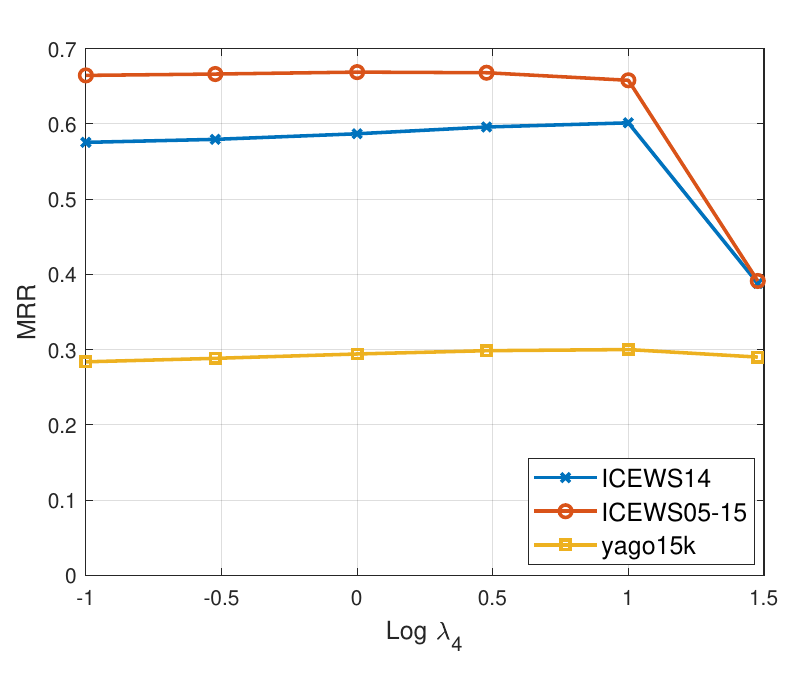}
  \caption{$\lambda_4$ of DURA1 for TRESCAL}
\end{subfigure}\hfil
\caption{
\modifyy{Sensitivity to hyper-parameters of DURA1 and DURA2. The x-axis is the logarithm of each hyper-parameter based on 10 and the y-axis is mean reciprocal rank (MRR) on test data.}
}
\label{fig:sens_tdura}
\end{figure*}

\subsection{Visualization}\label{sec:vis}
We visualize the entity embeddings using T-SNE \cite{tsne} to show that DURA encourages entities with similar semantics to have similar embeddings. Without loss of generality, we use tail entities for visualization.

Suppose that $(\head_i, r_j)$ is a \textit{query}, where $\head_i$ and $r_j$ are head entities and relations, respectively. An entity $\tail_k$ is an \textit{answer} to a query $(\head_i, r_j)$ if $(\head_i, r_j,\tail_k)$ is valid. We randomly selected 10 queries in FB15k-237, each of which has more than 50 answers. Then, we use T-SNE to visualize the answers' embeddings generated by CP and CP-DURA. Figure \ref{fig:tsne} shows the visualization results. Each entity is represented by a 2D point and points in the same color represent tail entities with the same $(\head_i,r_j)$ context (i.e. query). Figure \ref{fig:tsne} shows that, with DURA, entities with the same $(\head_i, r_j)$ contexts are indeed being assigned more similar representations, which verifies the claims in Section \ref{sec:motivation}.



\modifyy{
\subsection{Sensitivity to  Hyper-Parameters}
For static KGE, the hyper-parameters mainly include embedding sizes $k$, batch sizes $b$, and regularization coefficients $\lambda$, $\lambda_i$ ($i=1,2$); for temporal KGE, the hyper-parameters mainly include regularization coefficients $\lambda$, $\lambda_i$ ($i=1,2,3,4)$. We provide the performance curves on different datasets for these hyper-parameters. Note that when exploring the effect of a specific hyper-parameter, we fix the other hyper-parameters as their best values.

As shown in Figure \ref{fig:sens_dura_bsize}, the performance is stable when the embedding and batch sizes vary. Figure \ref{fig:sens_dura} demonstrates that the performance of static KGE is relatively insensitive to $\lambda_i$. However, when the regularization coefficients $\lambda$ are too large, the performance drops quickly. It is expectable since too strong regularization will weaken models' ability to model complex relations. 

As shown in Figure \ref{fig:sens_tdura}, the performance of temporal KGE is relatively insensitive to $\lambda$ and $\lambda_i$ when the KGE model is TComplEx.
However, the performance of TRESCAL drops quickly when these regularization coefficients are large. It suggests that we should choose hyperparameters for TRESCAL carefully.
}

\section{Conclusion}
We propose a widely applicable and effective regularizer---namely, DURA---for semantic matching knowledge graph embedding models. The formulation of DURA is based on the observation that, for an existing semantic matching KGE model (primal), there is often another distance based KGE model (dual) closely associated with it. Extensive experiments show that DURA yields consistent and significant improvements on both static and temporal knowledge graph embedding benchmark datasets. 

\section*{Acknowledgment}
This work was supported in part by National Science Foundations of China grants 61822604, U19B2026, 61836006, and 62021001, and the Fundamental Research Funds for the Central Universities grant WK3490000004.

\begin{figure*}[!ht]
    \centering 
\begin{subfigure}{0.55\columnwidth}
  \includegraphics[width=130pt]{imgs//exp_DURA/CP_rank.pdf}
  \caption{Rank for CP.}
\end{subfigure}\hfil 
\medskip
\begin{subfigure}{0.55\columnwidth}
  \includegraphics[width=130pt]{imgs//exp_DURA/ComplEx_rank.pdf}
  \caption{Rank  for ComplEx.}
\end{subfigure}\hfil 
\medskip
\begin{subfigure}{0.55\columnwidth}
  \includegraphics[width=130pt]{imgs//exp_DURA/RESCAL_rank.pdf}
  \caption{Rank for RESCAL.}
\end{subfigure}\hfil 

\begin{subfigure}{0.55\columnwidth}
  \includegraphics[width=130pt]{imgs//exp_DURA/CP_batchsize.pdf}
  \caption{Batch size for CP.}
\end{subfigure}\hfil 
\medskip
\begin{subfigure}{0.55\columnwidth}
  \includegraphics[width=130pt]{imgs//exp_DURA/ComplEx_batchsize.pdf}
  \caption{Batch size for ComplEx.}
\end{subfigure}\hfil 
\medskip
\begin{subfigure}{0.55\columnwidth}
  \includegraphics[width=130pt]{imgs//exp_DURA/RESCAL_batchsize.pdf}
  \caption{Batch size for RESCAL.}
\end{subfigure}\hfil 

\begin{subfigure}{0.55\columnwidth}
  \includegraphics[width=130pt]{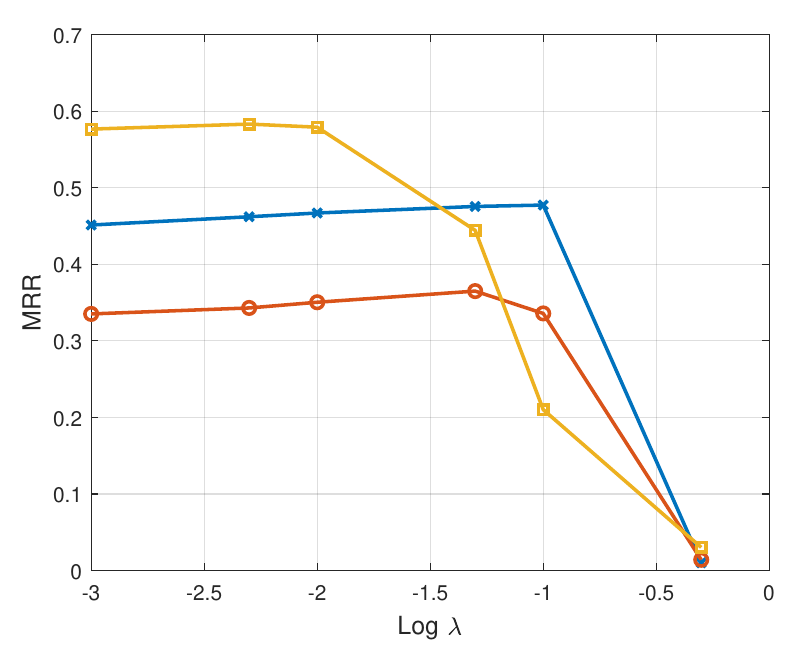}
  \caption{$\lambda$ for CP.}
\end{subfigure}\hfil 
\medskip
\begin{subfigure}{0.55\columnwidth}
  \includegraphics[width=130pt]{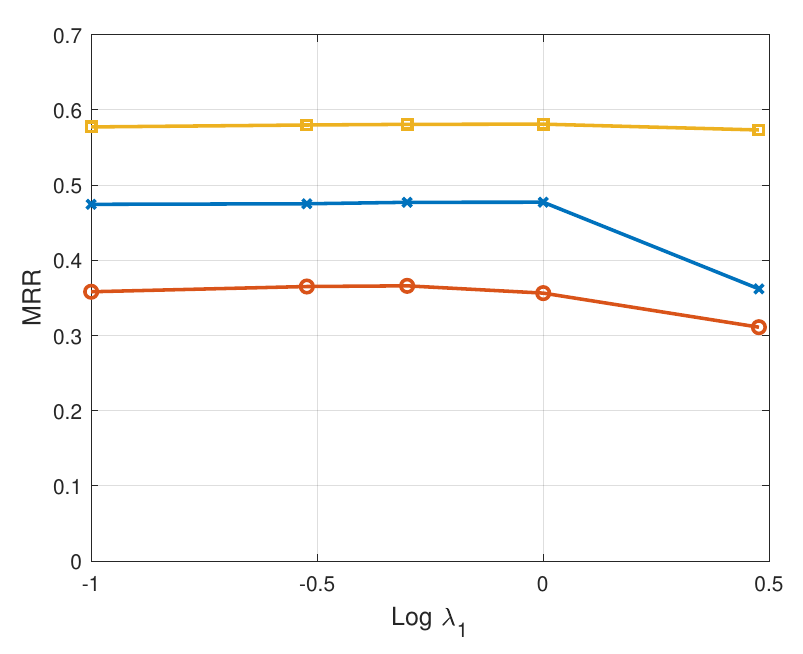}
  \caption{$\lambda_1$ for CP.}
\end{subfigure}\hfil 
\medskip
\begin{subfigure}{0.55\columnwidth}
  \includegraphics[width=130pt]{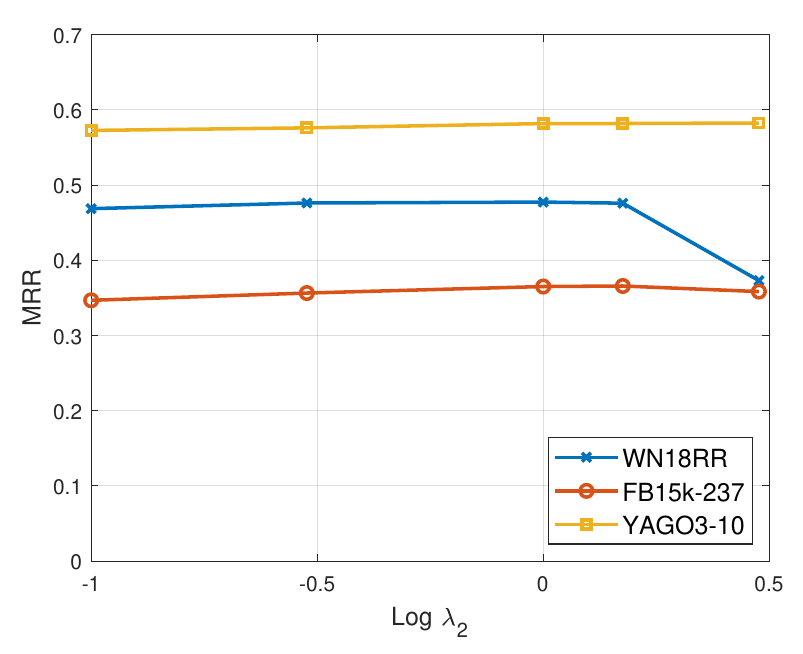}
  \caption{$\lambda_2$ for CP.}
\end{subfigure}\hfil 

\begin{subfigure}{0.55\columnwidth}
  \includegraphics[width=130pt]{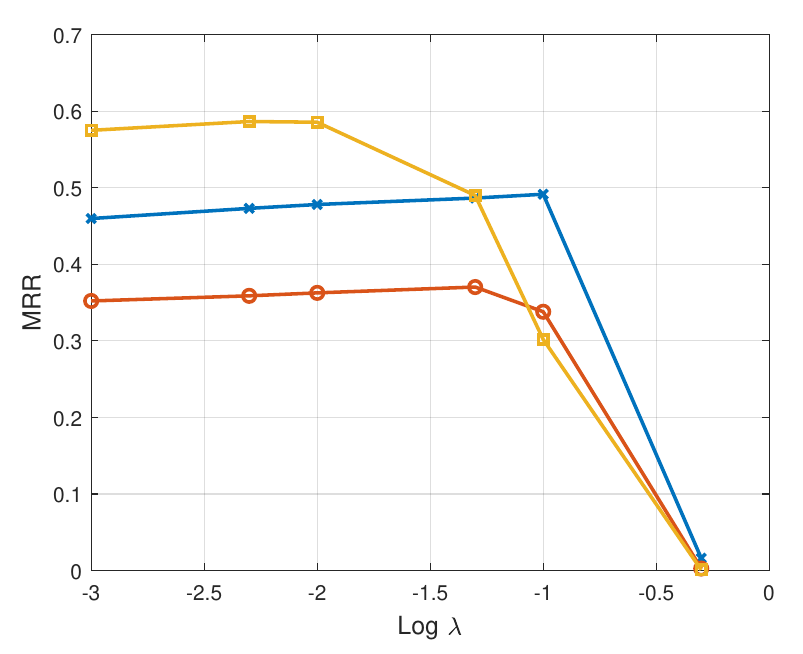}
  \caption{$\lambda$ for ComplEx.}
\end{subfigure}\hfil 
\medskip
\begin{subfigure}{0.55\columnwidth}
  \includegraphics[width=130pt]{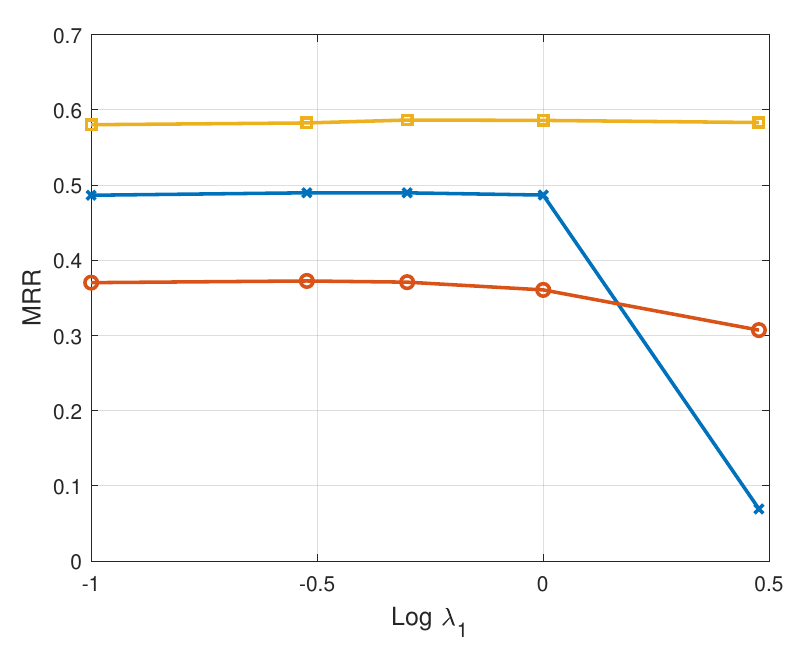}
  \caption{$\lambda_1$ for ComplEx.}
\end{subfigure}\hfil 
\medskip
\begin{subfigure}{0.55\columnwidth}
  \includegraphics[width=130pt]{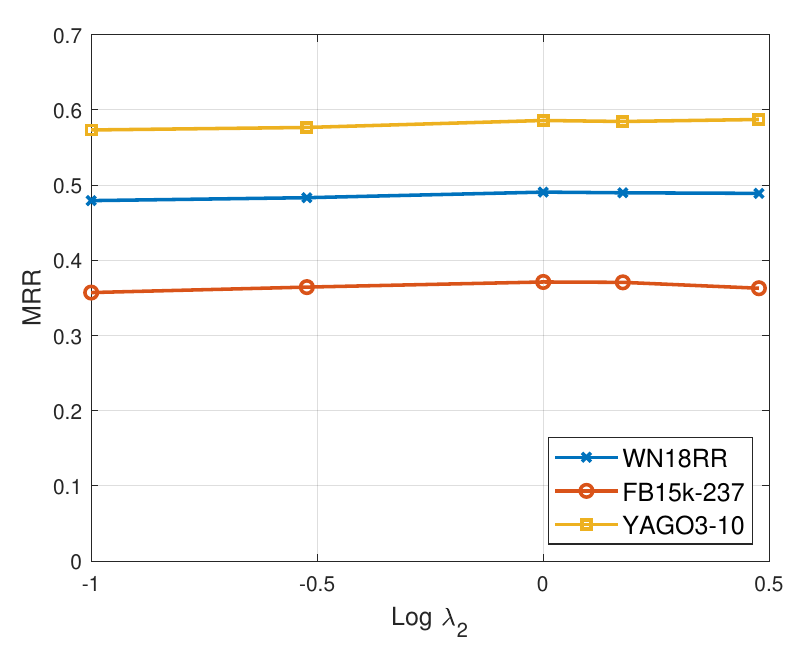}
  \caption{$\lambda_2$ for ComplEx.}
\end{subfigure}\hfil 

\begin{subfigure}{0.55\columnwidth}
  \includegraphics[width=130pt]{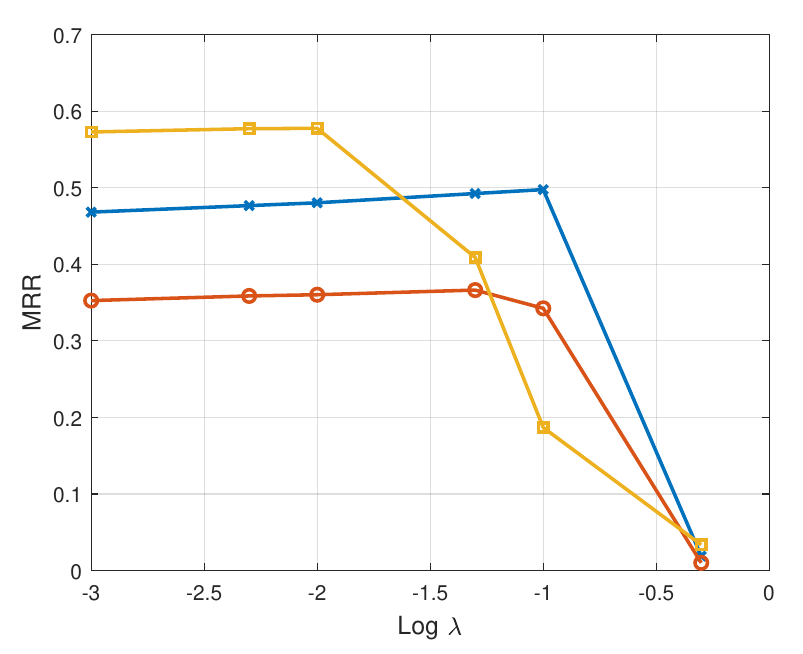}
  \caption{$\lambda$ for RESCAL.}
\end{subfigure}\hfil 
\medskip
\begin{subfigure}{0.55\columnwidth}
  \includegraphics[width=130pt]{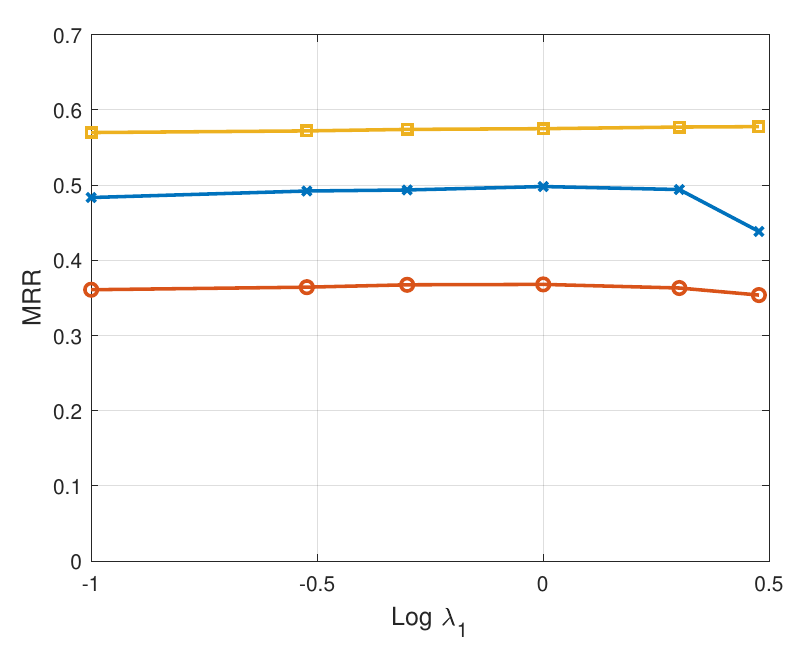}
  \caption{$\lambda_1$ for RESCAL.}
\end{subfigure}\hfil 
\medskip
\begin{subfigure}{0.55\columnwidth}
  \includegraphics[width=130pt]{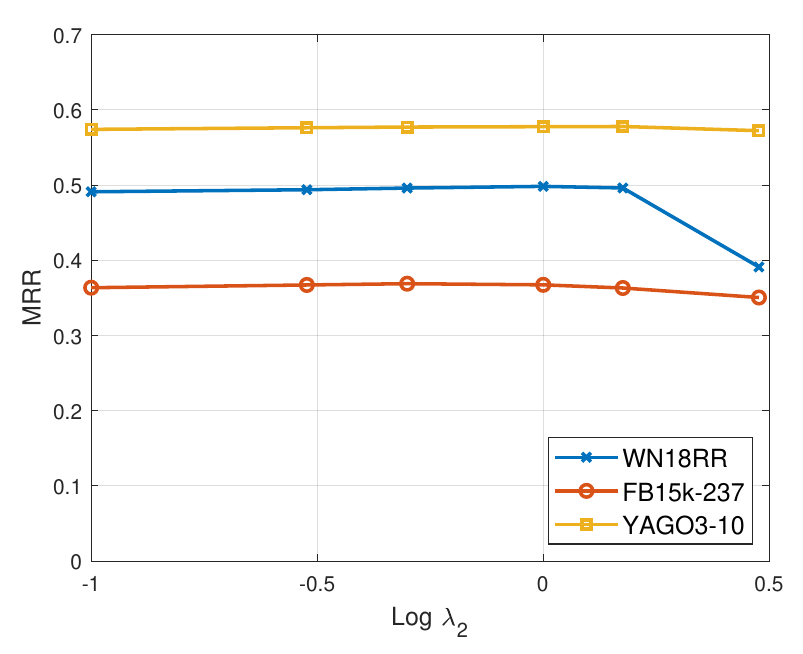}
  \caption{$\lambda_2$ for RESCAL.}
\end{subfigure}\hfil 
\caption{
\modifyy{Sensitivity to hyper-parameters of DURA. The x-axis is the logarithm of each hyper-parameter based on 10 and the y-axis is mean reciprocal rank (MRR) on test data.}}
\label{fig:sens_dura}
\end{figure*}

\bibliographystyle{IEEEtran}
\bibliography{kge}

\begin{IEEEbiography}[{\includegraphics[width=1in,height=1.25in,clip,keepaspectratio]{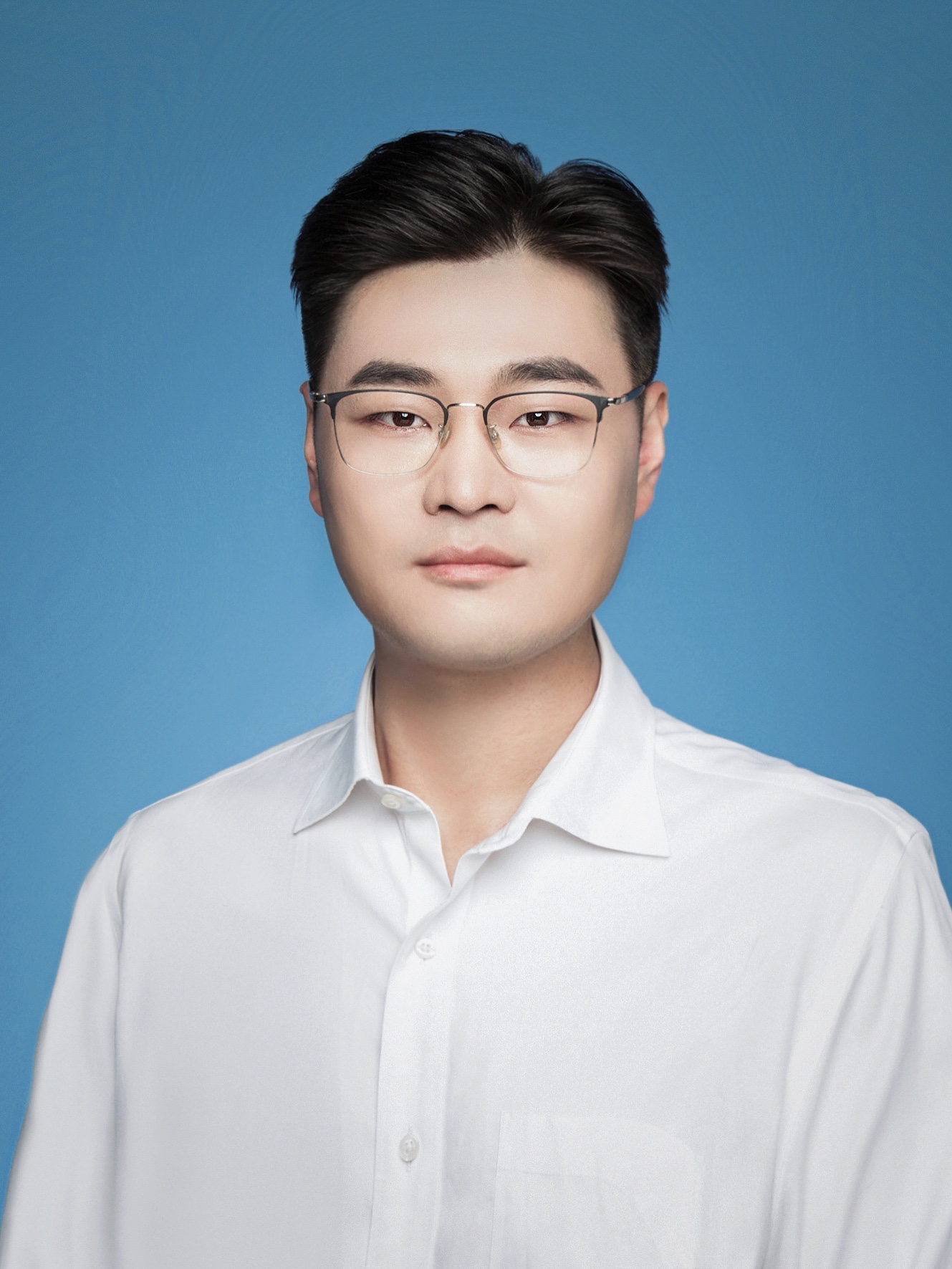}}]{Jie Wang}
  received the B.Sc. degree in electronic information science and technology from University of Science and Technology of China, Hefei, China, in 2005, and the Ph.D. degree in computational science from the Florida \mbox{State} University, Tallahassee, FL, in 2011. He is currently a professor in the Department of Electronic Engineering and Information Science at University of Science and Technology of China, Hefei, China. His research interests include reinforcement learning, knowledge graph, large-scale optimization, deep learning, etc.  He is a senior member of IEEE.
\end{IEEEbiography}

\begin{IEEEbiography}[{\includegraphics[width=1in,height=1.25in,clip,keepaspectratio]{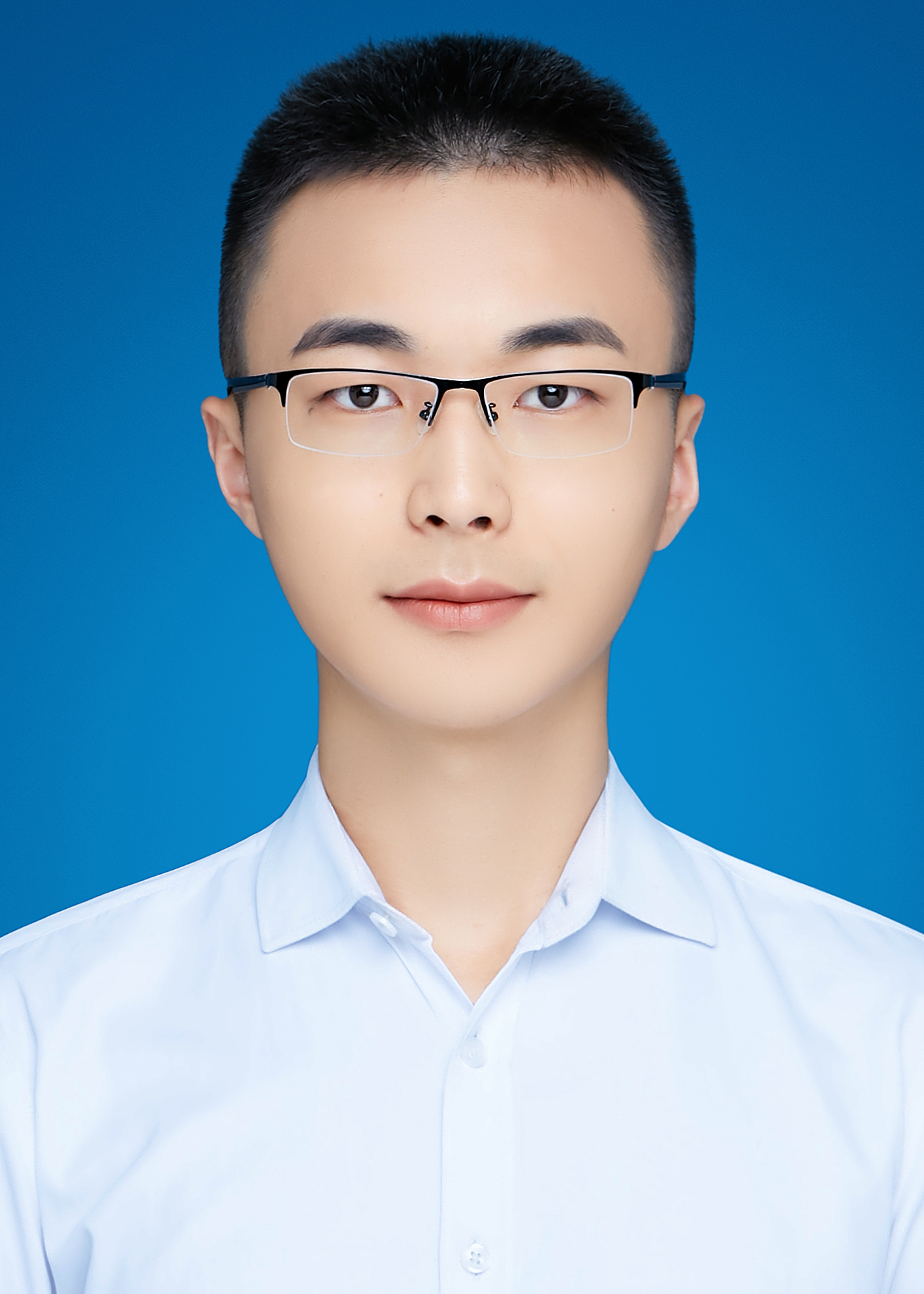}}]{Zhanqiu Zhang}
  received the B.Sc. degree in information and computing science from University of Science and Technology of China, Hefei, China, in 2018. He is currently a Ph.D. candidate in the Department of Electronic Engineering and Information Science at University of Science and Technology of China, Hefei, China. His research interests include graph representation learning and natural language processing.
\end{IEEEbiography}

\begin{IEEEbiography}[{\includegraphics[width=1in,height=1.25in,clip,keepaspectratio]{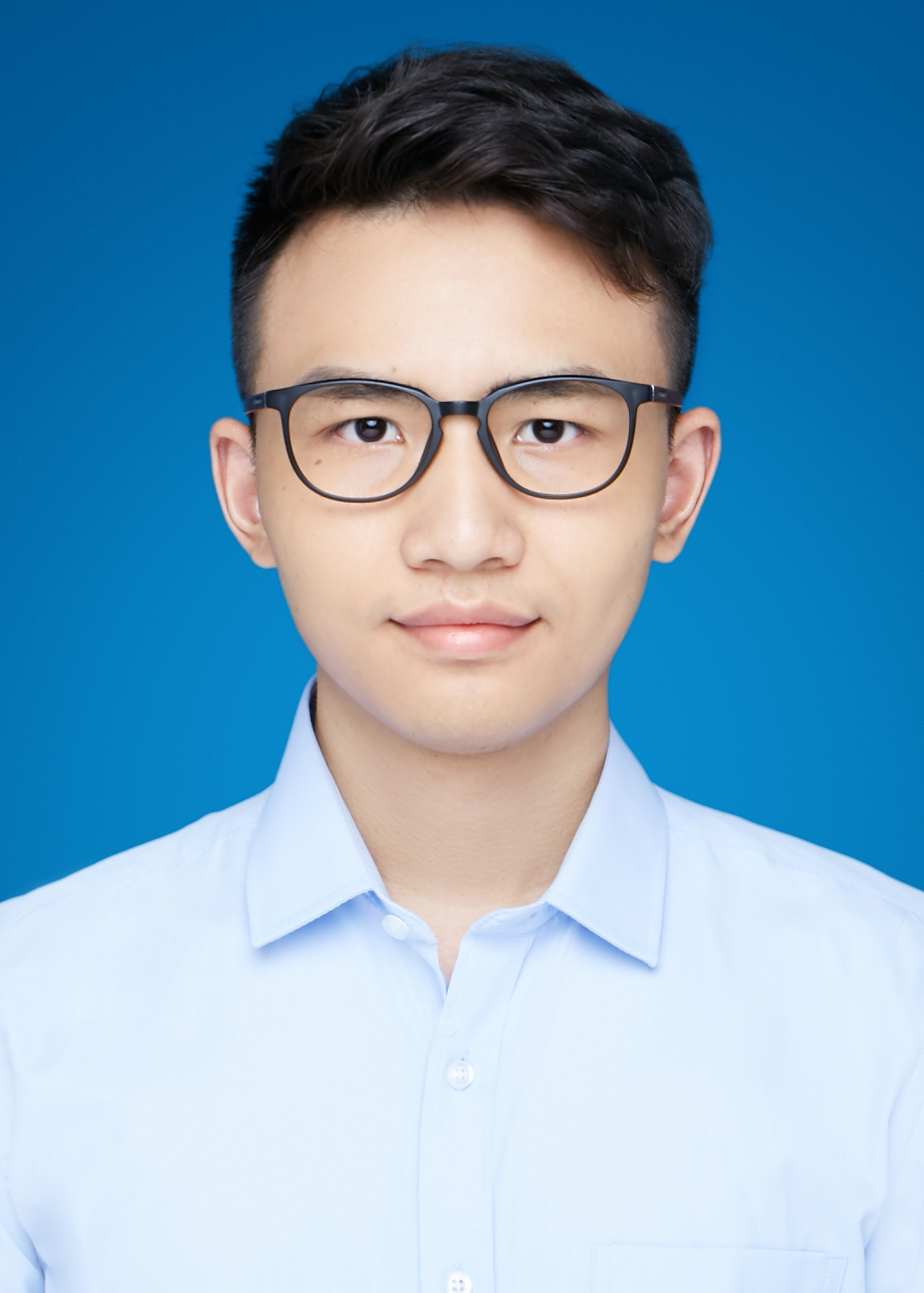}}]{Zhihao Shi}
  received the B.Sc. degree in Department of Electronic Engineering and Information Science, Hefei, China, in 2020. He is currently a graduate student in the Department of Electronic Engineering and Information Science at University of Science and Technology of China, Hefei, China. His research interests include graph representation learning and reinforcement learning.
\end{IEEEbiography}

\begin{IEEEbiography}[{\includegraphics[width=1in,height=1.25in,clip,keepaspectratio]{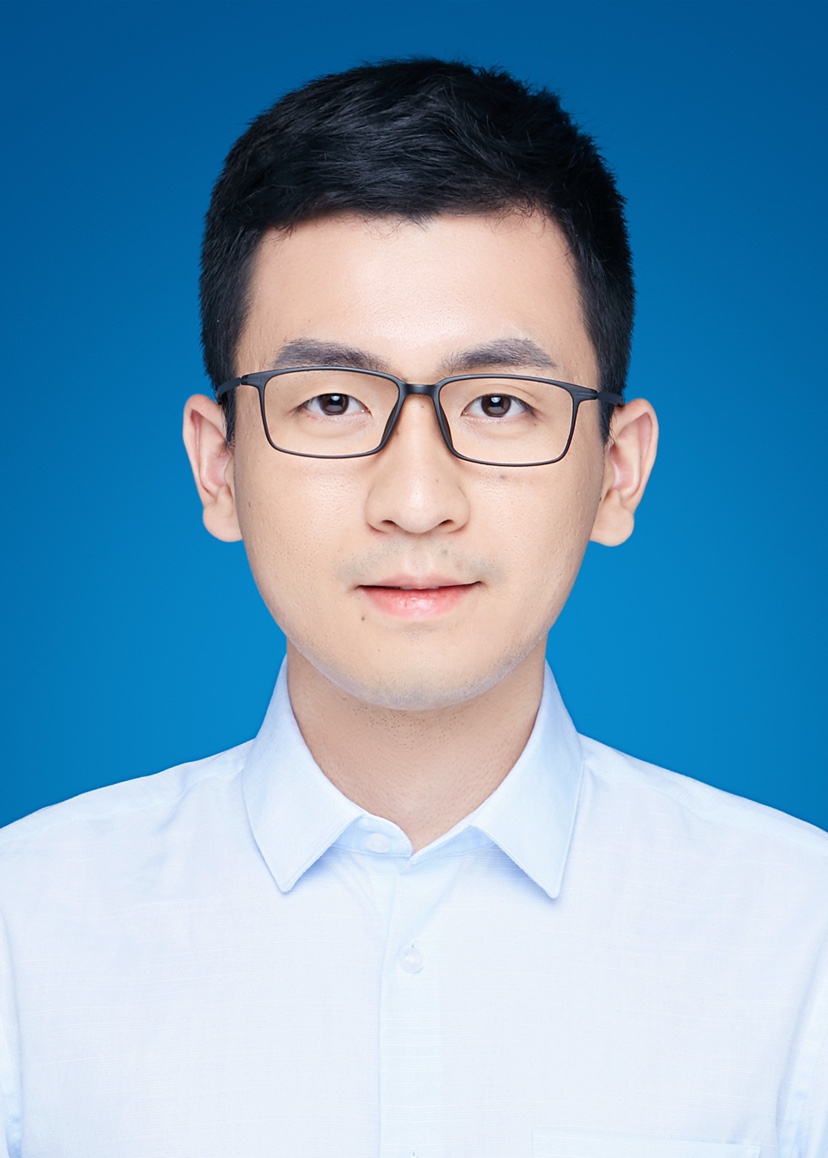}}]{Jianyu Cai}
  received the B.Sc. degree in software engineering from Southeast University, Nanjing, China, in 2019. He is currently a graduate student in the Department of Electronic Engineering and Information Science at University of Science and Technology of China, Hefei, China. His research interests include knowledge graph and natural language processing.
\end{IEEEbiography}

\begin{IEEEbiography}[{\includegraphics[width=1in,height=1.25in,clip,keepaspectratio]{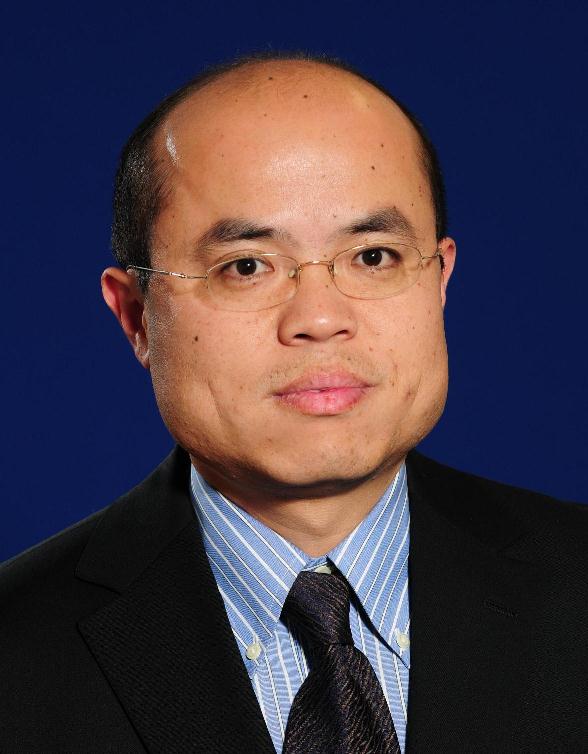}}]{Shuiwang Ji}
  received the PhD degree in computer science from Arizona State University, Tempe, Arizona, in 2010. Currently, he is an Associate Professor in the Department of Computer Science and Engineering, Texas A\&M University, College Station, Texas. His research interests include machine learning, deep learning, data mining, and computational biology. He received the National Science Foundation CAREER Award in 2014. He is currently an Associate Editor for IEEE Transactions on Pattern Analysis and Machine Intelligence, ACM Transactions on Knowledge Discovery from Data, and ACM Computing Surveys. He regularly serves as an Area Chair or equivalent roles for data mining and machine learning conferences, including AAAI, ICLR, ICML, IJCAI, KDD, and NeurIPS. He is a senior member of IEEE.
\end{IEEEbiography}

\begin{IEEEbiography}[{\includegraphics[width=1in,height=1.25in,clip,keepaspectratio]{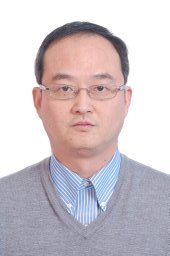}}]{Feng Wu}
received the B.S. degree in electrical engineering from Xidian University in 1992, and the M.S. and Ph.D. degrees in computer science from the Harbin Institute of Technology in 1996 and 1999, respectively. He is currently a Professor with the University of Science and Technology of China, where he is also the Dean of the School of Information Science and Technology. Before that, he was a Principal Researcher and the Research Manager with Microsoft Research Asia. His research interests include image and video compression, media communication, and media analysis and synthesis. He has authored or coauthored over 200 high quality articles (including several dozens of IEEE Transaction papers) and top conference papers on MOBICOM, SIGIR, CVPR, and ACM MM. He has 77 granted U.S. patents. His 15 techniques have been adopted into international video coding standards. As a coauthor, he received the Best Paper Award at 2009 IEEE Transactions on Circuits and Systems for Video Technology, PCM 2008, and SPIE VCIP 2007. He also received the Best Associate Editor Award from IEEE Circuits and Systems Society in 2012. He also serves as the TPC Chair for MMSP 2011, VCIP 2010, and PCM 2009, and the Special Sessions Chair for ICME 2010 and ISCAS 2013. He serves as an Associate Editor for IEEE Transactions on Circuits and Systems for Video Technology, IEEE Transactions ON Multimedia, and several other international journals.
\end{IEEEbiography}

	\appendices
\section{Proof of Theorem 1}
\begin{proof}
	We have that
	\begin{align*}
		&\sum_{j=1}^{|\mathcal{R}|}\left(\|\headmat\textbf{R}_j\|_F^2+\|\tailmat\|_F^2\right)\\
		=&\sum_{j=1}^{|\mathcal{R}|}\left(\sum_{i=1}^I\|\heademb_{i}\circ \textbf{r}_{j}\|_F^2+\sum_{d=1}^D\|\tailemb_{:d}\|_F^2\right)\\
		=&\sum_{j=1}^{|\mathcal{R}|}\left(\sum_{d=1}^D\|\tailemb_{:d}\|_F^2+\sum_{i=1}^I\sum_{d=1}^D\heademb_{id}^2\textbf{r}_{jd}^2\right)\\
		=&\sum_{j=1}^{|\mathcal{R}|}\sum_{d=1}^D\|\tailemb_{:d}\|_2^2+\sum_{d=1}^D\|\heademb_{:d}\|_2^2\|\textbf{r}_{:d}\|_2^2\\
		=&\sum_{d=1}^D(\|\heademb_{:d}\|_2^2\|\textbf{r}_{:d}\|_2^2+|\mathcal{R}|\|\tailemb_{:d}\|_2^2)\\
		\ge& \sum_{d=1}^D2\sqrt{|\mathcal{R}|}\|\heademb_{:d}\|_2\|\textbf{r}_{:d}\|_2\|\tailemb_{:d}\|_2\\
		=&2\sqrt{|\mathcal{R}|}\sum_{d=1}^D\|\heademb_{:d}\|_2\|\textbf{r}_{:d}\|_2\|\tailemb_{:d}\|_2.
	\end{align*}
	The equality holds if and only if $\|\heademb_{:d}\|_2^2\|\textbf{r}_{:d}\|_2^2=|\mathcal{R}|\|\tailemb_{:d}\|_2^2$, i.e., $\|\heademb_{:d}\|_2\|\textbf{r}_{:d}\|_2=\sqrt{|\mathcal{R}|}\|\tailemb_{:d}\|_2$.
	
	For all CP decomposition $\hat{\mathcal{X}}=\sum_{d=1}^D \heademb_{:d} \otimes \textbf{r}_{:d} \otimes \tailemb_{:d}$,  we can always let $\heademb'_{:d}=\heademb_{:d}$, $\textbf{r}'_{:d}=\sqrt{\frac{\|\tailemb_d\|_2\sqrt{|\mathcal{R}|}}{\|\heademb_{:d}\|_2\|\textbf{r}_{:d}\|_2}}\textbf{r}_{:d}$ and $\tailemb'_{:d}= \sqrt{\frac{\|\heademb_{:d}\|_2\|\textbf{r}_{:d}\|_2}{\|\tailemb_{:d}\|_2\sqrt{|\mathcal{R}|}}}\tailemb_{:d}$ such that
	$$\|\heademb'_{:d}\|_2\|\textbf{r}'_{:d}\|_2=\sqrt{|\mathcal{R}|}\|\tailemb'_{:d}\|_2,$$
	and meanwhile ensure that $\hat{\mathcal{X}}=\sum_{d=1}^D \heademb'_{:d} \otimes \textbf{r}'_{:d} \otimes \tailemb'_{:d}$. Therefore, we know that
	\begin{align*}
		\frac{1}{\sqrt{|\mathcal{R}|}}\sum_{j=1}^{|\mathcal{R}|}\|\hat{\mathcal{X}}_j\|_*&=\frac{1}{2\sqrt{|\mathcal{R}|}}\sum_{j=1}^{|\mathcal{R}|}\min_{\hat{\mathcal{X}}_j=\headmat\textbf{R}_j\tailmat^\top}(\|\headmat\textbf{R}_j\|_F^2+\|\tailmat\|_F^2)\\
		&\le\frac{1}{2\sqrt{|\mathcal{R}|}}\min_{\hat{\mathcal{X}}_j=\headmat\textbf{R}_j\tailmat^\top}\sum_{j=1}^{|\mathcal{R}|}(\|\headmat\textbf{R}_j\|_F^2+\|\tailmat\|_F^2)\\
		&= \min_{\hat{\mathcal{X}}=\sum_{d=1}^D \heademb_{:d}\otimes \textbf{r}_{:d}\otimes \tailemb_{:d}}\sum_{d=1}^D\|\heademb_{:d}\|_2\|\textbf{r}_{:d}\|_2\|\tailemb_{:d}\|_2\\
		&=\|\hat{\mathcal{X}}\|_*.
	\end{align*}
	In the same manner, we know that
	\begin{align*}
		\frac{1}{2\sqrt{|\mathcal{R}|}}\min_{\hat{\mathcal{X}}_j=\headmat\textbf{R}_j\tailmat^\top}\sum_{j=1}^{|\mathcal{R}|}(\|\tailmat\textbf{R}_j^\top\|_F^2+\|\headmat\|_F^2)=\|\hat{\mathcal{X}}\|_*.
	\end{align*}
	The minimizers satisfy $ \|\tailemb_{:d}\|_2\|\textbf{r}_{:d}\|_2=\sqrt{|\mathcal{R}|}\|\heademb_{:d}\|_2$.
	
	Therefore, the minimizers satisfy $
	\|\heademb_{:d}\|_2\|\textbf{r}_{:d}\|_2=\sqrt{|\mathcal{R}|}\|\tailemb_{:d}\|_2
	$
	and
	$
	\|\tailemb_{:d}\|_2\|\textbf{r}_{:d}\|_2=\sqrt{|\mathcal{R}|}\|\heademb_{:d}\|_2,
	$
	$\,\forall\,d\in\{1,2,\ldots, D\}$.
\end{proof}


\section{Proof of Theorem 2}
\begin{proof}
	First, we have
	\begin{align*}
		&\sum_{l=1}^{|\mathcal{T}|}\sum_{j=1}^{|\mathcal{R}|}\|\textbf{U}\textbf{R}_j\textbf{T}_l\|_F^2+\|\textbf{V}\|_F^2 \\
		=&\sum_{d=1}^r \|\textbf{u}_{:d}\|_2^2\|\textbf{r}_{:d}\|_2^2\|\textbf{t}_{:d}\|_2^2 + |\mathcal{R}||\mathcal{T}| \|\textbf{v}_{:d}\|_2^2,\\
		&\sum_{l=1}^{|\mathcal{T}|}\sum_{j=1}^{|\mathcal{R}|}\|\textbf{R}_j\textbf{T}_l\textbf{V}^\top\|_F^2+\|\textbf{U}\|_F^2 \\
		=& \sum_{d=1}^r \|\textbf{v}_{:d}\|_2^2\|\textbf{r}_{:d}\|_2^2\|\textbf{t}_{:d}\|_2^2 + |\mathcal{R}||\mathcal{T}| \|\textbf{u}_{:d}\|_2^2.
	\end{align*}
	Suppose that $\|\hat{\mathcal{X}}\|_*$ can be attained at $D$ and $(\textbf{u}_{:d}^*,\textbf{r}_{:d}^*,\textbf{v}_{:d}^*,\textbf{t}_{:d}^*)$ for $d=1,...,D$. Let
	\begin{align*}
		\textbf{u}_{:d}' &= \alpha_u \textbf{u}_{:d}^* = \left(\sqrt{\frac{\|\textbf{v}_{:d}^*\|_2\|\textbf{r}_{:d}^*\|_2\|\textbf{t}_{:d}^*\|_2}{\sqrt{|\mathcal{R}||\mathcal{T}|}\|\textbf{u}_{:d}^*\|_2}}\right) \textbf{u}_{:d}^*\\
		\textbf{v}_{:d}' &= \alpha_v \textbf{v}_{:d}^* = \left(\sqrt{\frac{\|\textbf{u}_{:d}^*\|_2\|\textbf{r}_{:d}^*\|_2\|\textbf{t}_{:d}^*\|_2}{\sqrt{|\mathcal{R}||\mathcal{T}|}\|\textbf{v}_{:d}^*\|_2}}\right) \textbf{v}_{:d}^*\\
		\textbf{r}_{:d}' &= \alpha_r \textbf{r}_{:d}^* = \left(\sqrt{\frac{\sqrt{|\mathcal{R}||\mathcal{T}|}}{\|\textbf{r}_{:d}^*\|_2\|\textbf{t}_{:d}^*\|_2}}\right) \textbf{r}_{:d}^*\\
		\textbf{t}_{:d}' &= \alpha_t \textbf{t}_{:d}^* = \left(\sqrt{\frac{\sqrt{|\mathcal{R}||\mathcal{T}|}}{\|\textbf{r}_{:d}^*\|_2\|\textbf{t}_{:d}^*\|_2}}\right) \textbf{t}_{:d}^*.
	\end{align*}
	Notice that
	\begin{align*}
		\hat{\mathcal{X}} &= \sum_{d=1}^D\textbf{u}_{:d}'\otimes \textbf{r}_{:d}' \otimes \textbf{v}_{:d}' \otimes \textbf{t}_{:d}',\\
		\|\hat{\mathcal{X}}\|_{*} &= \sum_{d=1}^D\|\textbf{u}_{:d}'\|_2\|\textbf{r}_{:d}'\|_2\|\textbf{v}_{:d}'\|_2\|\textbf{t}_{:d}'\|_2,
	\end{align*}
	i.e., $(\textbf{u}_{:d}^*,\textbf{r}_{:d}^*,\textbf{v}_{:d}^*,\textbf{t}_{:d}^*)$ for $d=1,...,D$ also attains the nuclear $p$-norm $\|\hat{\mathcal{X}}\|_{*}$.

	It follows from the inequality of arithmetic and geometric means that
	\begin{align*}
		&\frac{1}{4}\sum_{d=1}^r (\|\textbf{u}_{:d}\|_2^2\|\textbf{r}_{:d}\|_2^2\|\textbf{t}_{:d}\|_2^2 +|\mathcal{R}||\mathcal{T}| \|\textbf{v}_{:d}\|_2^2) \\
		+&(\|\textbf{v}_{:d}\|_2^2\|\textbf{r}_{:d}\|_2^2\|\textbf{t}_{:d}\|_2^2 +|\mathcal{R}||\mathcal{T}| \|\textbf{u}_{:d}\|_2^2)\\
		\geq& \sum_{d=1}^r \sqrt{|\mathcal{R}||\mathcal{T}|} \|\textbf{u}_{:d}\|_2\|\textbf{r}_{:d}\|_2\|\textbf{v}_{:d}\|_2\|\textbf{t}_{:d}\|_2 \\
		\geq& \sum_{d=1}^D \sqrt{|\mathcal{R}||\mathcal{T}|} \|\textbf{u}_{:d}'\|_2\|\textbf{r}_{:d}'\|_2\|\textbf{v}_{:d}'\|_2\|\textbf{t}_{:d}'\|_2\\
		=&\sqrt{|\mathcal{R}||\mathcal{T}|} \|\hat{\mathcal{X}}\|_*
	\end{align*}
	The equality holds if $(\textbf{u}_{:d},\textbf{r}_{:d},\textbf{v}_{:d},\textbf{t}_{:d}) = (\textbf{u}_{:d}',\textbf{r}_{:d}',\textbf{v}_{:d}',\textbf{t}_{:d}')$, i.e.,
	\begin{align*}
		&\frac{1}{4\sqrt{|\mathcal{R}||\mathcal{T}|}}\sum_{d=1}^D (\|\textbf{u}_{:d}'\|_2^2\|\textbf{r}_{:d}'\|_2^2\|\textbf{t}_{:d}'\|_2^2 + |\mathcal{R}||\mathcal{T}|\|\textbf{v}_{:d}'\|_2^2) \\
		+&(\|\textbf{v}_{:d}'\|_2^2\|\textbf{r}_{:d}'\|_2^2\|\textbf{t}_{:d}'\|_2^2 + |\mathcal{R}||\mathcal{T}|\|\textbf{u}_{:d}'\|_2^2)\\
		=&\sum_{d=1}^D \|\textbf{u}_{:d}'\|_2\|\textbf{r}_{:d}'\|_2\|\textbf{v}_{:d}'\|_2\|\textbf{t}_{:d}'\|_2=\|\hat{\mathcal{X}}\|_*.
	\end{align*}
	Therefore, the minimization of the left side is $\|\hat{\mathcal{X}}\|_*$.
\end{proof}

\section{Proof of Theorem 3}
\begin{lemma}
	The nuclear $p$-norm is also derived by
	\begin{align}
		\| \mathcal{A} \|_{p*} = \min&\left\{
		\frac{1}{D}\sum_{i=1}^r\sum_{k=1}^D\|\textbf{u}_{k,i}\|_p^D:\right.\nonumber\\
		&\left.\mathcal{A}=\sum_{i=1}^r\textbf{u}_{1,i}\otimes \cdots \otimes \textbf{u}_{D,i},r\in\mathbb{N}
		\right\},\label{equ:nunorm}
	\end{align}
	where $\textbf{u}_{k,i} \in \mathbb{R}^{n_k}$ for $k=1, ..., D$, $i=1,...,r$, and $\otimes$ denotes the outer product.
\end{lemma}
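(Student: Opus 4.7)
The plan is to reduce the identity to the classical AM--GM inequality applied termwise. Concretely, I would recall that for any $D$ nonnegative reals $a_1, \ldots, a_D$,
\begin{equation*}
    \frac{1}{D}\sum_{k=1}^{D} a_k^{D} \;\geq\; \prod_{k=1}^{D} a_k,
\end{equation*}
with equality iff $a_1 = \cdots = a_D$. Applying this with $a_k = \|\textbf{u}_{k,i}\|_p$, for any rank decomposition $\mathcal{A} = \sum_{i=1}^{r} \textbf{u}_{1,i}\otimes \cdots \otimes \textbf{u}_{D,i}$ one gets $\frac{1}{D}\sum_{k=1}^{D} \|\textbf{u}_{k,i}\|_p^{D} \geq \prod_{k=1}^{D} \|\textbf{u}_{k,i}\|_p$. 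Summing over $i$ and taking the infimum over all decompositions of $\mathcal{A}$ immediately yields that the right-hand side of \eqref{equ:nunorm} is at least $\|\mathcal{A}\|_{p*}$ (interpreting the latter as the standard product-of-norms definition).

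For the reverse inequality, the key observation is that a rank-one tensor is invariant under rebalancing of its factor norms: for any scalars $\alpha_{1,i}, \ldots, \alpha_{D,i}$ with $\prod_{k} \alpha_{k,i} = 1$, replacing each $\textbf{u}_{k,i}$ by $\alpha_{k,i}\textbf{u}_{k,i}$ leaves the outer product $\textbf{u}_{1,i}\otimes \cdots \otimes \textbf{u}_{D,i}$ unchanged. Given any decomposition (dropping any term containing a zero factor, since it contributes nothing to $\mathcal{A}$), I would choose the $\alpha_{k,i}$ so as to equalize the norms $\|\alpha_{k,i}\textbf{u}_{k,i}\|_p$ across $k$ while keeping the product of the $\alpha_{k,i}$ equal to one. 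Explicitly, setting $\alpha_{k,i} = \bigl(\prod_{j=1}^{D} \|\textbf{u}_{j,i}\|_p\bigr)^{1/D}/\|\textbf{u}_{k,i}\|_p$ satisfies both requirements and makes $\|\alpha_{k,i}\textbf{u}_{k,i}\|_p = \bigl(\prod_{j=1}^{D} \|\textbf{u}_{j,i}\|_p\bigr)^{1/D}$ independent of $k$. On this rebalanced decomposition AM--GM is saturated, so $\frac{1}{D}\sum_{k=1}^{D} \|\alpha_{k,i}\textbf{u}_{k,i}\|_p^{D} = \prod_{k=1}^{D}\|\textbf{u}_{k,i}\|_p$.

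Summing this identity over $i$, I would conclude that for every admissible decomposition there is a rebalanced decomposition of $\mathcal{A}$ whose value on the right-hand side of \eqref{equ:nunorm} equals $\sum_i \prod_k \|\textbf{u}_{k,i}\|_p$. Taking the infimum over decompositions on both sides gives the opposite inequality, and the two together prove the claimed identity. I do not anticipate any serious obstacle: the only care point is the handling of terms with a zero factor (which can be discarded before rebalancing) and the observation that the infimum is attained or approached by the same family of decompositions on both sides, so the whole argument is a clean AM--GM calculation combined with the scaling invariance of rank-one tensors.
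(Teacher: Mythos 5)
Your proposal is correct and follows essentially the same route as the paper's proof: the lower bound via the termwise AM--GM inequality $\frac{1}{D}\sum_k a_k^D \ge \prod_k a_k$, and the upper bound via the same norm-rebalancing scalars $\alpha_{k,i} = \bigl(\prod_j \|\textbf{u}_{j,i}\|_p\bigr)^{1/D}/\|\textbf{u}_{k,i}\|_p$ that equalize the factor norms while preserving each rank-one term. Your explicit handling of zero factors and the phrasing via infima (rather than assuming the minimum is attained) are minor tidiness improvements over the paper's version, but the argument is the same.
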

\begin{proof}
	Suppose that the right side of Equation \eqref{equ:nunorm} attains $\| \mathcal{A} \|_{p*}$ at $R$ and $(\textbf{u}_{k,i}^*)$ for $k=1, ..., D$, $i=1,...,R$. Let
	\begin{align*}
		\textbf{u}_{k,i}'&=\alpha_{k,i}\textbf{u}^*_{k,i},\\
		\alpha_{k,i}&=\frac{\sqrt[D]{\prod_{d=1}^D\|\textbf{u}_{d,i}^*\|_p}}{\|\textbf{u}^*_{k,i}\|}.
	\end{align*}
	Notice that
	\begin{align*}
		\mathcal{A} = \sum_{i=1}^R\textbf{u}_{1,i}'\otimes \cdots \otimes \textbf{u}_{D,i}',\\
		\sum_{i=1}^R\prod_{k=1}^D\|\textbf{u}_{k,i}'\|_p = \|\mathcal{A}\|_{p*},
	\end{align*}
	i.e., $(\textbf{u}_{k,i}')$ for $k=1, ..., D$, $i=1,...,R$ also attains the nuclear $p$-norm $\|\mathcal{A}\|_{p*}$.
	
	It follows from the inequality of arithmetic and geometric means that
	\begin{align*}
		\frac{1}{D}\sum_{i=1}^r\sum_{k=1}^D\|\textbf{u}_{k,i}\|_p^D &= \sum_{i=1}^r \left( \frac{1}{D} \sum_{k=1}^D \|\textbf{u}_{k,i}\|_p^D\right) \\
		&\geq \sum_{i=1}^r \left(\prod_{k=1}^D\|\textbf{u}_{k,i}\|_p\right)\\
		&\geq \sum_{i=1}^R \prod_{k=1}^D\|\textbf{u}_{k,i}'\|_p\\
		&=\|\mathcal{A}\|_{p*}
	\end{align*}
	where $\mathcal{A}=\sum_{i=1}^r\textbf{u}_{1,i}\otimes \cdots \otimes \textbf{u}_{D,i}$. As the left side attains the minimization $\|\mathcal{A}\|_{p*}$ at $R$ and $(\textbf{u}_{k,i}')$ for $k=1, ..., D$,  $i=1,...,R$, i.e.,
	\begin{align*}
		\frac{1}{D}\sum_{i=1}^r\sum_{k=1}^D\|\textbf{u}_{k,i}'\|_p^D = \sum_{i=1}^R \prod_{k=1}^D\|\textbf{u}_{k,i}'\|_p = \|\mathcal{A}\|_{p*}.
	\end{align*}
	Therefore, the minimization of the left side is $\|\mathcal{A}\|_{p*}$.
\end{proof}


\begin{table}[ht]
	\caption{Statistics of KGC benchmark datasets.}
	\label{table:datasets}
	\centering
	\begin{tabular}{l *{3}{c}}
		\toprule
		& WN18RR & FB15k-237 & YAGO3-10 \\
		\midrule
		\#Entity & 40,943 & 14,541 & 123,182\\
		\#Relation & 11 & 237 & 37 \\
		\#Train & 86,835 & 272,115 & 1,079,040 \\
		\#Valid &3,034 &17,535 &5,000\\
		\#Test &3,134 &20,466 &5,000\\
		\bottomrule
	\end{tabular}
	
\end{table}

\begin{table}[ht]
	\caption{Statistics of temporal KGC benchmark datasets. }
	\label{table:temperal_datasets}
	\centering
	\begin{tabular}{l *{3}{c}}
		\toprule
		& ICEWS14 & ICEWS05-15 & YAGO15k \\
		\midrule
		\#Entity & 6,869 & 10,094 & 15,403\\
		\#Relation & 230 & 251 & 51 \\
		\#Timestamp & 365 & 4017 & 170\\
		\#Train & 72,826 & 368,962 & 110,441 \\
		\#Valid & 8,941 & 46,275 & 13,815\\
		\#Test & 8,963 & 46,092 & 13,800\\
		\bottomrule
	\end{tabular}
\end{table}

\begin{table*}[ht]
	\centering
	\caption{The queries in T-SNE visualizations.}
	\label{table:query_name}
	\begin{tabular}{c l}
		\toprule
		\textbf{Index} & \textbf{Query} \\
		\midrule
		1 & (political drama, \text{/media\_common/netflix\_genre/titles}, ?)\\
		2 & (Academy Award for Best Original Song, \text{/award/award\_category/winners./award/award\_honor/ceremony},?)\\
		3 & (Germany, \text{/location/location/contains},?) \\
		4 & (doctoral degree ,  \text{/education/educational\_degree/people\_with\_this\_degree./education/education/major\_field\_of\_study},?)\\
		5 & (broccoli, \text{/food/food/nutrients./food/nutrition\_fact/nutrient},?)\\
		6 & (shooting sport, \text{/olympics/olympic\_sport/athletes./olympics/olympic\_athlete\_affiliation/country},?) \\
		7 & (synthpop, \text{/music/genre/artists}, ?)\\
		8 & (Italian American, \text{/people/ethnicity/people},?)\\
		9 & (organ, \text{/music/performance\_role/track\_performances./music/track\_contribution/role}, ?)\\
		10 & (funk, \text{/music/genre/artists}, ?)\\
		\bottomrule
	\end{tabular}
\end{table*}
\section{Dataset Statistics}
For static knowledge graph completion, we consider three public static knowledge graph datasets---WN18RR \cite{wn18rr}, FB15k-237 \cite{conve}, and YAGO3-10 \cite{yago3}, which have been divided into training, validation, and testing set in previous works. The statistics of these datasets are shown in Table \ref{table:datasets}.

For temporal knowledge graph completion, we consider three public temperal knowledge graph datasets---ICEWS14 \cite{icew}, ICEWS05-15 \cite{icew}, and YAGO15k \cite{yago15k}, which have been divided into training, validation, and testing set in previous works. ICEWS14 and ICEWS05-15 are extracted from the Integrated Conflict Early Warning System (ICEWS).
YAGO15k is a modification of FB15k. The statistics of these datasets are shown in Table \ref{table:temperal_datasets}.

\section{Evaluation Metrics}
The mean reciprocal rank is the average of the reciprocal ranks of results for a sample of queries Q:
\begin{align*}
	\text{MRR}=\frac{1}{|Q|}\sum_{i=1}^{|Q|}\frac{1}{\text{rank}_i}.
\end{align*}
The Hits@N is the ratio of ranks that no more than $N$:
\begin{align*}
	\text{Hits@N}=\frac{1}{|Q|}\sum_{i=1}^{|Q|}\mathbbm{1}_{x\le N}(\text{rank}_i),
\end{align*}
where $\mathbbm{1}_{x\le N}(\text{rank}_i)=1$ if $\text{rank}_i\le N$ or otherwise $\mathbbm{1}_{x\le N}(\text{rank}_i)=0$.

\section{The optimal value of p in Static KGC}
In distance-based models, the commonly used $p$ is either $1$ or $2$. When $p=2$, DURA takes the form as the one in Equation (7) in the main text. If $p=1$, we cannot expand the squared score function of the associated DB models as in Equation (4).
Thus, the induced regularizer takes the form of $\sum_{(\head_i,r_j,\tail_k)\in\mathcal{S}}\|\heademb_i\bar{\textbf{R}}_j-\tailemb_k\|_1+\|\tailemb_k\textbf{R}_j^\top-\heademb_i\|_1$. The above regularizer with $p=1$ (Reg\_p1) does not gives an upper bound on the tensor nuclear-2 norm as in Theorem 1. Table \ref{table:cmp_results} shows that, DURA significantly outperforms Reg\_p1 on WN18RR and FB15k-237. Therefore, we choose $p=2$.

\begin{table}[ht]
	\centering
	\caption{Comparison to Reg\_p1, where "R" denotes RESCAL and "C" denotes ComplEx.  }    \label{table:cmp_results}
	\begin{tabular}{l  c c c  c c c }
		\toprule
		&\multicolumn{3}{c}{\textbf{WN18RR}}&  \multicolumn{3}{c}{\textbf{FB15k-237}} \\
		\cmidrule(lr){2-4}
		\cmidrule(lr){5-7}
		& MRR & H@1 & H@10 & MRR & H@1 & H@10 \\
		\midrule
		R-Reg\_p1   &.281 &.220 &.394 &.310 &.228 &.338\\
		C-Reg\_p1   &.409 &.393 &.439 &.316 &.229 &.487\\
		\midrule
		R-DURA    &\textbf{.498} &\textbf{.455} &.577 &.368 &\textbf{.276} &.550 \\
		C-DURA &.491 &.449 &.571 &\textbf{.371} &\textbf{.276} &\textbf{.560} \\
		\bottomrule
	\end{tabular}
\end{table}

\section{The queries in T-SNE visualization}
In Table \ref{table:query_name}, we list the ten queries used in the T-SNE visualization (Section 6.5 in the main text). Note that a query is represented as $(u, r, ?)$, where $u$ denotes the head entity and $r$ denotes the relation.
\end{document}